\documentclass[11pt]{article}

\usepackage[utf8]{inputenc}
\usepackage[T1]{fontenc}
\usepackage{amsmath,amssymb,amsthm,mathtools,bm}
\usepackage{graphicx}
\usepackage{float}
\usepackage{wrapfig}
\usepackage{algorithm}
\usepackage{algpseudocode}
\usepackage{enumitem}
\usepackage{xspace}
\usepackage{array}
\usepackage{booktabs}
\usepackage{multirow}
\usepackage{makecell}
\usepackage{colortbl}
\usepackage{tikz}
\usepackage{caption}
\usepackage{subcaption}
\usepackage{microtype}
\usepackage[dvipsnames]{xcolor}
\usepackage{twemojis}

\def\E{\mathbb{E}}

\def\sg{\operatorname{sg}}
\def\epsilonv{{\bm{\epsilon}}}
\def\vV{{\bm{V}}}
\def\vc{{\bm{c}}}
\def\vu{{\bm{u}}}
\def\vv{{\bm{v}}}
\def\vx{{\bm{x}}}
\def\rmI{{\mathbf{I}}}
\def\gN{{\mathcal{N}}}

\definecolor{hunyuanblue}{HTML}{1E4A8F}
\definecolor{colGray}{gray}{0.5}
\definecolor{colSky}{HTML}{A7CAEA}
\definecolor{mygreen}{HTML}{2E8B57}

\newcommand{\nmfmt}[1]{\texttt{#1}}
\newcommand{\sdmodel}{\nmfmt{SD3.5-M}\xspace}
\newcommand{\wanmodel}{\nmfmt{Wan2.1}\xspace}
\newcommand{\clipscore}{\nmfmt{CLIPScore}\xspace}
\newcommand{\pickscore}{\nmfmt{PickScore}\xspace}
\newcommand{\hpstwo}{\nmfmt{HPSv2}\xspace}
\newcommand{\hpsthree}{\nmfmt{HPSv3}\xspace}
\newcommand{\imagereward}{\nmfmt{ImageReward}\xspace}
\newcommand{\aesthetic}{\nmfmt{Aesthetic Score}\xspace}

\newcommand{\genevaltwo}{\nmfmt{GenEval2}\xspace}
\newcommand{\ocrscore}{\nmfmt{OCR}\xspace}
\newcommand{\videoalign}{\nmfmt{VideoAlign}\xspace}
\newcommand{\vbench}{\nmfmt{VBench}\xspace}
\newcommand{\longcat}{\nmfmt{LongCat-Video}\xspace}
\newcommand{\drawbench}{\nmfmt{DrawBench}\xspace}

\newcommand{\best}[1]{\textbf{#1}}
\newcommand{\second}[1]{\underline{#1}}
\newcommand{\defstep}[1]{{\itshape\textcolor{black!60}{\,(#1\,steps)}}}

\usepackage{hunyuan}

\usepackage[capitalize,noabbrev]{cleveref}
\usepackage[most]{tcolorbox}
\usepackage{fancyhdr}
\usepackage{xurl}
\usepackage{titletoc}

\crefname{appendix}{Appendix}{Appendices}
\Crefname{appendix}{Appendix}{Appendices}
\numberwithin{theorem}{section}

\makeatletter
\long\def\@makecaption#1#2{%
  \vskip 10pt
  \setbox\@tempboxa\hbox{#1: #2}%
  \ifdim \wd\@tempboxa >\hsize
    \noindent #1: #2\par
  \else
    \hbox to\hsize{\hfil\box\@tempboxa\hfil}%
  \fi}
\makeatother

\hypersetup{
  colorlinks=true,
  linkcolor=hunyuanblue,
  citecolor=hunyuanblue,
  urlcolor=hunyuanblue,
}

\makeatletter
\def\section{\@startsiction{section}{1}{\z@}{-0.24in}{0.10in}
             {\large\bf\raggedright\color{hunyuanblue}}}
\def\subsection{\@startsection{subsection}{2}{\z@}{-0.20in}{0.08in}
                {\normalsize\bf\raggedright\color{hunyuanblue}}}
\makeatother

\fancypagestyle{plain}{%
  \fancyhf{}%
  \fancyfoot[C]{\thepage}%
}
\fancypagestyle{firststyle}{%
  \fancyhf{}%
  \fancyhead[L]{%
    \IfFileExists{figs/hunyuan-logo.png}
      {\raisebox{-18.5mm}[0pt][0pt]{\includegraphics[height=12mm]{figs/hunyuan-logo.png}}}
      {\raisebox{-14mm}[0pt][0pt]{\textcolor{hunyuanblue}{\sffamily\bfseries HUNYUAN}}}}%
  \fancyfoot[C]{\thepage}%
}

\makeatletter
\newcommand{\appendixtitle}{%
  \clearpage
  \thispagestyle{plain}%
  \vbox{%
    \hsize\textwidth
    \linewidth\hsize
    \vskip 0.1in
    {\color{hunyuanblue}\hrule height 0.6pt}%
    \vskip 4mm
    \centering
    {\LARGE\bfseries Appendix\par}%
    \vskip 4mm
    {\color{hunyuanblue}\hrule height 0.6pt}%
    \vskip 0.3in\@minus0.1in
  }%
}
\makeatother

\definecolor{abstractbg}{HTML}{F0F7FC}
\tcbset{
  greenhighlight/.style={
    colback=mygreen!5,
    colframe=mygreen!40,
    boxrule=0.8pt,
    arc=1.5mm,
    left=1mm,
    right=1mm,
    top=0.3mm,
    bottom=0.3mm,
    enhanced,
    rounded corners,
  }
}

\begin{document}

\thispagestyle{firststyle}
\vspace*{0.25cm}
{\color{hunyuanblue}\hrule height 0.6pt}
\vskip 6mm
\begin{center}
{\LARGE\bfseries MeanFlowNFT: Bringing Forward-Process RL to
Average-Velocity Generators\par}
\end{center}
\vskip 3mm
{\color{hunyuanblue}\hrule height 0.6pt}
\vskip 6mm

\begin{center}
\textbf{Yushi Huang}$^{1,2,*}$ \quad
\textbf{Xiangxin Zhou}$^{1,*,\ddagger}$ \quad
\textbf{Jun Zhang}$^{2}$ \quad
\textbf{Liefeng Bo}$^{1}$ \quad
\textbf{Tianyu Pang}$^{1,\ddagger}$
\\[8pt]
$^1$Tencent Hunyuan \quad
$^2$The Hong Kong University of Science and Technology
\\[6pt]
{\small
$^*$Equal contribution \quad
$^\ddagger$Corresponding authors}
\\[8pt]
{\small
\href{https://harahan.github.io/meanflownft-project-page/}
  {\raisebox{-0.12em}{\twemoji[height=0.95em]{globe with meridians}}\ Project Page}
\quad
\href{https://github.com/Harahan/MeanFlowNFT}
  {\raisebox{-0.12em}{\twemoji[height=0.95em]{laptop}}\ GitHub}
\quad
\href{https://huggingface.co/Harahan/MeanFlowNFT}
  {\raisebox{-0.12em}{\twemoji[height=0.95em]{smiling face with open hands}}\ Hugging Face}}
\end{center}
\vskip 6mm

\begin{tcolorbox}[
  colframe=abstractbg,
  colback=abstractbg,
  boxrule=0pt,
  arc=2mm,
  enhanced,
  top=12pt,
  bottom=12pt,
  left=15pt,
  right=15pt,
  width=\textwidth,
]
\textbf{Abstract.}\quad
MeanFlow generators achieve fast few-step sampling by predicting average
velocities over time intervals, making them attractive for efficient generation.
Reinforcement learning (RL) has become a powerful way to align diffusion and
flow models with human preferences and task-specific objectives.
In particular, DiffusionNFT offers an efficient forward-process RL framework
that does not require reverse-process trajectories or likelihood estimation.
However, applying such RL methods to MeanFlow remains underexplored.
DiffusionNFT optimizes instantaneous velocities, whereas MeanFlow samples with
average velocities.
To bridge this gap, we introduce \textbf{MeanFlowNFT}.
Inspired by the MeanFlow identity, which bridges average and instantaneous
velocities, we construct an induced instantaneous-velocity predictor.
We apply the DiffusionNFT objective to this predictor, making reward
optimization well-defined for MeanFlow.
Sampling remains based on the average velocity, preserving MeanFlow's fast
few-step generation.
We further prove that MeanFlowNFT inherits DiffusionNFT's strict
policy-improvement guarantee.
Experiments on image and video generation show that MeanFlowNFT consistently
improves baselines.
Moreover, it outperforms prior state-of-the-art RL-tuned few-step generators on
most metrics ($6$ of $8$ on \sdmodel), and can even surpass multi-step RL-tuned
diffusion while using only a few sampling steps.
For instance, on \wanmodel, $4$-step MeanFlowNFT reaches a VBench score of
$84.33$, surpassing $50$-step \longcat RL ($82.57$).

\vskip 8pt
\textbf{Date:} July 2026
\end{tcolorbox}

\begin{figure}[!ht]
  \centering
  \setlength{\fboxsep}{0pt}
  \def\bimg#1{{\color{black!35}\fbox{\includegraphics[width=\dimexpr\linewidth-2\fboxrule\relax]{#1}}}}%
  \captionsetup[subfigure]{skip=1pt,belowskip=0.5pt}%
  \begin{minipage}[t]{0.510\textwidth}
  \centering
  {\fontsize{6}{6.4}\selectfont\itshape
  ``A sleek car keeps driving down a neon-lit cyberpunk street, its glowing red
  taillights trailing as the city lights streak past.''\par}\vspace{-1pt}
  \begin{subfigure}{\linewidth}\centering
    \bimg{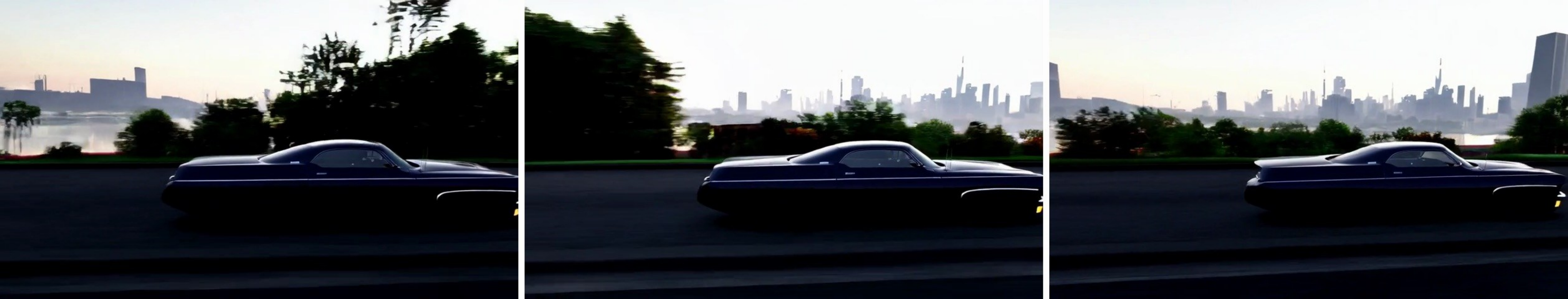}
    \caption{\wanmodel~1.3B (50 steps)}
  \end{subfigure}
  \begin{subfigure}{\linewidth}\centering
    \bimg{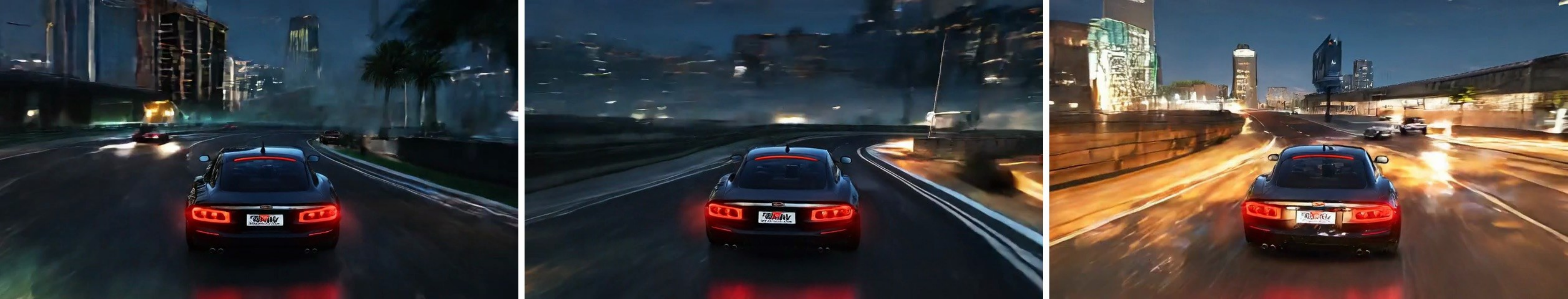}
    \caption{\longcat RL (50 steps)}
  \end{subfigure}
  \begin{subfigure}{\linewidth}\centering
    \bimg{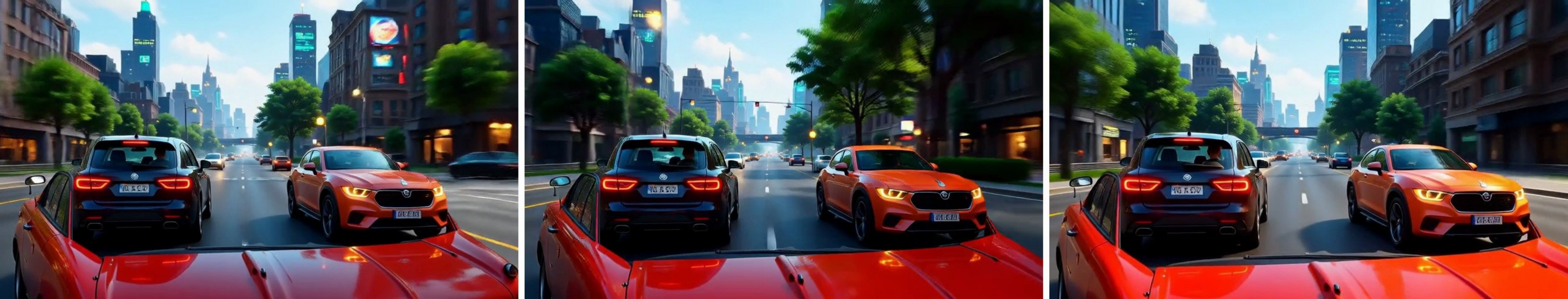}
    \caption{AnyFlow (4 steps)}
  \end{subfigure}
  \begin{subfigure}{\linewidth}\centering
    \bimg{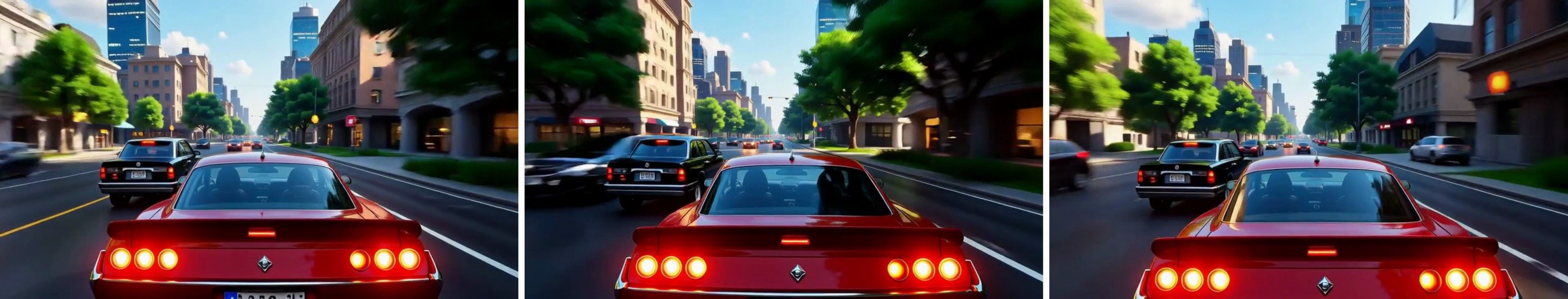}
    \caption{\textbf{MeanFlowNFT (4 steps)}}
  \end{subfigure}
  \vspace{-0.22in}
  \caption{Qualitative comparisons on \wanmodel~1.3B. Each row shows $3$ frames
  sampled uniformly over time.}
  \label{fig:teaser_video}
  \end{minipage}%
  \hspace{0.008\textwidth}%
  \begin{minipage}[t]{0.452\textwidth}
  \centering
  \captionsetup[subfigure]{font=tiny}%
  {\fontsize{6}{6.4}\selectfont\itshape
  ``a frosted donut with a bite out of it''\par}\vspace{-3.5pt}
  {\setlength{\parskip}{0pt}%
  \noindent
  \begin{subfigure}[t]{0.49\linewidth}\centering
    \bimg{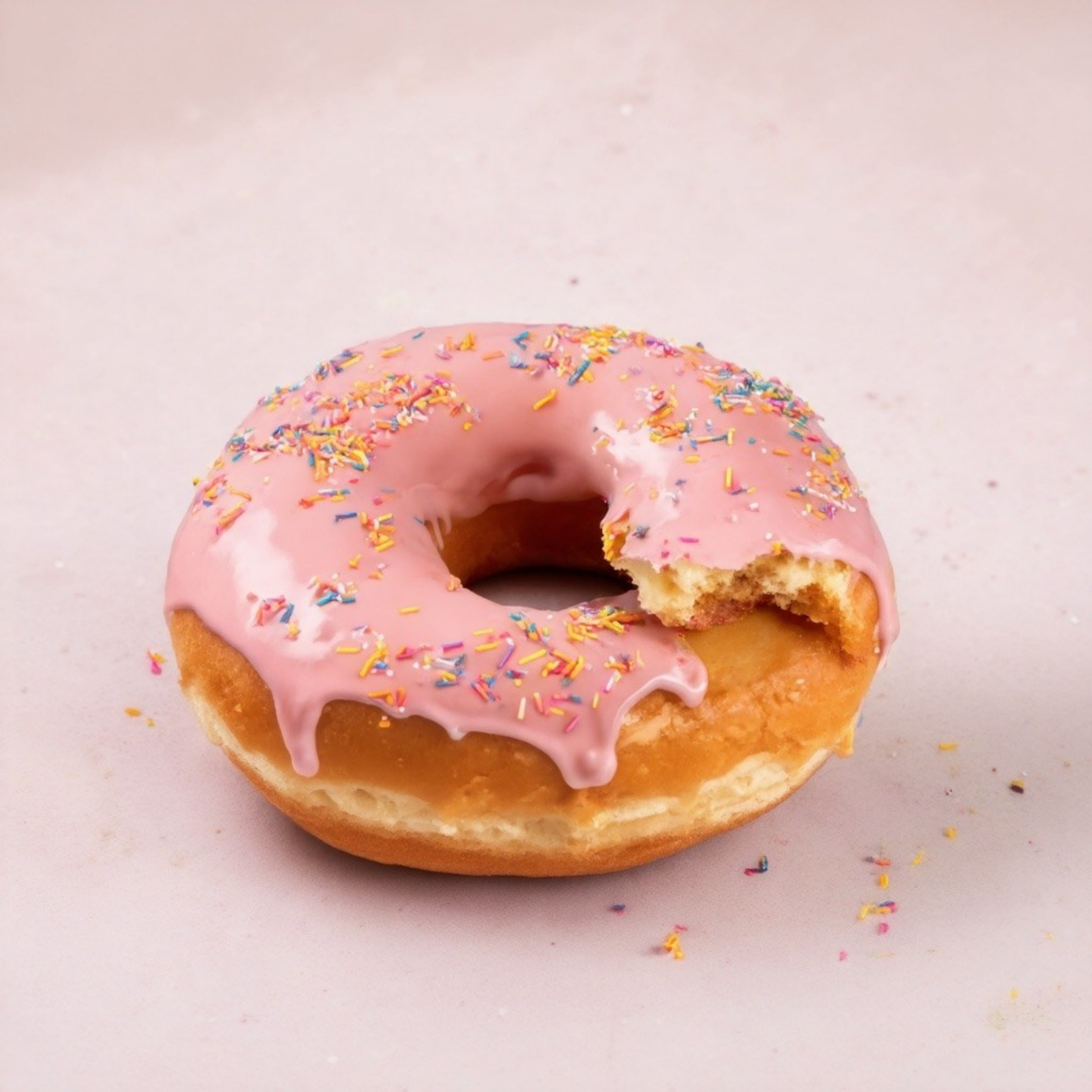}
    \caption{Flow-GRPO (40 steps)}
  \end{subfigure}\hfill
  \begin{subfigure}[t]{0.49\linewidth}\centering
    \bimg{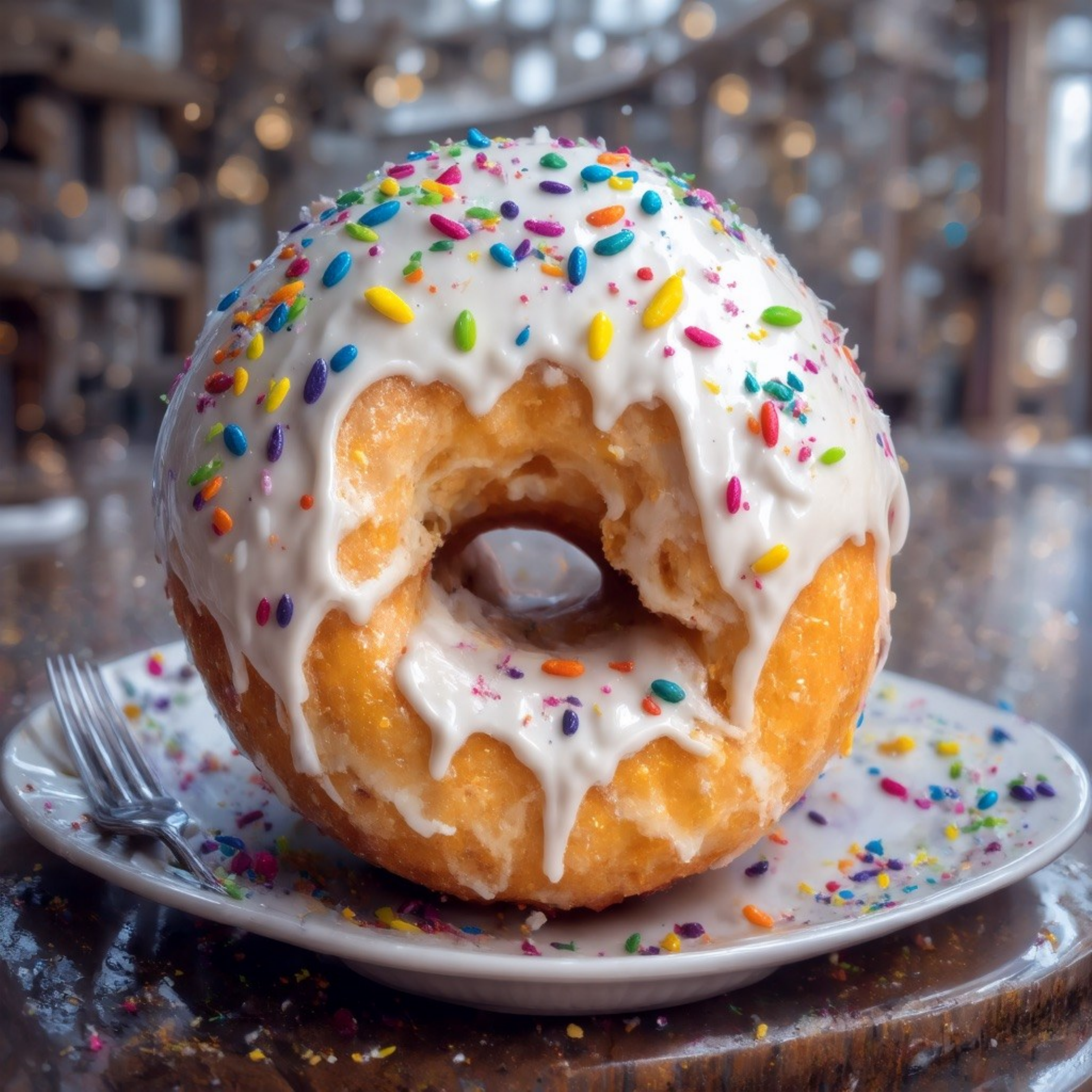}
    \caption{DiffusionNFT (40 steps)}
  \end{subfigure}
  \par\vspace{1pt}
  \noindent
  \begin{subfigure}[t]{0.49\linewidth}\centering
    \bimg{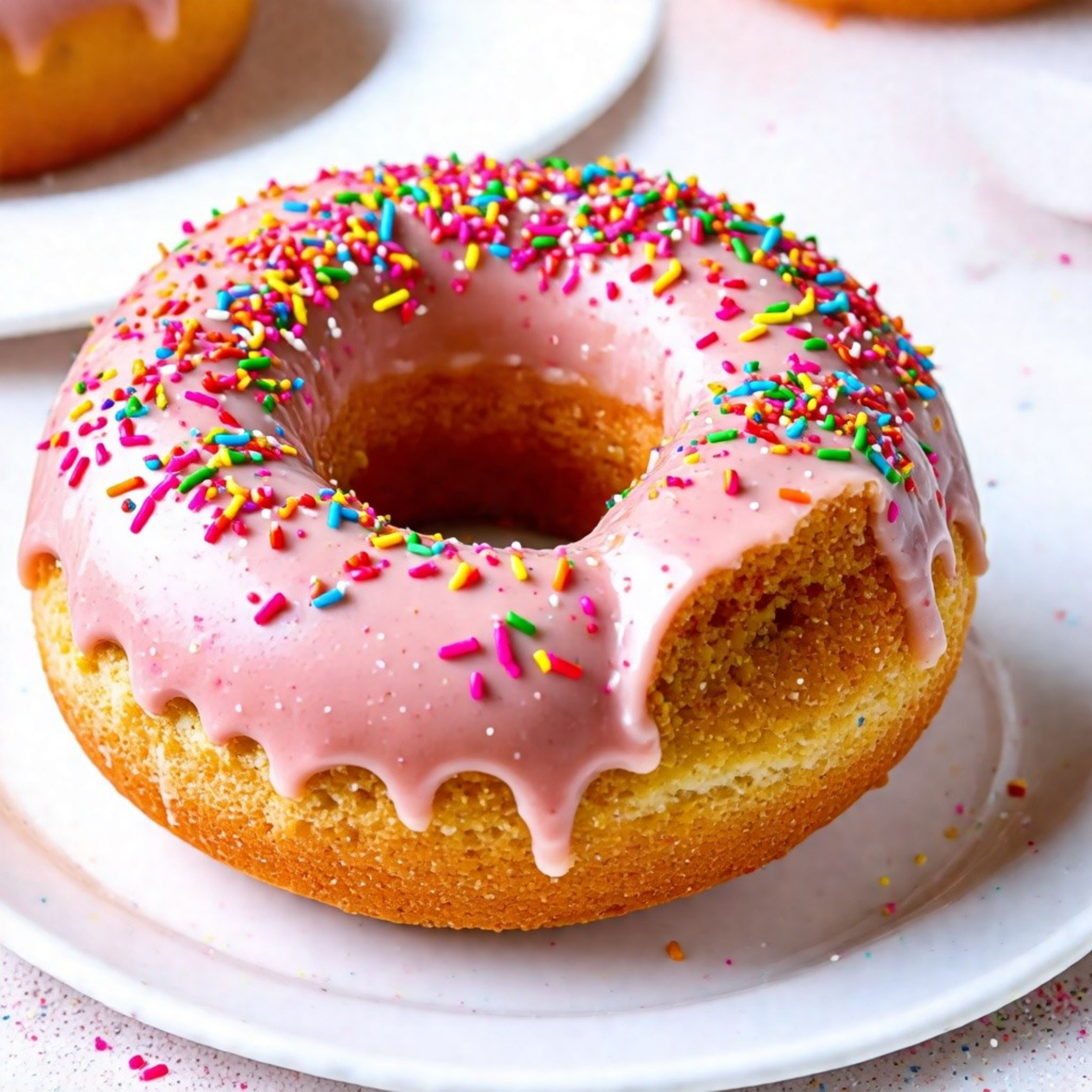}
    \caption{AnyFlow (4 steps)}
  \end{subfigure}\hfill
  \begin{subfigure}[t]{0.49\linewidth}\centering
    \bimg{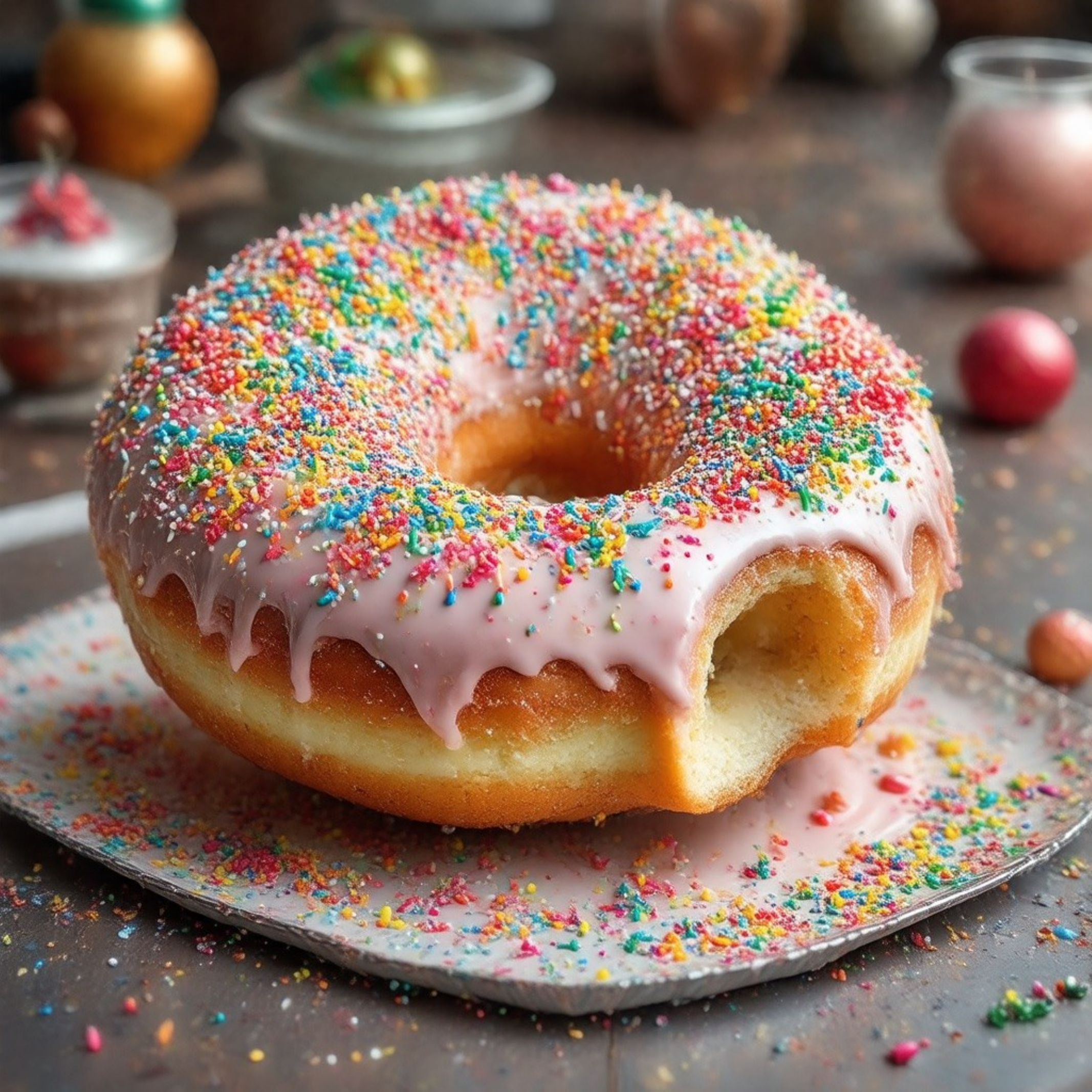}
    \caption{\textbf{MeanFlowNFT (4 steps)}}
  \end{subfigure}
  }
  \vspace{-0.008in}
  \caption{Qualitative comparisons on \sdmodel. More visual results are provided
  in \cref{app:qualitative_results}.}
  \label{fig:teaser_image}
  \end{minipage}
\end{figure}

\section{Introduction}
\label{sec:intro}
Diffusion~\citep{ddpm,ddim,score-base-diff} and flow~\citep{lipman2022flow,liu2022flow} models are the
dominant paradigm for high-quality image and video generation. However, they synthesize a sample by
integrating an instantaneous velocity over many sequential steps~\citep{sdxl,seedream4,lumina2}, which
makes generation slow. MeanFlow~\citep{meanflow} removes this bottleneck by predicting the
\emph{average velocity} over a time interval rather than the instantaneous velocity, so that one or a
few steps suffice. This few-step efficiency makes MeanFlow an increasingly practical deployment target.

Reinforcement learning (RL)~\citep{ddpo,fan2023dpok,diffusiondpo} from a scalar reward has become a
standard tool for aligning such generative models with downstream objectives. For diffusion and flow
models, the dominant recipe~\citep{flowgrpo,xue2025dancegrpo} discretizes the reverse generative
process and applies GRPO-style policy gradients~\citep{guo2025deepseek}, which require a stochastic
policy and per-step likelihood estimation. DiffusionNFT~\citep{diffusionnft} instead performs online
RL \emph{directly on the forward process}: it contrasts ``positive'' and ``negative'' generations
(split by rewards) to form an implicit policy-improvement direction and folds rewards into a
flow-matching objective. This makes training likelihood-free, solver-agnostic, and far more efficient
than GRPO-based methods. However, RL for MeanFlow has received little attention. A key obstacle is that
forward-process RL such as DiffusionNFT optimizes the \emph{instantaneous} velocity, whereas MeanFlow
samples with the \emph{average} velocity. This raises a natural question:

\begin{tcolorbox}[greenhighlight, before skip=8pt, after skip=8pt, top=0.3mm, bottom=0.3mm, colback=mygreen!5, colframe=mygreen!40]
\centering
\emph{Can we finetune a pretrained MeanFlow model with efficient forward-process RL to enable superior few-step generation?}
\end{tcolorbox}

In this paper, we answer this question affirmatively with \textbf{MeanFlowNFT}, the first
forward-process RL framework for MeanFlow generators. Our starting point is the MeanFlow identity
(\cref{eq:identity}), an intrinsic link between the average and instantaneous velocity. Through it, a
MeanFlow network yields an \textit{induced instantaneous-velocity predictor}, to which we apply the
DiffusionNFT-style objective during RL training. This brings two benefits. As in DiffusionNFT, training is
likelihood-free and stays on the forward noising process. It also decouples optimization from
sampling, so inference still uses MeanFlow's efficient few-step sampler. The objective never acts on
the average velocity explicitly. Still, we prove in an idealized setting that the optimal induced
predictor recovers DiffusionNFT's improved policy and carries this improvement over to the
average-velocity network. On the practical side, we approximate the total-derivative
terms in the induced predictors by finite differences.
The trainable and reference predictors in our algorithm use the same finite-difference estimate, computed along the
forward-process conditional velocity. As a result,
MeanFlowNFT delivers strong generation quality (\cref{fig:teaser_image,fig:teaser_video}) while
preserving MeanFlow's few-step efficiency.

Our contributions are summarized as follows:
\begin{itemize}[leftmargin=*,nosep]
\item We propose \textbf{MeanFlowNFT}, the first forward-process RL framework for
MeanFlow generators: it applies the DiffusionNFT-style objective to an induced
instantaneous-velocity predictor derived from the average-velocity network. This keeps training
likelihood-free and leaves the efficient few-step sampler unchanged.
\item We provide theoretical guarantees: in an idealized setting, the optimum of the induced
predictor matches the DiffusionNFT improved-policy target, and this policy improvement provably
carries over to MeanFlow's average-velocity generator.
\item Comprehensive experiments across text-to-image and text-to-video generation show that MeanFlowNFT
consistently improves MeanFlow baselines and outperforms prior few-step methods. It even surpasses
multi-step RL with far fewer sampling steps while scaling gracefully at test time.
\end{itemize}

\section{Preliminaries}
\label{sec:prelims}

Throughout, $\vx_t$ denotes a noised sample at time $t$ and $\vc$ a conditioning prompt. We write $\vu$ for an average velocity~\citep{meanflow} and $\vv$ for an instantaneous velocity field.

\noindent\textbf{Notation.} \textit{Unless stated otherwise, all velocity fields and predictors are conditioned on $\vc$, but we omit $\vc$ from their arguments for notational brevity.}

\subsection{Flow Matching}
\label{sec:fm}

Flow Matching~\citep{lipman2022flow,liu2022flow} learns a probability-flow ODE
that transports Gaussian noise to the data distribution $\pi(\vx_0\mid\vc)$.
Given a schedule $(\alpha_t,\sigma_t)$ and writing
$\dot f_t\coloneq \mathrm{d}f_t/\mathrm{d}t$, the forward process is $\vx_t = \alpha_t\vx_0 + \sigma_t\epsilonv$, with
$\vx_0\sim\pi(\cdot\mid\vc)$ and $\epsilonv\sim\gN(\bm{0},\rmI)$.
Differentiating along a fixed pair $(\vx_0,\epsilonv)$ gives the
\emph{conditional velocity} $\vv_t \triangleq \dot\alpha_t\vx_0 + \dot\sigma_t\epsilonv$.
For rectified flow~\citep{liu2022flow}, $\alpha_t=1-t$ and $\sigma_t=t$, so
$\vv_t=\epsilonv-\vx_0$. Flow Matching trains a velocity predictor
$\vv_\theta(\vx_t,t)$ by minimizing
$\mathcal{L}_{\mathrm{FM}}(\theta) = \E_{t,\,\vc,\,\vx_0\sim\pi(\cdot\mid\vc),\,\epsilonv}\!\left[ w(t)\,\|\vv_\theta(\vx_t,t)-\vv_t\|_2^2 \right]$.
Here $w(t)$ is a time-dependent loss weighting.

Although $\vv_t$ is random given $\vx_t$, the optimal predictor under this
squared loss is the deterministic \emph{marginal (instantaneous) velocity}
\begin{equation}
\label{eq:marg_vel}
\vv(\vx_t,t) \,\triangleq\, \E[\vv_t\mid \vx_t,\vc,t],
\end{equation}
the conditional velocity averaged over the posterior of $(\vx_0,\epsilonv)$ given
$\vx_t$~\citep{lipman2022flow}. At inference one integrates
$\mathrm{d}\vx_t/\mathrm{d}t=\vv(\vx_t,t)$ from noise to data using
$\vv_\theta\approx\vv$, which typically requires many network evaluations.

\subsection{MeanFlow}
\label{sec:mf-prelim}

Flow Matching sampling is costly because integrating the generative ODE requires
many instantaneous-velocity evaluations. MeanFlow~\citep{meanflow} instead learns
a finite-interval flow map: predicting the average velocity from time $t$ to an
earlier time $s$ lets sampling take large jumps rather than many small steps. It considers the \emph{average velocity} over an
interval $[s,t]$ along the ODE induced by $\vv$ (\cref{eq:marg_vel}),
\begin{equation}
\label{eq:avg_vel}
\vu(\vx_t,s,t)
\,\triangleq\,
\frac{1}{t-s}\int_s^t \vv(\vx_\tau,\tau)\,\mathrm{d}\tau,
\qquad
\frac{\mathrm{d}\vx_\tau}{\mathrm{d}\tau}=\vv(\vx_\tau,\tau),
\end{equation}
where the integral runs along the ODE trajectory from $\vx_t$ to time $s$, so
$\vu$ depends on both endpoints ($t,s$) and recovers the instantaneous velocity as
$s\to t$, $\lim_{s\to t}\vu(\vx_t,s,t)=\vv(\vx_t,t)$.

\noindent\textbf{MeanFlow identity and training.}
Directly regressing $\vu$ is impractical because \cref{eq:avg_vel}
contains a path integral. Differentiating the displacement form
$(t-s)\,\vu=\int_s^t \vv\,\mathrm{d}\tau$ with respect to $t$ yields
the exact \emph{MeanFlow identity}
\begin{equation}
\label{eq:identity}
\vv(\vx_t,t)
\,=\,
\vu(\vx_t,s,t)
\,+\,
(t-s)\,\frac{\mathrm{d}}{\mathrm{d}t}\vu(\vx_t,s,t),
\end{equation}
with total derivative
$\tfrac{\mathrm{d}}{\mathrm{d}t}\vu=\partial_t\vu+(\partial_{\vx}\vu)\,\vv$.
Because the marginal velocity is the posterior mean of $\vv_t$ (\cref{eq:marg_vel}),
replacing this intractable target with the conditional velocity $\vv_t$ gives a
trainable regression target. Specifically, a MeanFlow
network $\vu_\theta(\vx_t,s,t)$ is trained by minimizing
$\mathcal{L}_{\mathrm{MF}}(\theta) = \E_{s,t,\,\vc,\,\vx_0,\,\epsilonv}\!\left[w(t)\,\|\vu_\theta-\sg(\vu_{\mathrm{tgt}})\|_2^2\right]$, with target
$\vu_{\mathrm{tgt}} = \vv_t-(t-s)\,(\partial_t\vu_\theta + (\partial_{\vx}\vu_\theta)\,\vv_t)$.
The stop-gradient $\sg(\cdot)$ is applied
to the whole target, so no gradient backpropagates through the total derivative term.
When $s=t$, the derivative correction term vanishes and $\mathcal{L}_{\mathrm{MF}}$ reduces to $\mathcal{L}_{\mathrm{FM}}$.

\noindent\textbf{Few-step sampling.}
Since $\vu$ is the exact average velocity over $[s,t]$, it obeys the exact
displacement identity $\vx_s=\vx_t-(t-s)\,\vu(\vx_t,s,t)$. The MeanFlow
sampler therefore updates
\begin{equation}
\label{eq:meanflow_sampling}
\vx_{t_{i-1}}
\,=\,
\vx_{t_i} - (t_i-t_{i-1})\,\vu_\theta(\vx_{t_i},t_{i-1},t_i),
\qquad i=N,\ldots,1.
\end{equation}
No instantaneous velocity is evaluated at inference, and a single step suffices
in principle when $\vu_\theta$ matches the true average velocity.

\subsection{DiffusionNFT}
\label{sec:diffnft}

DiffusionNFT~\citep{diffusionnft} is an online diffusion RL method built on the
forward-process Flow Matching objective rather than on policy gradients through a
discretized reverse sampler.

\noindent\textbf{Optimality partition.}
Let $\pi^{\mathrm{old}}$ be a frozen data-collection policy with marginal velocity
$\vv^{\mathrm{old}}(\vx_t,t)$. For each prompt $\vc$ one samples $K$ images
$\vx_0^{1:K}\sim\pi^{\mathrm{old}}(\cdot\mid\vc)$ and scores each with a reward
$r(\vx_0,\vc)\in[0,1]$, read as the optimality probability
$r(\vx_0,\vc)\triangleq p(\mathbf{o}=1\mid\vx_0,\vc)$ in the RL-as-inference
view~\citep{levine2018reinforcement}. This induces positive and negative posteriors of $\pi^{\mathrm{old}}$,
$\pi^{+}\propto r\,\pi^{\mathrm{old}}$ and $\pi^{-}\propto(1-r)\,\pi^{\mathrm{old}}$,
normalized by $Z(\vc)\triangleq\E_{\pi^{\mathrm{old}}}[r]$ and $1-Z(\vc)$, respectively.
Under the objective $J(\pi)=\E_{\pi(\cdot\mid\vc)}[r]$, with $\pi\succ\pi'$
denoting $J(\pi)>J(\pi')$, one has $\pi^{+}\succ\pi^{\mathrm{old}}\succ\pi^{-}$ for
any non-degenerate (non-constant) reward, so $\pi^{+}$ is a valid improved policy.

\noindent\textbf{Reinforcement guidance.}
Rather than treating $\pi^{+}$ as an optimization \emph{point} (rejection
finetuning, which discards negatives), DiffusionNFT extracts an optimization
\emph{direction} from the triplet $(\pi^{+},\pi^{\mathrm{old}},\pi^{-})$. With
\begin{equation}
\label{eq:alpha}
\alpha(\vx_t,\vc)
\,\triangleq\,
\frac{\pi_t^{+}(\vx_t\mid\vc)}{\pi_t^{\mathrm{old}}(\vx_t\mid\vc)}\,Z(\vc)
\,=\, \E_{\pi^{\mathrm{old}}}[r\mid\vx_t,\vc]
\,\in [0,1],
\end{equation}
because $\vv(\vx_t,t)$ is a posterior mean (\cref{eq:marg_vel}), the marginal-velocity
decomposition $\vv^{\mathrm{old}}(\vx_t,t)=\alpha\,\vv^{+}(\vx_t,t)+(1-\alpha)\,\vv^{-}(\vx_t,t)$ holds
\citep[Thm.~3.1]{diffusionnft}, giving
the shared \emph{reinforcement guidance}
\begin{equation}
\label{eq:delta}
\Delta(\vx_t,\vc,t)
\,\triangleq\,
\alpha\bigl(\vv^{+}(\vx_t,t)-\vv^{\mathrm{old}}(\vx_t,t)\bigr)
\,=\,
(1-\alpha)\bigl(\vv^{\mathrm{old}}(\vx_t,t)-\vv^{-}(\vx_t,t)\bigr).
\end{equation}
Guiding the reference model along $\Delta$ with strength $1/\beta$ defines the
target $\vv^{*}(\vx_t,t)\triangleq\vv^{\mathrm{old}}(\vx_t,t)+\tfrac{1}{\beta}\Delta$, which mirrors
classifier-free guidance~\citep{ho2021classifierfree} and recovers $\vv^{*}(\vx_t,t)=\vv^{+}(\vx_t,t)$ at $\beta=\alpha$.

\noindent\textbf{Implicit parameterization.}
Instead of learning separate positive and negative models, DiffusionNFT uses the
implicit parameterization $\vv_\theta^{+}(\vx_t,t) := (1-\beta)\,\vv^{\mathrm{old}}(\vx_t,t)+\beta\,\vv_\theta(\vx_t,t)$
and $\vv_\theta^{-}(\vx_t,t) := (1+\beta)\,\vv^{\mathrm{old}}(\vx_t,t)-\beta\,\vv_\theta(\vx_t,t)$, and optimizes the reward-weighted objective
\begin{equation}
\label{eq:diffnft_loss}
\mathcal{L}_{\mathrm{DNFT}}(\theta)
\,=\,
\underset{\substack{\vc,\;\vx_0\sim\pi^{\mathrm{old}}(\cdot\mid\vc),\\
\epsilonv\sim\gN(\bm{0},\rmI),\;t}}{\E}
\left[
r\,\|\vv_\theta^{+}(\vx_t,t)-\vv_t\|_2^2
\,+\,
(1-r)\,\|\vv_\theta^{-}(\vx_t,t)-\vv_t\|_2^2
\right].
\end{equation}
At the exact optimum, \citet[Thm.~3.2]{diffusionnft} show
$\vv_{\theta^*}(\vx_t,t)=\vv^{\mathrm{old}}(\vx_t,t)+\tfrac{2}{\beta}\,\Delta$. In particular, when
$\beta=2\alpha$ the optimum coincides with $\vv^{+}(\vx_t,t)$, so the trained model itself
realizes the positive policy improvement.

\section{MeanFlow Reinforcement via Forward-Process RL}
\label{sec:method}

Here, we first build the proposed MeanFlowNFT (\cref{sec:theory}), which finetunes a MeanFlow generator with DiffusionNFT-style RL. Then, we analyze its closed-form induced
optimum and policy-improvement guarantee (\cref{sec:mf-theory}). Finally, we present the practical implementation
(\cref{sec:algorithm}).

\subsection{MeanFlowNFT}
\label{sec:theory}
DiffusionNFT acts on the \emph{instantaneous} velocity, while a MeanFlow network predicts the \emph{average} velocity. 
Our key idea is to \emph{keep the network in average-velocity space, but carry out optimization in instantaneous-velocity space.}

To achieve this, for any interval with $s\le t$, we substitute $\vu_\theta$ into the MeanFlow identity (\cref{eq:identity}) to build an \emph{induced instantaneous-velocity predictor} $\vV_\theta$\footnote{For clarity,
lowercase $\vv$ denotes marginal instantaneous velocities of policies (\emph{e.g.}, $\vv^{+}$ of $\pi^{+}$), while
uppercase $\vV$ denotes instantaneous predictors induced from MeanFlow
average-velocity networks (\emph{e.g.}, $\vV_\theta$ induced by $\vu_\theta$).},
\begin{equation}
\label{eq:V_theta}
\vV_\theta(\vx_t,s,t)
\;\triangleq\;
\vu_\theta(\vx_t,s,t)
\;+\;
(t-s)\left[
\partial_t\vu_\theta(\vx_t,s,t)
+
(\partial_{\vx}\vu_\theta)(\vx_t,s,t)\,
\widehat{\vv}_\theta(\vx_t,t)
\right].
\end{equation}
Here $\widehat{\vv}_\theta(\vx_t,t)\triangleq\vu_\theta(\vx_t,t,t)$ is the network's instantaneous velocity at time $t$, 
obtained by letting $s\to t$ as in \cref{eq:avg_vel}. Inside \cref{eq:V_theta}, $\widehat{\vv}_\theta(\vx_t,t)$ serves only as the direction that $\partial_{\vx}\vu_\theta$ acts on, whereas $\vV_\theta(\vx_t,s,t)$ is the quantity we actually optimize.
In the idealized setting, \cref{eq:identity} gives
$\vV_\theta(\vx_t,s,t)=\widehat{\vv}_\theta(\vx_t,t)$ for all $s\le t$, so the
induced velocity is identical across interval starts for fixed $(\vx_t,\vc,t)$.
Yet $\vV_\theta$ is constructed from the full-interval prediction
$\vu_\theta(\vx_t,s,t)$, so optimizing it still reinforces the average velocity.
In contrast, optimizing the single-time $\widehat{\vv}_\theta$, 
\emph{i.e.}, using only the $s=t$ case of \cref{eq:V_theta}, 
would reduce to plain Flow Matching~\citep{liu2022flow} and lose MeanFlow's average-velocity parameterization and few-step sampling. 
To keep the update on $\vu_\theta(\vx_t,s,t)$ and reduce costs, 
we wrap the total-derivative term of \cref{eq:V_theta} in a stop-gradient during optimization (\emph{i.e.}, $\mathrm{sg}\big(\partial_t\vu_\theta+(\partial_{\vx}\vu_\theta)\,\widehat{\vv}_\theta\big)$).

Now, we apply a DiffusionNFT-style objective to $\vV_\theta$ to reinforce the underlying average velocity
$\vu_\theta(\vx_t, s, t)$. Concretely, let $\vV^{\mathrm{old}}$ be the same construction applied to the frozen reference $\vu^{\mathrm{old}}$~\footnote{Following DiffusionNFT~\citep{diffusionnft}, $\vu^{\mathrm{old}}$ is an exponential moving average (EMA) of $\vu_\theta$.}. We then define implicit ``positive'' and ``negative'' predictors in instantaneous-velocity space, $\vV_\theta^{+}(\vx_t,s,t):=(1-\beta)\vV^{\mathrm{old}}(\vx_t,s,t)+\beta\vV_\theta(\vx_t,s,t)$ and $\vV_\theta^{-}(\vx_t,s,t):=(1+\beta)\vV^{\mathrm{old}}(\vx_t,s,t)-\beta\vV_\theta(\vx_t,s,t)$. Mirroring \cref{eq:diffnft_loss}, 
we optimize the following objective
\begin{equation}
\label{eq:mfnft_loss}
\mathcal{L}_{\mathrm{MFNFT}}(\theta)
\;=\;
\underset{\substack{\vc,\;\vx_0\sim\pi^{\mathrm{old}}(\cdot\mid\vc),\\ \epsilonv\sim\gN(\bm{0},\rmI),\;s\le t}}{\E}
\left[
r\,\|\vV_\theta^{+}(\vx_t,s,t)-\vv_t\|_2^2
\;+\;
(1-r)\,\|\vV_\theta^{-}(\vx_t,s,t)-\vv_t\|_2^2
\right].
\end{equation}
Here $\pi^{\mathrm{old}}$ is the policy whose average velocity is $\vu^{\mathrm{old}}$, and $\vx_0\sim\pi^{\mathrm{old}}$ means $\vx_0$ 
is obtained by running the few-step MeanFlow sampler (\cref{eq:meanflow_sampling}) with $\vu^{\mathrm{old}}$. 
The reward $r$ and the other symbols follow the same definitions as in DiffusionNFT (\cref{sec:diffnft}).

\begin{figure}[!ht]
\centering
\includegraphics[width=0.83\textwidth]{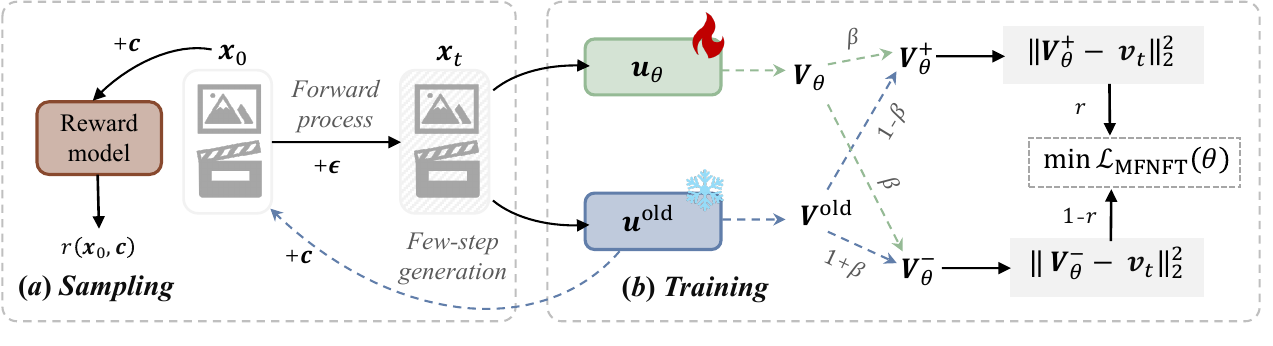}
\vspace{-0.05in}
\caption{MeanFlowNFT keeps MeanFlow's \textit{average-velocity}
parameterization (\cref{eq:avg_vel}) and native few-step sampler (\cref{eq:meanflow_sampling}), while constructing an \textit{induced
instantaneous-velocity predictor} $\vV_\theta$ for forward-process negative-aware finetuning optimization.}
\label{fig:overview}
\vspace{-0.05in}
\end{figure}
In effect, MeanFlowNFT directly optimizes $\vV_\theta$, and this improvement transfers to the average velocity $\vu_\theta$. The overall pipeline is depicted in \cref{fig:overview}, and it gives two key benefits.
\textbf{(i)} Training depends only on the forward noising process: following DiffusionNFT, it is likelihood-free and never unrolls the reverse denoising process, in contrast to GRPO-style policy gradients~\citep{flowgrpo,mixgrpo} that discretize the reverse sampler and estimate per-step likelihoods.
\textbf{(ii)} It decouples optimization from sampling: while optimization acts in \emph{instantaneous-velocity} space, inference and RL sampling still deploy the \emph{average-velocity} $\vu_\theta$ through its efficient few-step MeanFlow sampler.

\subsection{Optimum and Policy Improvement}
\label{sec:mf-theory}

In this subsection, we provide theoretical guarantees for MeanFlowNFT in three steps: the idealized
pointwise optimum in induced-predictor space
(\cref{prop:mfnft_opt}), its recovery of the improved policy's
marginal velocity (\cref{cor:V_improvement}), and the transfer of this
induced-velocity guarantee to the average-velocity network
$\vu_\theta$ (\cref{thm:u_improvement}). All proofs are provided in \cref{sec:appendix-proofs}.

The objective \cref{eq:mfnft_loss} matches the predictors
$\vV_\theta^{\pm}$ to the conditional velocity $\vv_t$. Treating the induced
prediction at each $(\vx_t,\vc,s,t)$ as an unconstrained value gives the following
closed-form optimum.
\begin{proposition}[Idealized pointwise optimum]
\label[proposition]{prop:mfnft_opt}
Conditioned on $(\vx_t,\vc,s,t)$, the idealized pointwise minimizer is
\begin{equation}
\label{eq:opt_form}
\begin{aligned}
\vV_{\theta^*}(\vx_t,s,t)
&\;=\;
\vV^{\mathrm{old}}(\vx_t,s,t)
\;+\;
\frac{2}{\beta}\,\widehat{\Delta}(\vx_t,\vc,s,t),
\\[2pt]
\widehat{\Delta}(\vx_t,\vc,s,t)
&\;\triangleq\;
\frac{1}{2}
\E_{\pi^{\mathrm{old}}}\!\left[
(2r-1)\bigl(\vv_t-\vV^{\mathrm{old}}(\vx_t,s,t)\bigr)
\,\middle|\, \vx_t,\vc,s,t
\right].
\end{aligned}
\end{equation}
\end{proposition}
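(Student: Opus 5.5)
Treat the induced prediction $\vV_\theta(\vx_t,s,t)$ at a fixed conditioning tuple $(\vx_t,\vc,s,t)$ as a free vector $\bm{z}\in\mathbb{R}^d$, with $\vV^{\mathrm{old}}(\vx_t,s,t)$ a fixed vector $\bm{a}$ (it does not depend on $\theta$). By the tower property, minimizing \cref{eq:mfnft_loss} over the function $\vV_\theta$ is equivalent to minimizing, for each tuple, the conditional risk
\[
\ell(\bm{z})=\E_{\pi^{\mathrm{old}}}\!\left[r\,\|(1-\beta)\bm{a}+\beta\bm{z}-\vv_t\|_2^2+(1-r)\,\|(1+\beta)\bm{a}-\beta\bm{z}-\vv_t\|_2^2\,\middle|\,\vx_t,\vc,s,t\right].
\]
Here the randomness is over the posterior of $(\vx_0,\epsilonv)$ given $\vx_t$ under $\pi^{\mathrm{old}}$, with $r=r(\vx_0,\vc)$ and $\vv_t$ the conditional velocity. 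The plan is to solve this strictly convex quadratic in closed form and then rewrite the solution in the stated form.

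Substitute $\bm{z}=\bm{a}+\bm{d}$. Then the positive residual becomes $\beta\bm{d}-(\vv_t-\bm{a})$ and the negative residual becomes $-\beta\bm{d}-(\vv_t-\bm{a})$. Expanding both squares, the terms $\beta^2\|\bm{d}\|^2$ combine with total weight $r+(1-r)=1$. The cross terms give $-2\beta\langle\bm{d},(2r-1)(\vv_t-\bm{a})\rangle$, and the remaining terms do not depend on $\bm{d}$. So
\[
\ell=\beta^2\|\bm{d}\|^2-2\beta\bigl\langle\bm{d},\E[(2r-1)(\vv_t-\bm{a})\mid\cdot]\bigr\rangle+\text{const}.
\]
Setting the gradient to zero gives $\bm{d}^*=\frac{1}{\beta}\E[(2r-1)(\vv_t-\bm{a})\mid\cdot]=\frac{2}{\beta}\widehat{\Delta}$. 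This is exactly \cref{eq:opt_form}. Uniqueness follows because the Hessian $2\beta^2\rmI$ is positive definite for $\beta\neq0$.

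The only delicate step is the pointwise reduction. One has to justify that $\bm{a}=\vV^{\mathrm{old}}(\vx_t,s,t)$ is measurable with respect to the conditioning, and that the expectation over $(s,t)$ and $\vx_t$ decouples. Under the idealized unconstrained-function assumption, this is the standard argument that the minimizer of an expected loss minimizes the conditional risk almost everywhere. The stop-gradient on the total-derivative term does not change this, because the proposition concerns the value of $\vV_\theta$, not its parameterization.

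As a sanity check, note that $\bm{a}$ enters only through the expectation weighted by $(2r-1)$. So $\widehat{\Delta}$ need not equal the DiffusionNFT $\Delta$ unless $\vV^{\mathrm{old}}=\vv^{\mathrm{old}}$. That identification is deferred to \cref{cor:V_improvement}.
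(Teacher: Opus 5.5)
Your proof is correct and follows essentially the same route as the paper. Both reduce to the conditional risk at a fixed $(\vx_t,\vc,s,t)$, treat $\vV_\theta(\vx_t,s,t)$ as a free value, and solve the resulting strictly convex quadratic; you complete the square after the shift $\bm{z}=\bm{a}+\bm{d}$, whereas the paper directly sets the conditional expectation of the rescaled gradient $\beta(\vV_\theta-\vV^{\mathrm{old}})+(2r-1)(\vV^{\mathrm{old}}-\vv_t)$ to zero, which is the same computation.
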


The optimum mirrors the DiffusionNFT fixed point
$\vv_{\theta^*}(\vx_t,t)=\vv^{\mathrm{old}}(\vx_t,t)+\tfrac{2}{\beta}\Delta$ (\cref{eq:delta}), with the induced
reference $\vV^{\mathrm{old}}$ and guidance $\widehat{\Delta}$ in place of
$\vv^{\mathrm{old}}$ and $\Delta$.

\begin{corollary}[$\vV_{\theta^*}$ recovers the improved marginal velocity]
\label[corollary]{cor:V_improvement}
Setting the guidance strength to $\beta=2\alpha(\vx_t,\vc)$, \cref{eq:opt_form} collapses to
\begin{equation}
\label{eq:V_is_vplus}
\vV_{\theta^*}(\vx_t,s,t)=\vv^{+}(\vx_t,t)
\end{equation}
for all $s\le t$, the marginal instantaneous velocity of the improved policy $\pi^{+}$.
\end{corollary}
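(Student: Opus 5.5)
The argument starts from \cref{prop:mfnft_opt} and reduces it to the DiffusionNFT decomposition. The key observation is that in the idealized setting the frozen reference $\vu^{\mathrm{old}}$ is the exact average velocity of $\pi^{\mathrm{old}}$. Applying the MeanFlow identity (\cref{eq:identity}) to $\pi^{\mathrm{old}}$ therefore gives $\vV^{\mathrm{old}}(\vx_t,s,t)=\vv^{\mathrm{old}}(\vx_t,t)$ for every $s\le t$. The same identity at $s=t$ gives $\widehat{\vv}^{\mathrm{old}}(\vx_t,t)=\vu^{\mathrm{old}}(\vx_t,t,t)=\vv^{\mathrm{old}}(\vx_t,t)$, so the direction used inside $\vV^{\mathrm{old}}$ is the correct one. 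In particular, $\vV^{\mathrm{old}}$ is deterministic given $(\vx_t,\vc,t)$ and independent of $s$.

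Next I evaluate $\widehat{\Delta}$. I pull the deterministic $\vV^{\mathrm{old}}=\vv^{\mathrm{old}}$ out of the conditional expectation and use that, conditional on $(\vx_t,\vc,t)$, the sampled $s$ carries no information about $(\vx_0,\epsilonv)$ because it is drawn independently. This gives
\[
2\widehat{\Delta}=\E[(2r-1)\vv_t\mid\vx_t,\vc]-(2\alpha-1)\,\vv^{\mathrm{old}},
\]
with $\alpha=\E[r\mid\vx_t,\vc]$ as in \cref{eq:alpha}.

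The remaining term is $\E[r\,\vv_t\mid\vx_t,\vc]$, which I identify by Bayes' rule. The posterior of $(\vx_0,\epsilonv)$ given $\vx_t$ under $\pi^{+}\propto r\,\pi^{\mathrm{old}}$ is the $\pi^{\mathrm{old}}$-posterior reweighted by $r/\alpha$. Hence $\E[r\,\vv_t\mid\vx_t,\vc]=\alpha\,\vv^{+}(\vx_t,t)$. Combined with $\E[\vv_t\mid\vx_t,\vc]=\vv^{\mathrm{old}}$, this yields
\[
2\widehat{\Delta}=2\alpha\vv^{+}-\vv^{\mathrm{old}}-(2\alpha-1)\vv^{\mathrm{old}}=2\alpha(\vv^{+}-\vv^{\mathrm{old}})=2\Delta .
\]
So $\widehat{\Delta}=\Delta$ from \cref{eq:delta}. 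This step is exactly the content of \citet[Thm.~3.1]{diffusionnft}, and I would simply invoke it.

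Finally, I substitute into \cref{eq:opt_form}:
\[
\vV_{\theta^*}=\vv^{\mathrm{old}}+\tfrac{2}{\beta}\,\alpha(\vv^{+}-\vv^{\mathrm{old}}).
\]
With $\beta=2\alpha$ this collapses to $\vv^{+}(\vx_t,t)$, uniformly in $s\le t$. The degenerate case $\alpha=0$, where the choice $\beta=0$ is meaningless, is excluded as in DiffusionNFT.

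The main obstacle is the first step: justifying $\vV^{\mathrm{old}}=\vv^{\mathrm{old}}$ and the conditional independence of $s$. This rests on the idealization that $\vu^{\mathrm{old}}$ is the exact average velocity of the policy that generated the samples, that the total-derivative direction $\widehat{\vv}^{\mathrm{old}}$ equals the true marginal velocity, and that $s$ is sampled independently of the data. I would state these assumptions explicitly before running the computation.
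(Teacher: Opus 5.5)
Your proposal is correct and follows the paper's proof essentially step for step. You reduce $\vV^{\mathrm{old}}$ to $\vv^{\mathrm{old}}$ via $\widehat{\vv}^{\mathrm{old}}=\vu^{\mathrm{old}}(\vx_t,t,t)=\vv^{\mathrm{old}}$ and the MeanFlow identity, pull it out of the conditional expectation, use $\E_{\pi^{\mathrm{old}}}[r\,\vv_t\mid\vx_t,\vc]=\alpha\vv^{+}$ to obtain $\widehat{\Delta}=\Delta$, and substitute $\beta=2\alpha$. Your explicit statements that $s$ is independent of $(\vx_0,\epsilonv,r)$ and that $\alpha=0$ is excluded make precise two points the paper's proof uses only implicitly.
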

This gives the DiffusionNFT improvement target in $\vV$-space. To transfer it to the
deployed average velocity, we use the following consequence of the MeanFlow identity.
\begin{lemma}[MeanFlow consistency]
\label[lemma]{lem:mf_consistency}
Let $\vv$ be an instantaneous velocity field. If $\vu$ satisfies
\begin{equation}
\label{eq:mf_consistency}
\vu(\vx_t,s,t)
+
(t-s)
\left[
\partial_t\vu(\vx_t,s,t)
+
(\partial_{\vx}\vu)(\vx_t,s,t)\vv(\vx_t,t)
\right]
=
\vv(\vx_t,t)
\end{equation}
for all $s\le t$, then $\vu$ is the exact average velocity of the ODE
$\dot\vx_\tau=\vv(\vx_\tau,\tau)$ over $[s,t]$.
\end{lemma}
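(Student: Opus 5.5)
The plan is to turn the pointwise identity \cref{eq:mf_consistency} into an ordinary differential equation along a single ODE trajectory, and then integrate it. This reverses the derivation of \cref{eq:identity} in \cref{sec:mf-prelim}. I will use mild regularity assumptions, which the idealized setting implicitly grants:
\begin{itemize}[leftmargin=*,nosep]
\item $\vv$ is locally Lipschitz in $\vx$ and continuous in $t$, so the ODE $\dot\vx_\tau=\vv(\vx_\tau,\tau)$ has a unique solution through any point on the relevant time interval;
\item $\vu$ is $C^1$ in $(\vx,t)$, so the chain rule applies;
\item $\vu(\cdot,s,\cdot)$ stays bounded as its time argument approaches $s$.
\end{itemize}

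\textbf{Step 1 (reduce to a trajectory).} Fix $s$ and a point $\vx_t$ at time $t\ge s$. Let $\tau\mapsto\vx_\tau$, for $\tau\in[s,t]$, be the unique ODE trajectory passing through $\vx_t$ at time $t$. The hypothesis holds for every point and every pair $s\le\tau$. I can therefore apply it at $(\vx_\tau,s,\tau)$ for each $\tau\in[s,t]$. Along the trajectory, the chain rule gives
\[
\frac{\mathrm{d}}{\mathrm{d}\tau}\,\vu(\vx_\tau,s,\tau)=\partial_\tau\vu(\vx_\tau,s,\tau)+(\partial_{\vx}\vu)(\vx_\tau,s,\tau)\,\vv(\vx_\tau,\tau).
\]
This is exactly the bracket in \cref{eq:mf_consistency}, because the direction multiplying $\partial_{\vx}\vu$ there is the same field $\vv$ that drives the trajectory. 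The hypothesis then reads $\vu+(\tau-s)\tfrac{\mathrm{d}}{\mathrm{d}\tau}\vu=\vv(\vx_\tau,\tau)$ along the path.

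\textbf{Step 2 (integrate the displacement).} Define $g(\tau)\triangleq(\tau-s)\,\vu(\vx_\tau,s,\tau)$. By the product rule and Step~1, $g'(\tau)=\vv(\vx_\tau,\tau)$ for $\tau\in(s,t]$. By the boundedness assumption, $g(\tau)\to0$ as $\tau\to s^{+}$. Integrating from $s$ to $t$ then gives $(t-s)\,\vu(\vx_t,s,t)=\int_s^t\vv(\vx_\tau,\tau)\,\mathrm{d}\tau$. For $s<t$, dividing by $t-s$ recovers \cref{eq:avg_vel}. For $s=t$, the hypothesis directly gives $\vu(\vx_t,t,t)=\vv(\vx_t,t)$, which is the required limit. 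Since $\vx_t$, $s$ and $t$ were arbitrary, $\vu$ is the exact average velocity.

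\textbf{Main obstacle.} The delicate point is the boundary $\tau=s$, where the factor $(\tau-s)$ degenerates. Without a boundedness (or continuity) assumption on $\vu$ near the diagonal, one could add a term like $\vc/(\tau-s)$ to $g$ and still satisfy the ODE. The homogeneous equation $\vu+(\tau-s)\dot\vu=0$ has exactly the solutions $\vu\propto 1/(\tau-s)$ along the trajectory. Uniqueness therefore hinges on excluding this singular family. I would state this explicitly as a regularity condition, namely that $\vu$ is continuous up to $s=t$, which the idealized setting grants. I would also note that existence and uniqueness of the backward trajectory on $[s,t]$ is needed so that ``the average velocity of the ODE'' is well defined in the first place.
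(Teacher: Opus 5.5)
Your proposal is correct and follows the same route as the paper's proof. The paper also sets $G(\tau)=(\tau-s)\vu(\vx_\tau,s,\tau)$ and uses the chain rule along the trajectory of $\vv$ to get $G'(\tau)=\vv(\vx_\tau,\tau)$, then integrates from $G(s)=0$. Your explicit boundedness condition near the diagonal makes rigorous the step $G(s)=0$, which the paper takes for granted. One small slip in your remark: the excluded singular family comes from adding a constant to $g$, equivalently a $1/(\tau-s)$ term to $\vu$, not adding $1/(\tau-s)$ to $g$ as you wrote.
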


\begin{theorem}[MeanFlow policy improves]
\label{thm:u_improvement}
In the setting of \cref{cor:V_improvement}, if the induced optimum is attained
for all intervals $s\le t$, the optimal
average velocity $\vu_{\theta^*}$ is the exact average velocity of the
ODE induced by $\vv^{+}$. Therefore the MeanFlow policy induced by $\vu_{\theta^*}$
coincides with $\pi^{+}$, and consequently
\begin{equation*}
J(\pi_{\theta^*}) = J(\pi^{+}) > J(\pi^{\mathrm{old}}).
\end{equation*}
\end{theorem}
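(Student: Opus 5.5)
The plan is to chain \cref{cor:V_improvement} into \cref{lem:mf_consistency} and then read off the sampling policy. First, by \cref{cor:V_improvement}, with $\beta=2\alpha(\vx_t,\vc)$ and the induced optimum attained for every interval $s\le t$, we have $\vV_{\theta^*}(\vx_t,s,t)=\vv^{+}(\vx_t,t)$ for all $s\le t$. The key point is that, unwinding the definition \cref{eq:V_theta}, this says
\begin{equation*}
\vu_{\theta^*}(\vx_t,s,t)+(t-s)\bigl[\partial_t\vu_{\theta^*}+(\partial_{\vx}\vu_{\theta^*})\,\widehat{\vv}_{\theta^*}(\vx_t,t)\bigr]=\vv^{+}(\vx_t,t).
\end{equation*}
This is not yet \cref{eq:mf_consistency}, because the transport direction is $\widehat{\vv}_{\theta^*}$ rather than $\vv^{+}$.

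I will identify the two by specializing to $s=t$. There the bracketed term is multiplied by $t-s=0$, so $\vV_{\theta^*}(\vx_t,t,t)=\vu_{\theta^*}(\vx_t,t,t)=\widehat{\vv}_{\theta^*}(\vx_t,t)$, and the corollary gives $\widehat{\vv}_{\theta^*}(\vx_t,t)=\vv^{+}(\vx_t,t)$. This requires the diagonal $s=t$ to be among the intervals on which the optimum is attained, which is what ``for all $s\le t$'' grants. I will assume $\vu_{\theta^*}$ is $C^1$ so the derivatives are bounded and the $s=t$ evaluation is legitimate. Substituting back, $\vu_{\theta^*}$ satisfies \cref{eq:mf_consistency} with $\vv=\vv^{+}$ for all $s\le t$. \Cref{lem:mf_consistency} then gives that $\vu_{\theta^*}$ is the exact average velocity of $\dot\vx_\tau=\vv^{+}(\vx_\tau,\tau)$. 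The stop-gradient does not matter here, since it changes only the optimization dynamics, not the value of the predictor at the idealized optimum.

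Next I pass from velocity fields to distributions. Since $\vu_{\theta^*}$ is the exact average velocity, the displacement identity $\vx_s=\vx_t-(t-s)\vu_{\theta^*}(\vx_t,s,t)$ holds exactly. Hence the few-step sampler \cref{eq:meanflow_sampling}, for any time grid, composes exact flow maps of the $\vv^{+}$-ODE from $t=1$ to $t=0$. Starting from $\gN(\bm{0},\rmI)$, which equals $\pi^{+}_1$ because the forward process makes the time-1 marginal pure noise regardless of the data law, the pushforward is the time-0 marginal of the probability-flow ODE of $\pi^{+}$. Since $\vv^{+}$ is the marginal velocity of $\pi^{+}$ (the posterior mean as in \cref{eq:marg_vel}), the continuity equation shows this marginal is $\pi^{+}(\cdot\mid\vc)$. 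Therefore $\pi_{\theta^*}=\pi^{+}$ and $J(\pi_{\theta^*})=J(\pi^{+})$. The strict inequality $J(\pi^{+})>J(\pi^{\mathrm{old}})$ is the optimality-partition fact from \cref{sec:diffnft}: $J(\pi^{+})-J(\pi^{\mathrm{old}})=\mathrm{Var}_{\pi^{\mathrm{old}}}(r)/Z>0$ for non-constant $r$.

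The main obstacle is the identification $\widehat{\vv}_{\theta^*}=\vv^{+}$ in the second step. If it failed, the induced predictor would only certify that $\vu_{\theta^*}$ satisfies a transport equation along the wrong field, and \cref{lem:mf_consistency} could not be applied. The $s=t$ specialization resolves this cleanly, provided the attainment hypothesis covers the diagonal. A secondary technical point is the well-posedness of the $\vv^{+}$-ODE, namely Lipschitz regularity, which makes the flow map and the continuity-equation argument valid. I will state this as part of the idealized setting.
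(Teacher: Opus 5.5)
Your proposal is correct and follows the paper's proof essentially step for step. You specialize to $s=t$ to get $\widehat{\vv}_{\theta^*}(\vx_t,t)=\vv^{+}(\vx_t,t)$, substitute back into \cref{eq:V_theta} to obtain \cref{eq:mf_consistency} with $\vv=\vv^{+}$, invoke \cref{lem:mf_consistency}, and conclude that the exact MeanFlow sampler realizes $\pi^{+}$. Your additions (the terminal law $\pi^{+}_1=\gN(\bm{0},\rmI)$, the continuity-equation and well-posedness remark, and the explicit gap $J(\pi^{+})-J(\pi^{\mathrm{old}})=\mathrm{Var}_{\pi^{\mathrm{old}}}(r)/Z>0$) only make explicit what the paper leaves implicit.
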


Indeed, if $\vV_{\theta^*}(\vx_t,s,t)=\vv^{+}(\vx_t,t)$ for all
$s\le t$, then setting $s=t$ in \cref{eq:V_theta} removes the $(t-s)$ term and gives
$\widehat{\vv}_{\theta^*}(\vx_t,t)=\vu_{\theta^*}(\vx_t,t,t)=\vv^{+}(\vx_t,t)$.
Substituting this identity back into \cref{eq:V_theta} shows that $\vu_{\theta^*}$ 
satisfies \cref{eq:mf_consistency} with $\vv=\vv^{+}$. By \cref{lem:mf_consistency}, 
$\vu_{\theta^*}$ is then the exact average velocity of the ODE induced by $\vv^{+}$.

\subsection{Practical implementation}
\label{sec:algorithm}

\cref{eq:V_theta} provides the theoretical construction analyzed in
\cref{sec:mf-theory}. Guided by this construction, the practical implementation
introduces the following design choices to reduce computational cost and improve training
stability. \cref{alg:mfnft} summarizes the resulting MeanFlowNFT update.

\algrenewcommand\algorithmiccomment[1]{\hfill\textcolor{colBlue}{\footnotesize$\triangleright$~\textit{#1}}}
\begin{algorithm}[!ht]
\caption{MeanFlowNFT (one update step)}
\label{alg:mfnft}
\begin{algorithmic}[1]
\Require pretrained MeanFlow $\vu_\theta$, frozen reference $\vu^{\mathrm{old}}$ (EMA of $\vu_\theta$),
reward $r$, guidance $\beta$
\Statex \textcolor{colGray}{\texttt{// Sampling}}
\State sample prompt $\vc$; roll out $\vx_0$ with $\vu^{\mathrm{old}}$; evaluate $r(\vx_0,\vc)\in[0,1]$
\State sample an interval $s\le t$, and $\epsilonv\sim\gN(\bm{0},\rmI)$
\State $\vx_t \gets \alpha_t\vx_0+\sigma_t\epsilonv$; \quad
$\vv_t \gets \dot\alpha_t\vx_0+\dot\sigma_t\epsilonv$
\Statex \textcolor{colGray}{\texttt{// Training}}
\State $\vx_{t\pm\Delta t} \gets \vx_t \pm \Delta t\,\vv_t$
\State $\bm{d} \gets \big[\vu^{\mathrm{old}}(\vx_{t+\Delta t},s,t{+}\Delta t)-\vu^{\mathrm{old}}(\vx_{t-\Delta t},s,t{-}\Delta t)\big]/(2\Delta t)$
\State $\vV_\theta \gets \vu_\theta(\vx_t,s,t) + (t-s)\,\bm{d}$
\State $\vV^{\mathrm{old}} \gets \vu^{\mathrm{old}}(\vx_t,s,t) + (t-s)\,\bm{d}$
\State $\vV_\theta^{+}\gets(1-\beta)\vV^{\mathrm{old}}+\beta\vV_\theta$; \quad
$\vV_\theta^{-}\gets(1+\beta)\vV^{\mathrm{old}}-\beta\vV_\theta$
\State $\mathcal{L}_{\mathrm{MFNFT}}\gets r\,\|\vV_\theta^{+}-\vv_t\|_2^2
+(1-r)\,\|\vV_\theta^{-}-\vv_t\|_2^2$
\State update $\theta$ by $\nabla_\theta\mathcal{L}_{\mathrm{MFNFT}}$
\end{algorithmic}
\end{algorithm}

\noindent\textbf{Finite-difference derivative.} The total-derivative term $\tfrac{\mathrm{d}}{\mathrm{d}t}\vu_\theta=\partial_t\vu_\theta+(\partial_{\vx}\vu_\theta)\,\widehat{\vv}_\theta$ in \cref{eq:V_theta} can be seen as a Jacobian-vector product (JVP)\footnote{The Jacobian $[\partial_{\vx}\vu_\theta,\,\partial_{s}\vu_\theta,\,\partial_{t}\vu_\theta]$ stacks the first-order partial derivatives of $\vu_\theta$, and its product with the direction $[\widehat{\vv}_\theta,\,0,\,1]$ recovers the total derivative $\partial_t\vu_\theta+(\partial_{\vx}\vu_\theta)\,\widehat{\vv}_\theta$.}.
Computing this JVP with forward-mode automatic differentiation (\texttt{torch.func.jvp}) is expensive and not fully compatible with Fully Sharded Data Parallel (FSDP)~\citep{fsdp} training.
Following prior works~\citep{Nie_2026_CVPR,gu2026anyflowanystepvideodiffusion}, we instead approximate it by a central finite difference in $t$,
\begin{equation}
\tfrac{\mathrm{d}}{\mathrm{d}t}\vu_\theta(\vx_t,s,t)\;\approx\;
\frac{\vu_\theta(\vx_{t+\Delta t},s,t+\Delta t)-\vu_\theta(\vx_{t-\Delta t},s,t-\Delta t)}{2\,\Delta t},
\label{eq:finite_diff}
\end{equation}
where $\vx_{t\pm\Delta t}=\vx_t\pm\Delta t\,\vv_t$ displace $\vx_t$ along the direction $\vv_t$, instead of $\widehat{\vv}_\theta$ (justified below).

\noindent\textbf{Shared total derivative.} Following DiffusionNFT, we take
$\vu^{\mathrm{old}}$ to be an EMA of $\vu_\theta$, keeping the reference close
to the trainable model during online training. Our objective, however, compares
the induced predictors. If each predictor uses its own total derivative, their
difference is
\begin{equation}
\vV_\theta(\vx_t,s,t)-\vV^{\mathrm{old}}(\vx_t,s,t)
={}\vu_\theta(\vx_t,s,t)-\vu^{\mathrm{old}}(\vx_t,s,t)+(t-s)\left(
\tfrac{\mathrm{d}}{\mathrm{d}t}\vu_\theta
-\tfrac{\mathrm{d}}{\mathrm{d}t}\vu^{\mathrm{old}}
\right).
\end{equation}
The EMA keeps the average-velocity difference
$\vu_\theta-\vu^{\mathrm{old}}$ small, but it does not control the difference
between their derivatives. This additional term destabilizes
$\vV_\theta-\vV^{\mathrm{old}}$ and causes training to collapse
(\cref{fig:shared}). We therefore use the same derivative
$\tfrac{\mathrm{d}}{\mathrm{d}t}\vu^{\mathrm{old}}$ for both predictors, so the
derivative-difference term cancels exactly and leaves $\vV_\theta(\vx_t,s,t)-\vV^{\mathrm{old}}(\vx_t,s,t)=\vu_\theta(\vx_t,s,t)-\vu^{\mathrm{old}}(\vx_t,s,t).$ This choice also saves computation because the finite-difference derivative is
evaluated once for $\vu^{\mathrm{old}}$ rather than separately for both
predictors.

\noindent\textbf{Conditional-velocity direction.} The idealized direction
$\widehat{\vv}_\theta(\vx_t,t)=\vu_\theta(\vx_t,t,t)$ in \cref{eq:V_theta} 
requires an extra network evaluation. We instead reuse the forward-process conditional velocity
$\vv_t=\dot\alpha_t\vx_0+\dot\sigma_t\epsilonv$, which is already available at no extra cost. In practice, it is used to compute the displaced points $\vx_{t\pm\Delta t}$ in \cref{eq:finite_diff}. 
This choice is decisive: tying the direction to the shifting $\widehat{\vv}_\theta$ collapses training,
whereas $\vv_t$ stays stable (\cref{fig:training_curves}).

\section{Experiments}
\label{sec:experiment}

\subsection{Implementation Details}
\label{sec:impl}

\noindent\textbf{Models.}
We validate MeanFlowNFT on both image and video generation. For image generation, we use
Stable Diffusion~3.5-Medium (\sdmodel)~\citep{esser2024scaling} at
$512{\times}512$ for training and $1024{\times}1024$ for evaluation following DiffusionNFT~\citep{diffusionnft}. For video generation, we use \wanmodel~1.3B~\citep{wan2025wanopenadvancedlargescale} at
$480$p with $81$ frames. In each case, the MeanFlow policy is obtained by
distilling the base model with AnyFlow~\citep{gu2026anyflowanystepvideodiffusion}. For video generation, we directly
adopt the publicly released AnyFlow checkpoint. Following
DiffusionNFT~\citep{diffusionnft}, the entire pipeline is \emph{CFG-free}
\citep{ho2021classifierfree}. At inference, the model is deployed with its native
MeanFlow few-step sampler with different sampling steps.

\noindent\textbf{Rewards.}
For image RL, we train with \clipscore~\citep{clipscore}, \pickscore~\citep{pickscore},
and \hpstwo~\citep{hpsv2} on \pickscore prompt set. For video RL, we train
with \nmfmt{HPSv3-general} and \nmfmt{HPSv3-percentile}~\citep{hpsv3} together with the
motion-quality (\nmfmt{MQ}) and text-alignment (\nmfmt{TA}) scores of
\videoalign~\citep{liu2025improvingvideogenerationhuman}, following the reward setup of
\longcat~\citep{meituanlongcatteam2025longcatvideotechnicalreport}. Here, the training prompts are those used by
DanceGRPO~\citep{xue2025dancegrpo}. More details are deferred to the \cref{app:impl}.

\noindent\textbf{Training.}
We finetune with LoRA~\citep{lora} (rank $32$, scaling factor $64$) applied to all linear
layers in the attention blocks. We set the
guidance strength to $\beta{=}0.1$ and update $\vu^{\mathrm{old}}$ with the
same EMA schedule in DiffusionNFT~\citep{diffusionnft}, $\eta_i{=}\min(0.001\,i,\,0.5)$ ($i$ denotes the $i^{\text{th}}$ update step), and weight the Kullback--Leibler (KL)
regularization term by $10^{-4}$.
Optimization uses AdamW~\citep{loshchilov2018decoupled} with a constant learning rate of $3{\times}10^{-6}$. For
each prompt, we adopt the generator with $4$ steps to collect rollouts. For \sdmodel we use a group size of
$24$ with $48$ groups per update on $8$ NVIDIA~H20 GPUs. For \wanmodel we use a group
size of $16$ with $8$ groups per epoch on $32$ NVIDIA~H20 GPUs.

\noindent\textbf{Evaluation.}
For images, we report
\imagereward~\citep{xu2023imagereward}, \clipscore, \aesthetic~\citep{schuhmann2022aesthetics}, \pickscore, \hpstwo, \hpsthree~\citep{hpsv3},
\genevaltwo~\citep{kamath2025geneval2addressingbenchmark}, and
\ocrscore~\citep{flowgrpo}. We evaluate \hpsthree and \ocrscore following
\citet{flowgrpo,xue2025dancegrpo}, adopt the official setting for \genevaltwo, and
compute the remaining metrics on \drawbench~\citep{saharia2022photorealistictexttoimagediffusionmodels}. For video, we report
\vbench~\citep{huang2024vbench} together with the four video training metrics on a
held-out set of $256$ prompts that are excluded from training.

\subsection{Main Results}
\label{sec:main_results}

\begin{table}[t]
\centering
\setlength{\tabcolsep}{4.5pt}
\renewcommand{\arraystretch}{1.2}
\caption{Main results on \sdmodel for image generation with $1024{\times}1024$ resolution. 
Among few-step models, \best{bold} and \second{underline} denote the best and second-best results.}
\label{tab:sd35_main}
\resizebox{\textwidth}{!}{%
\begin{tabular}{l c c c c c c c c}
\toprule
\textbf{Method}
& \nmfmt{ImageReward}$\uparrow$ & \nmfmt{CLIPScore}$\uparrow$ & \nmfmt{Aesthetic}$\uparrow$
& \nmfmt{PickScore}$\uparrow$ & \nmfmt{HPSv2}$\uparrow$ & \nmfmt{HPSv3}$\uparrow$
& \nmfmt{GenEval2}$\uparrow$ & \nmfmt{OCR}$\uparrow$ \\
\midrule
\rowcolor{colGray!16}\multicolumn{9}{c}{\textit{Multi-step models}\defstep{40}} \\
\sdmodel (w/o CFG)        & -0.4770 & 0.2391 & 5.1514 & 20.6587 & 0.2109 & 3.4601 & 0.1171 & 0.1439 \\
\quad + DiffusionNFT      & 1.4066 & 0.2889 & 5.9647 & 23.6440 & 0.3236 & 13.5959 & 0.2613 & 0.9098 \\
\sdmodel (w/ CFG)         & 0.9253 & 0.2880 & 5.3811 & 22.4638 & 0.2822 & 11.2489 & 0.2038 & 0.5449 \\
\quad + Flow-GRPO         & 1.0193 & 0.2912 & 5.2843 & 22.4783 & 0.2657 & 9.8059 & 0.2638 & 0.6282 \\
\midrule
\rowcolor{colGray!16}\multicolumn{9}{c}{\textit{Few-step models}\defstep{4}} \\
DMD               & 0.9241 & 0.2841 & 5.5055 & 22.2807 & 0.2874 & 11.7156 & 0.2042 & 0.3996 \\ %
CDM               & 1.0307 & 0.2819 & 5.5721 & 22.4160 & 0.2976 & 12.5190 & 0.2020 & 0.3225 \\ %
AnyFlow           & 1.1125 & 0.2886 & 5.4203 & 22.4789 & 0.2969 & 12.0691 & 0.1896 & 0.4520 \\ %
\midrule
RTDMD              & 1.2315 & 0.2775 & \best{6.1290} & \second{23.2825} & \second{0.3265} & \best{13.9253} & 0.2042 & 0.2965 \\
$R_\text{dm}$      & 0.7236 & 0.2720 & 5.7537 & 22.0748 & 0.2759 & 11.2115 & 0.1619 & 0.3759 \\
DMD + DiffusionNFT & 0.7158 & 0.2843 & 5.3784 & 21.9571 & 0.2708 & 9.6985 & 0.2249 & 0.4865 \\ %
CDM + DiffusionNFT & 0.1455 & 0.2745 & 4.8335 & 21.3849 & 0.2143 & -3.2834 & 0.2103 & 0.3303 \\ %
AnyFlow + DiffusionNFT & \second{1.2394} & \second{0.2915} & \second{5.9489} & 23.0876 & 0.2919 & 12.1378 & \second{0.2335} & \second{0.5948} \\ %
\rowcolor{colSky!22}\textbf{MeanFlowNFT (Ours)} & \best{1.4504} & \best{0.2967} & 5.9275 & \best{23.5019} & \best{0.3269} & \second{13.8826} & \best{0.2375} & \best{0.6534} \\ %
\bottomrule
\end{tabular}%
}
\vspace{-0.1in}
\end{table}

\noindent\textbf{Image generation.}
\cref{tab:sd35_main} reports the quantitative comparison on \sdmodel. Among
all few-step models, MeanFlowNFT attains the best results on $6$ of the
$8$ metrics. It clearly outperforms the few-step distillation baselines DMD~\citep{dmd},
CDM~\citep{cdm}, and AnyFlow~\citep{gu2026anyflowanystepvideodiffusion}, and also beats the
recent few-step RL methods $R_{\mathrm{dm}}$~\citep{fan2026rtextdmreconceptualizingdistributionmatching} and
RTDMD~\citep{huang2026reinforcingfewstepgeneratorsrewardtilted} on most
metrics (\emph{e.g.}, OCR $0.65$ \emph{vs.}\ $0.30$ against RTDMD). Remarkably, with only $4$ sampling steps MeanFlowNFT already
matches or exceeds the $40$-step forward-process RL method
DiffusionNFT~\citep{diffusionnft} on several reward metrics (ImageReward
$1.45$ \emph{vs.}\ $1.41$, CLIPScore $0.297$ \emph{vs.}\ $0.289$, \emph{etc.}) at $10\times$ fewer
function evaluations. Qualitatively (\cref{fig:img_008,fig:img_014,fig:img_010}), MeanFlowNFT
produces more faithful samples with fewer reward-hacking artifacts, such as
over-saturated colors and implausible object scales, than DiffusionNFT and RTDMD.

\begin{figure}[!ht]
    \centering
    \includegraphics[width=\linewidth]{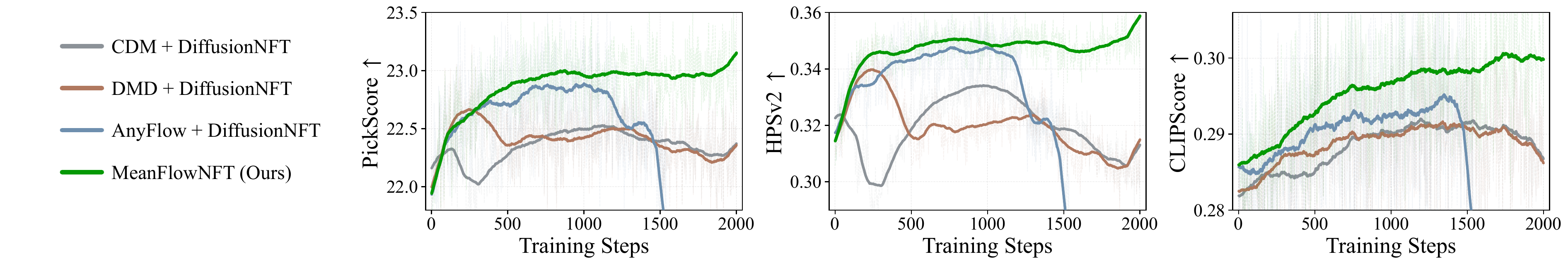}
    \vspace{-0.2in}
    \caption{Training reward curves of MeanFlowNFT compared with the baselines on \sdmodel.}
    \label{fig:baseline_curves}
    \vspace{-0.1in}
\end{figure}
Additionally, we apply DiffusionNFT directly to few-step generators. AnyFlow trains
an average-velocity network, while DMD and CDM are trained by distribution
matching~\citep{dmd} rather than flow matching. In each case the network does not
predict the instantaneous velocity as a posterior mean, so the linearity behind
DiffusionNFT's improvement guarantee is absent and no policy improvement can be
derived (\cref{sec:problem}). As a result, AnyFlow$+$DiffusionNFT\footnote{AnyFlow$+$DiffusionNFT plugs the average velocities $\vu^{\mathrm{old}}$ and $\vu_\theta$ into \cref{eq:diffnft_loss} in place of the instantaneous velocities $\vv^{\mathrm{old}}$ and $\vv_\theta$, and optimizes it exactly as DiffusionNFT does.} and
DMD/CDM$+$DiffusionNFT fall far short of MeanFlowNFT and train very unstably,
collapsing early (\cref{fig:baseline_curves}). For instance, CDM$+$DiffusionNFT
diverges within $400$ steps.

\begin{table}[!ht]
\vspace{-0.1in}
\centering
\setlength{\tabcolsep}{4.5pt}
\renewcommand{\arraystretch}{1.2}
\caption{Main results on \wanmodel 1.3B for video generation. We report
\nmfmt{VBench} scores and four metrics on $256$ held-out prompts
(\nmfmt{HPSv3-G}/\nmfmt{HPSv3-P}: HPSv3 general/percentile;
\nmfmt{MQ}/\nmfmt{TA}: VideoAlign motion-quality/text-alignment).
Among few-step models, \best{bold} and \second{underline} denote the best and second-best results.}
\vspace{0.02in}
\label{tab:wan_video}
{\scriptsize%
\begin{tabular}{l c c c c c c c}
\toprule
\multirow{2}{*}{\textbf{Method}} & \multicolumn{3}{c}{\nmfmt{VBench}} & \multicolumn{4}{c}{$256$ held-out prompts} \\
\cmidrule(lr){2-4}\cmidrule(lr){5-8}
 & \nmfmt{Total}$\uparrow$ & \nmfmt{Quality}$\uparrow$ & \nmfmt{Semantic}$\uparrow$
 & \nmfmt{HPSv3-G}$\uparrow$ & \nmfmt{HPSv3-P}$\uparrow$ & \nmfmt{MQ}$\uparrow$ & \nmfmt{TA}$\uparrow$ \\
\midrule
\rowcolor{colGray!16}\multicolumn{8}{c}{\textit{Multi-step models}\defstep{50}} \\
\wanmodel 1.3B (w/ CFG) & 82.94 & 84.69 & 75.97 & 3.9099 & 8.2868 & 0.8684 & 1.2255 \\
\quad + \longcat RL & 82.57 & 84.44 & 75.10 & 4.7099 & 9.2730 & 0.5493 & 1.6321 \\
\midrule
\rowcolor{colGray!16}\multicolumn{8}{c}{\textit{Few-step models}\defstep{4}} \\
rCM     & 82.43 & 84.58 & 73.82 & 3.9660 & 8.7198 & 0.2740 & 1.6290 \\
DMD     & 82.64 & 84.65 & 74.57 & 3.8598 & 8.7955 & 0.1810 & 1.6845 \\
SC-DMD  & 83.36 & 84.76 & \best{77.77} & -- & -- & -- & -- \\
AnyFlow & \second{83.71} & \second{85.36} & 77.11 & \second{6.0536} & \second{10.450} & \second{0.7504} & \best{1.7356} \\
\midrule
\rowcolor{colSky!22}\textbf{MeanFlowNFT (Ours)} & \best{84.33} & \best{85.99} & \second{77.68} & \best{6.5959} & \best{10.793} & \best{0.9535} & \second{1.7235}  \\
\bottomrule
\end{tabular}%
}
\end{table}

\noindent\textbf{Video generation.}
\cref{tab:wan_video} further evaluates MeanFlowNFT on video generation with
\wanmodel~1.3B. Since there is currently no open-source few-step video-generation
RL baseline, we compare with few-step distillation methods
(rCM~\citep{rcm}, DMD~\citep{dmd}, SC-DMD~\citep{scdmd}\footnote{We use the numbers reported in the SC-DMD paper.}, AnyFlow~\citep{gu2026anyflowanystepvideodiffusion})
and the $50$-step flow-matching RL baseline
\longcat RL~\citep{meituanlongcatteam2025longcatvideotechnicalreport}\footnote{\url{https://huggingface.co/lightx2v/Wan2.1-T2V-1.3B-longcat-step1500}}. With only
$4$ sampling steps, MeanFlowNFT obtains the best few-step results on $5$ of the
$7$ reported metrics and improves over AnyFlow on VBench total, HPSv3, and
motion quality. It also outperforms \longcat RL on all reported metrics with far
fewer function evaluations, confirming its applicability to video MeanFlow models.

\begin{figure}[!ht]
\vspace{-0.1in}
\centering
\includegraphics[width=\linewidth]{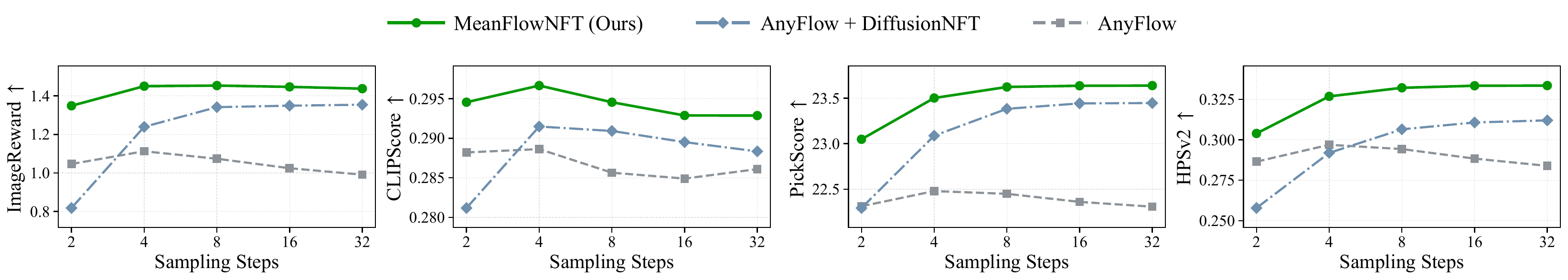}
\vspace{-0.2in}
\caption{Quantitative results of test-time scaling for MeanFlow-based methods on \sdmodel. More results for \wanmodel 1.3B and \sdmodel can be found in \cref{app:scaling}.}
\label{fig:scaling_main}
\vspace{-0.1in}
\end{figure}

\noindent\textbf{Test-time scaling.}
Since $\vu_\theta$ approximates the average velocity over any interval
$[s,t]$, MeanFlowNFT supports any-step inference via the sampler of
\cref{eq:meanflow_sampling}, sweeping the number of steps $N\in\{2,4,8,16,32\}$.
MeanFlowNFT exhibits the same test-time scaling trend as
AnyFlow~\citep{gu2026anyflowanystepvideodiffusion}: most metrics improve as
$N$ increases (\cref{fig:scaling_main}). More notably, MeanFlowNFT is
noticeably more step-consistent than AnyFlow, with samples that vary far less
across $N$ in both layout and content (\cref{fig:steps_image,fig:steps_video}).
As a more accurate average-velocity estimation yields more $N$-invariant sampling, this consistency
indicates that our RL improves AnyFlow.

\newcommand{\stepimg}[2]{{\color{black!35}\setlength{\fboxsep}{0pt}\fbox{\includegraphics[width=\dimexpr#1-2\fboxrule\relax]{#2}}}}
\begin{figure}[!ht]
\vspace{-0.1in}
\centering
\setlength{\tabcolsep}{1pt}\renewcommand{\arraystretch}{0.65}%
\newcommand{\sdw}{0.15\textwidth}%
\newcommand{\pcap}[1]{\multicolumn{6}{c}{\fontsize{6}{6.4}\selectfont\itshape ``#1''}}%
\begin{tabular}{@{}ccc@{\hspace{4pt}}ccc@{}}
\pcap{A pyramid made of falafel with a partial solar eclipse in the background.}\\[-1pt]
{\scriptsize 4 steps} & {\scriptsize 16 steps} & {\scriptsize 32 steps} & {\scriptsize 4 steps} & {\scriptsize 16 steps} & {\scriptsize 32 steps}\\[1.5pt]
\stepimg{\sdw}{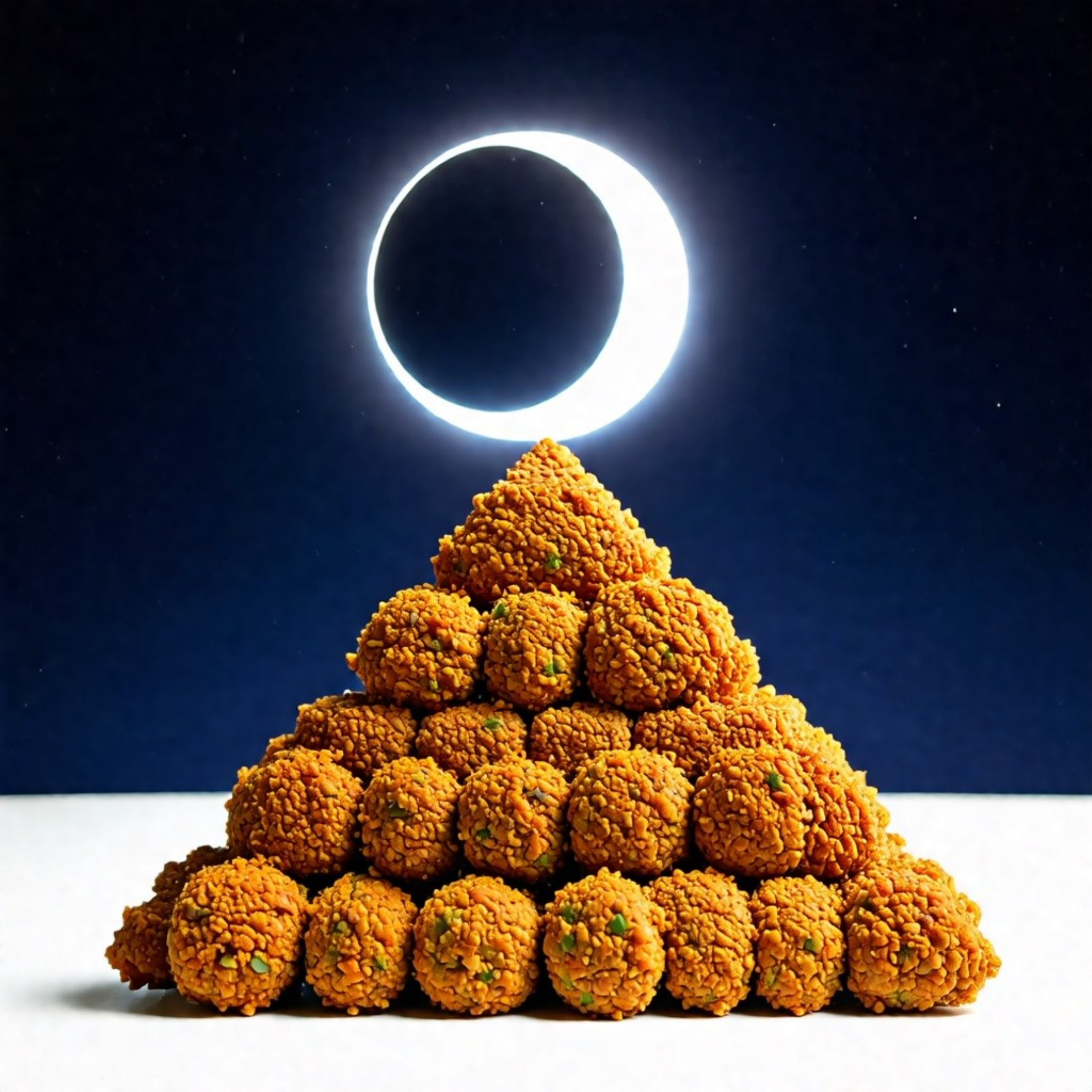} & \stepimg{\sdw}{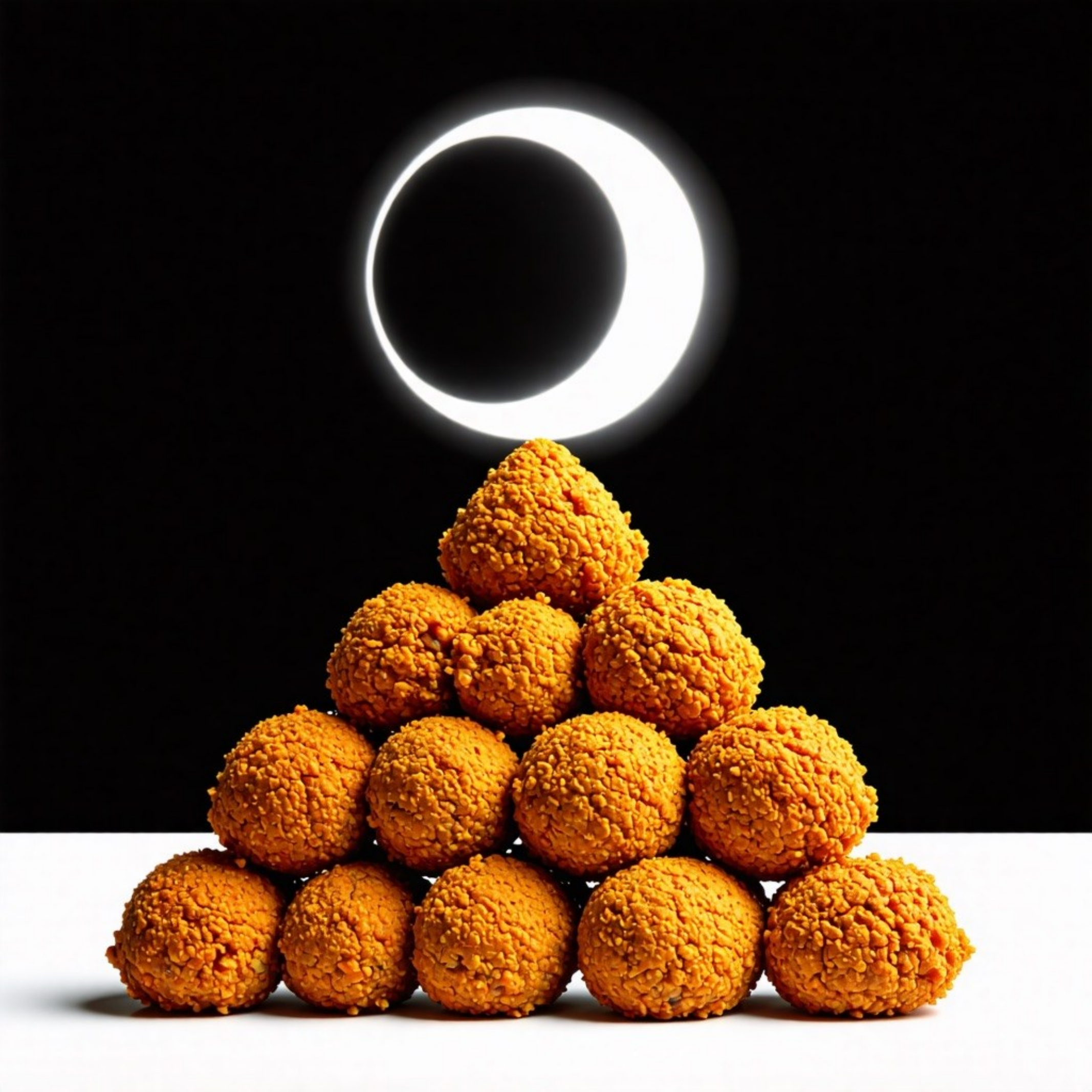} & \stepimg{\sdw}{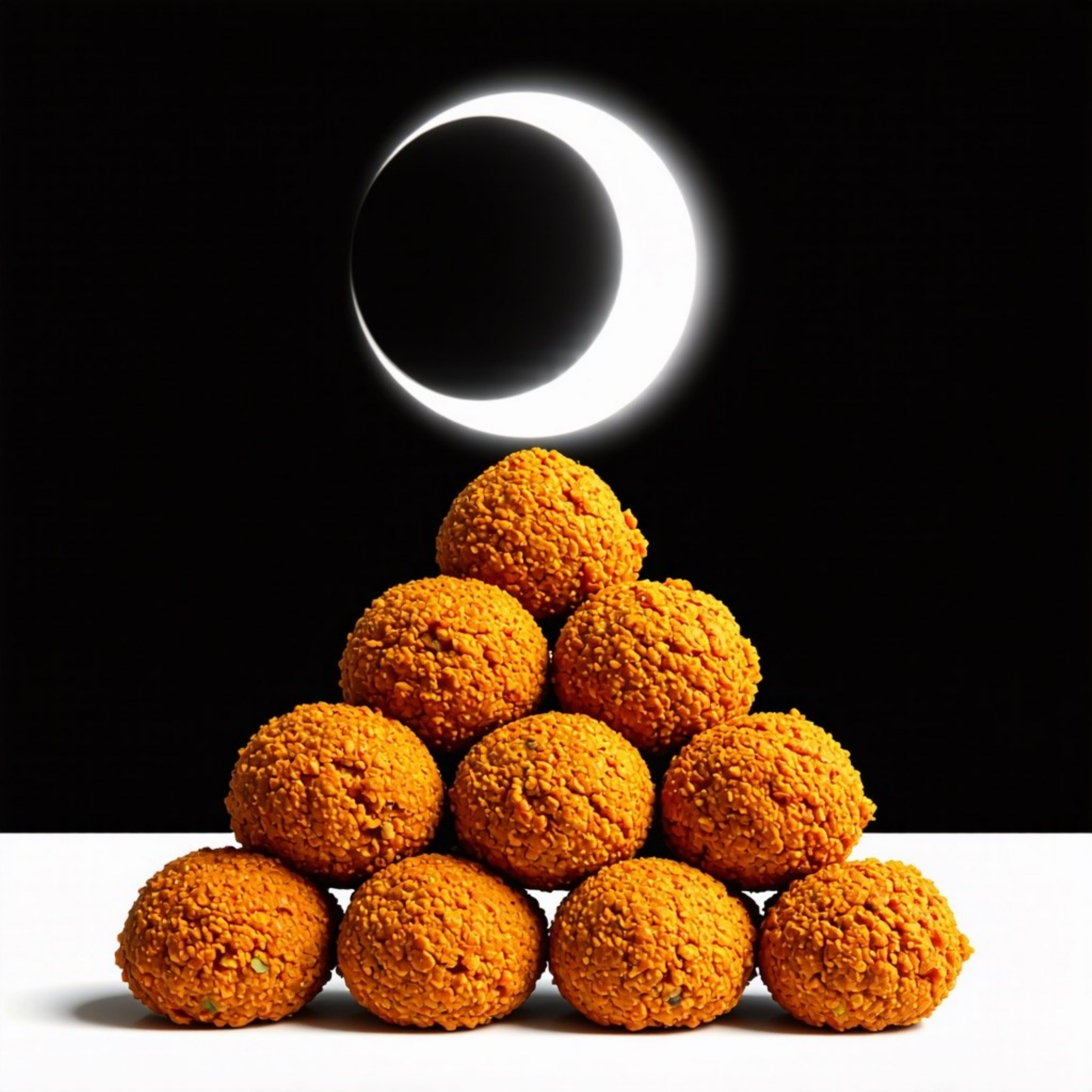} & \stepimg{\sdw}{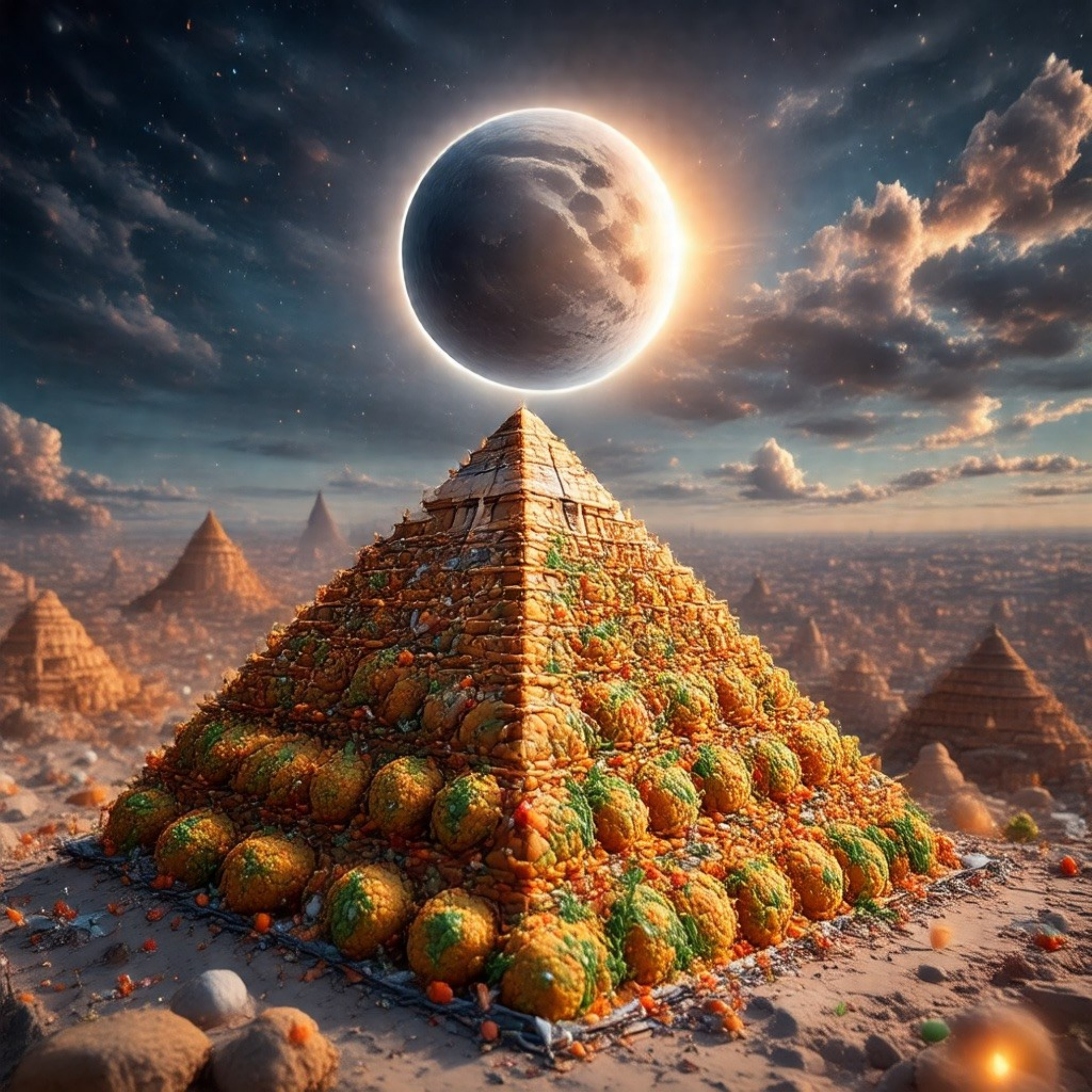} & \stepimg{\sdw}{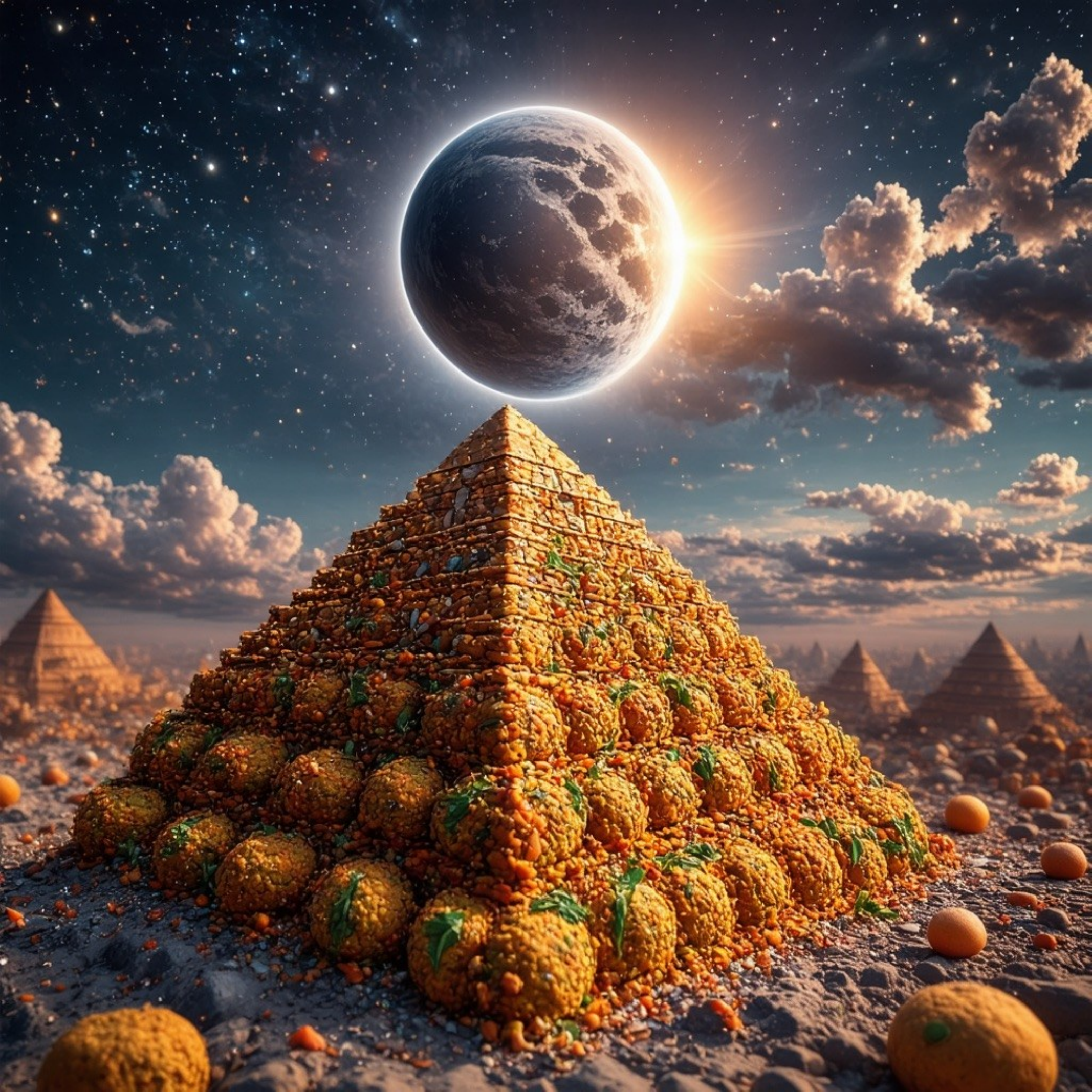} & \stepimg{\sdw}{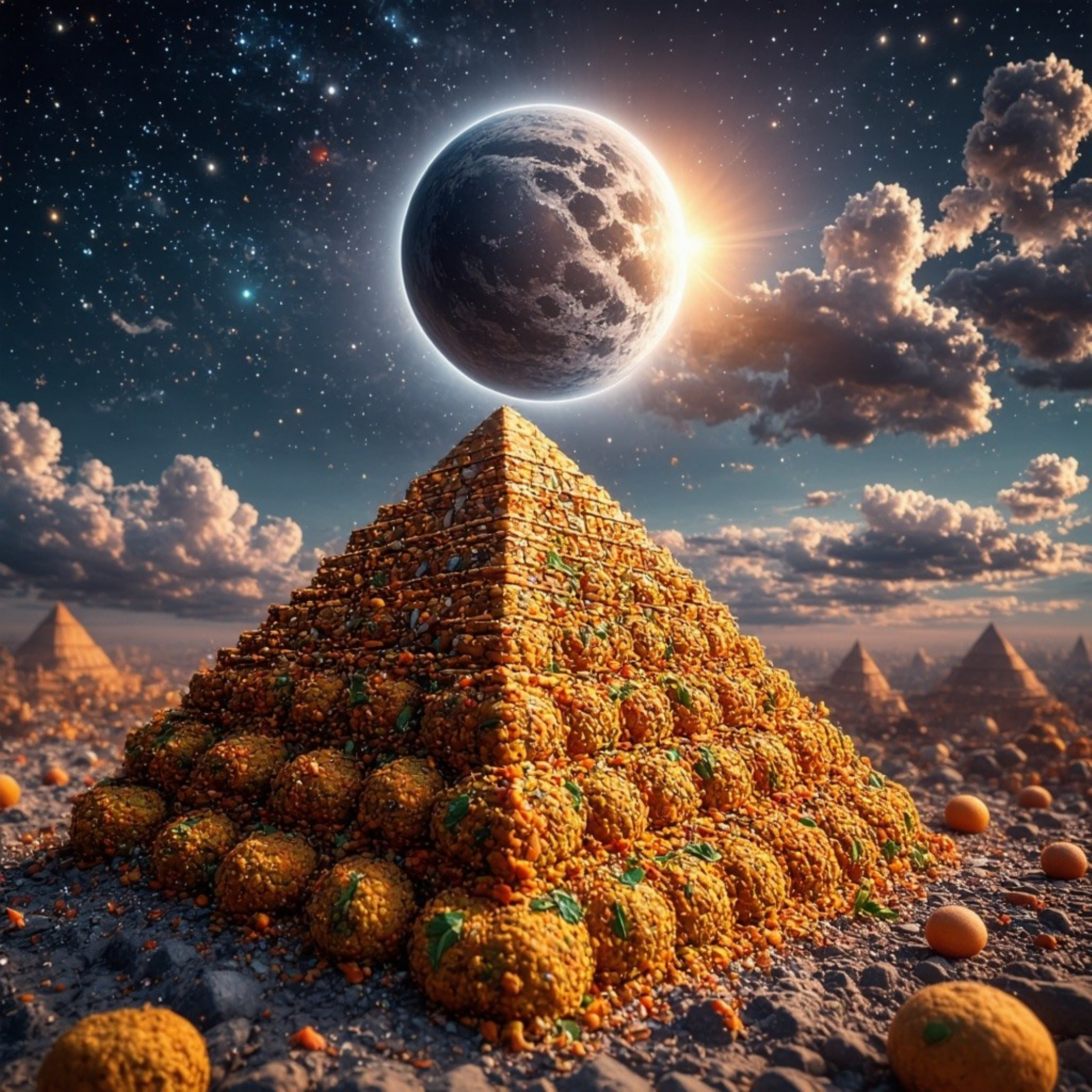}\\[3pt]
\pcap{Lego Arnold Schwarzenegger.}\\[-1pt]
{\scriptsize 4 steps} & {\scriptsize 16 steps} & {\scriptsize 32 steps} & {\scriptsize 4 steps} & {\scriptsize 16 steps} & {\scriptsize 32 steps}\\[1.5pt]
\stepimg{\sdw}{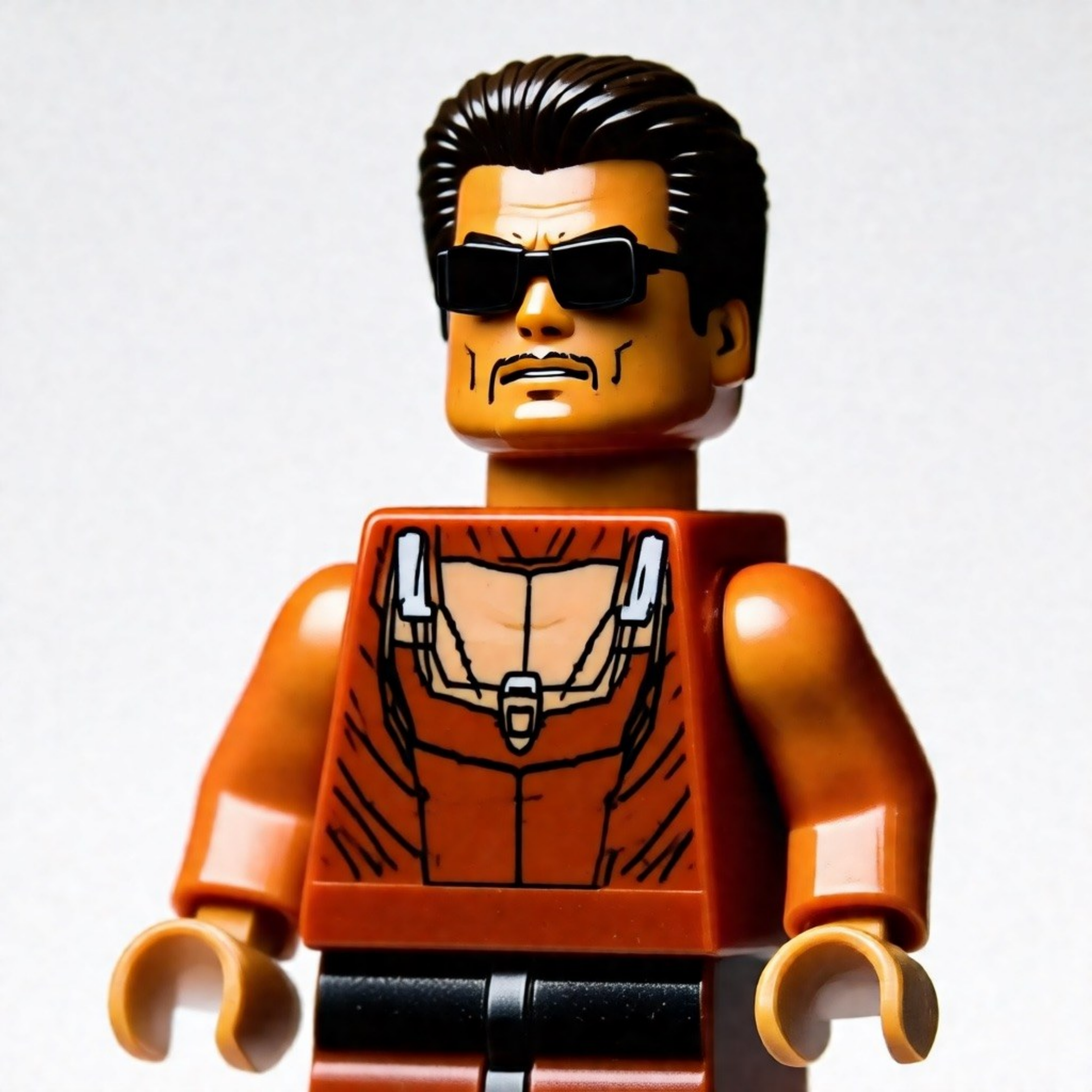} & \stepimg{\sdw}{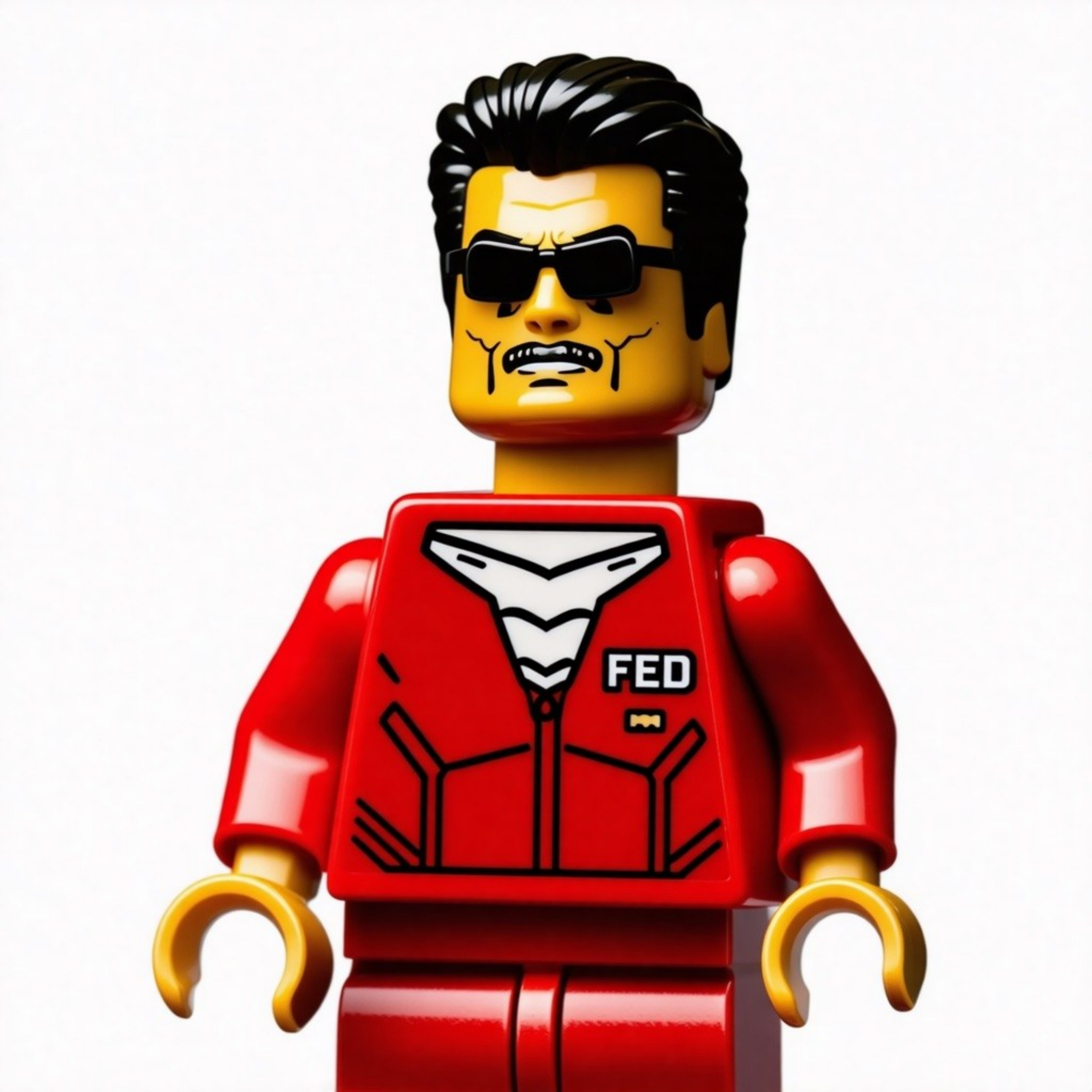} & \stepimg{\sdw}{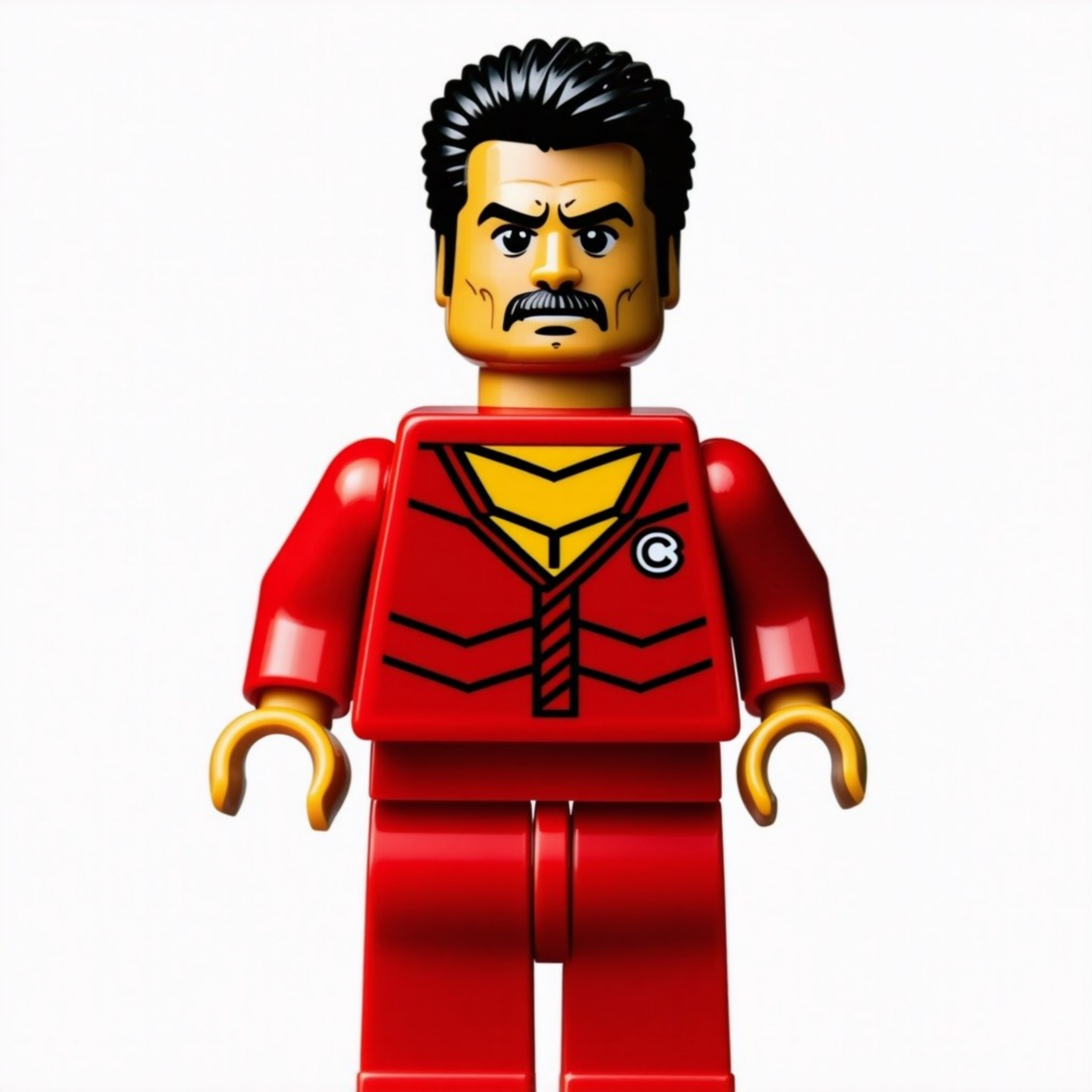} & \stepimg{\sdw}{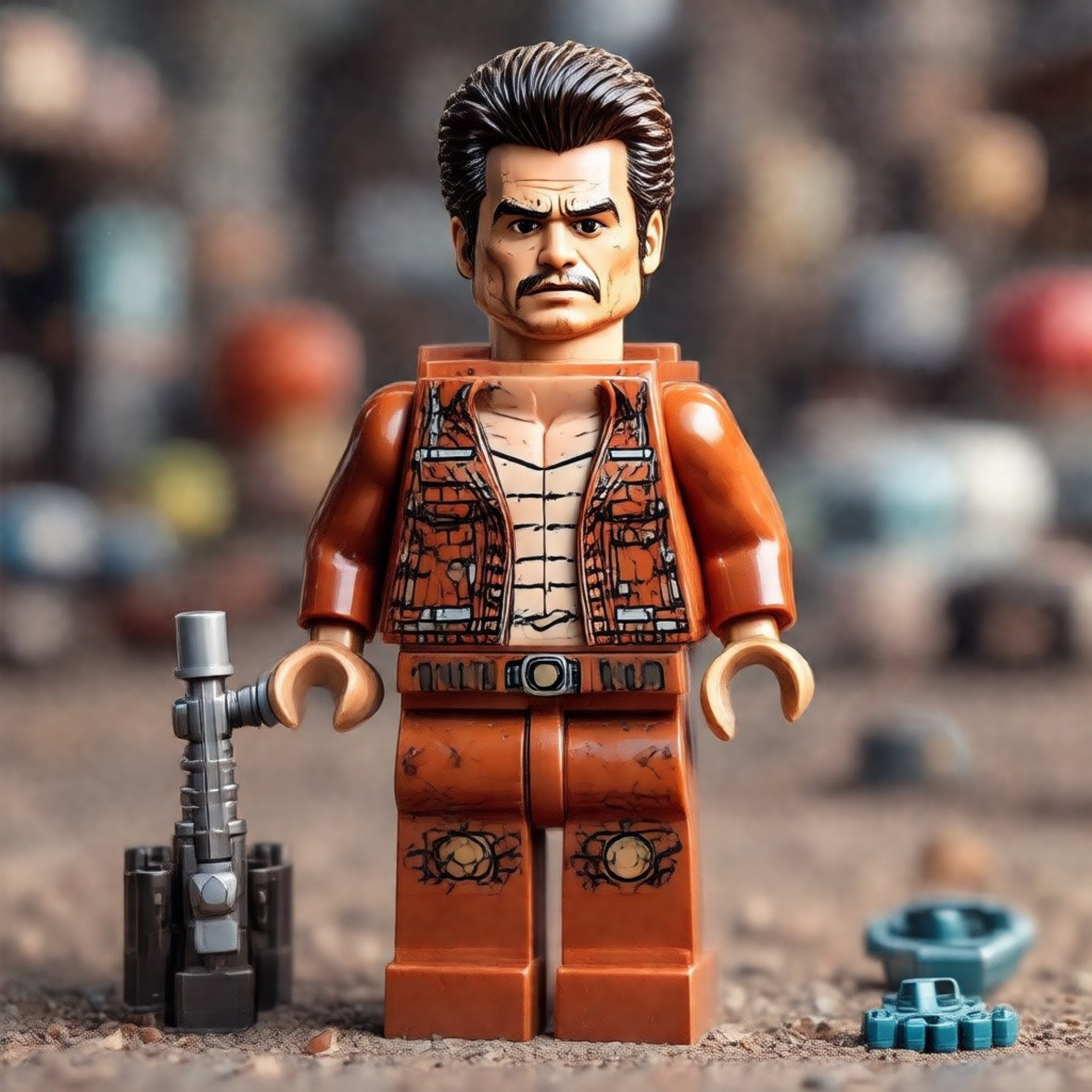} & \stepimg{\sdw}{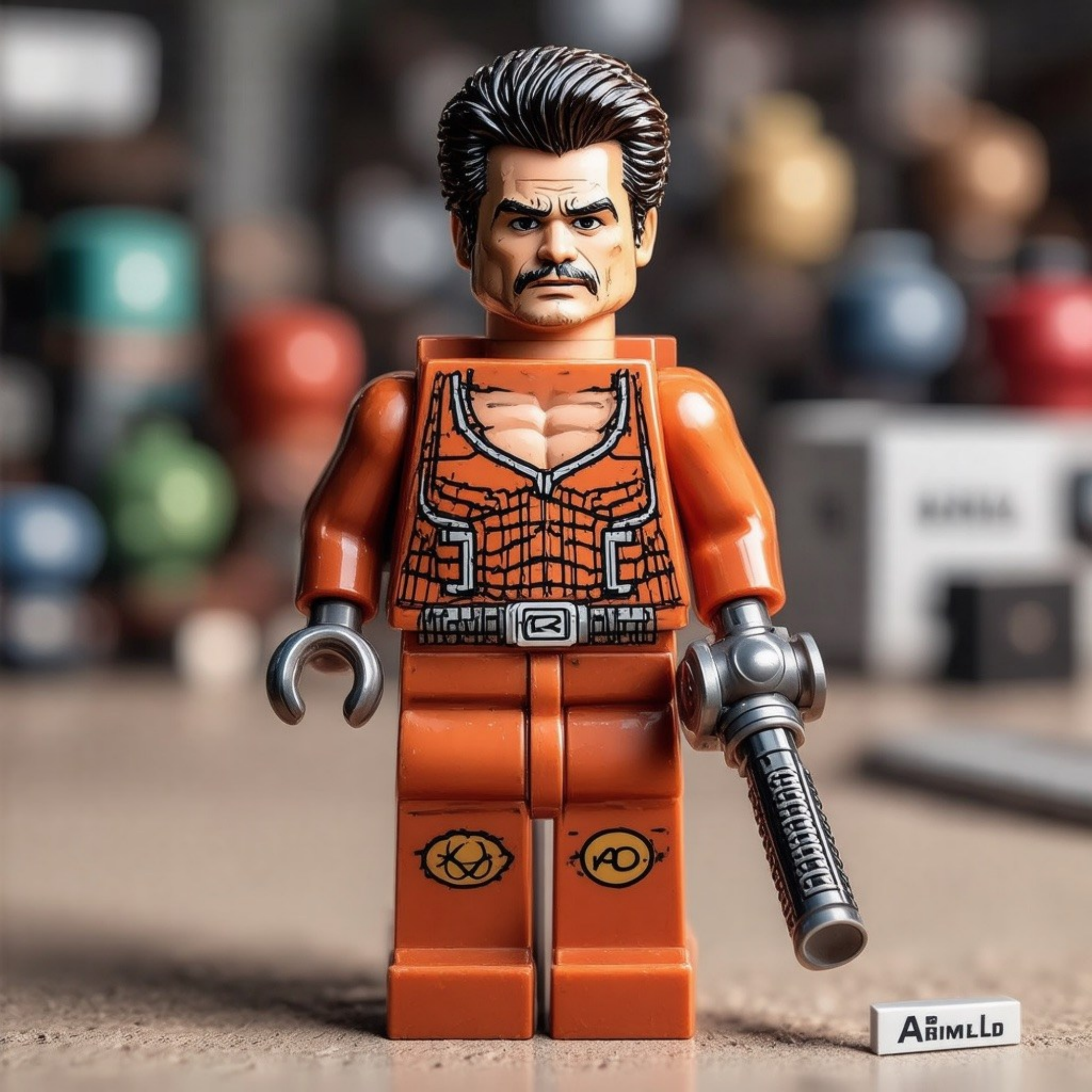} & \stepimg{\sdw}{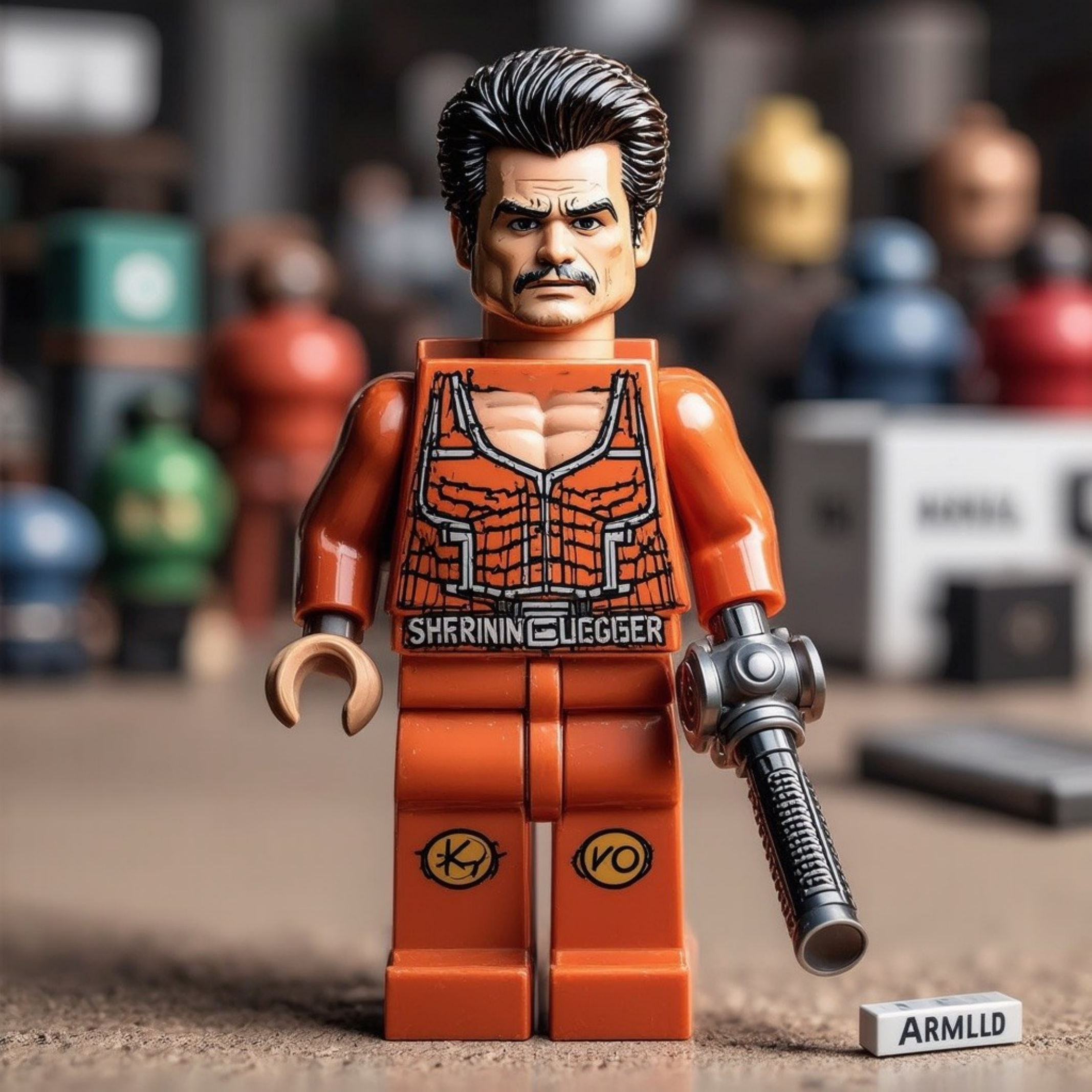}\\[1pt]
\multicolumn{3}{c}{\scriptsize (a) AnyFlow} & \multicolumn{3}{c}{\scriptsize (b) \textbf{MeanFlowNFT}}\\
\end{tabular}
\vspace{-0.1in}
\caption{Qualitative results of test-time scaling on \sdmodel.}
\vspace{-0.1in}
\label{fig:steps_image}
\end{figure}
\begin{figure}[!ht]
\centering
\captionsetup{skip=2pt}%
{\fontsize{6}{6.6}\selectfont\itshape ``A Mongol warrior in bright red-and-gold armor gallops on horseback, drawing a bow and firing an arrow across the open plain.''\par}\vspace{1pt}
\setlength{\tabcolsep}{0pt}\renewcommand{\arraystretch}{1.0}%
\newcommand{\wanw}{0.45\textwidth}%
\begin{tabular}{@{}>{\centering\arraybackslash}m{0.028\textwidth}@{\hspace{2pt}}>{\centering\arraybackslash}m{\wanw}@{\hspace{4pt}}>{\centering\arraybackslash}m{\wanw}@{}}
\rotatebox[origin=c]{90}{\scriptsize 4 steps} & \stepimg{\wanw}{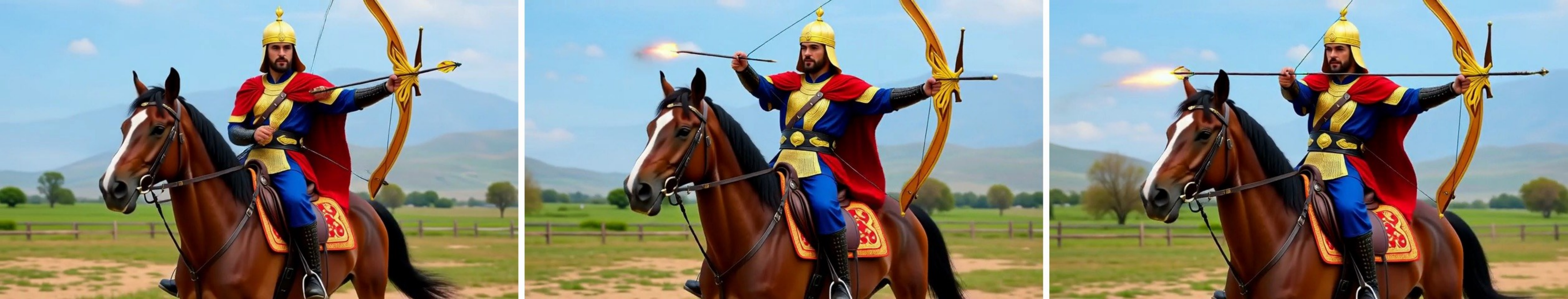} & \stepimg{\wanw}{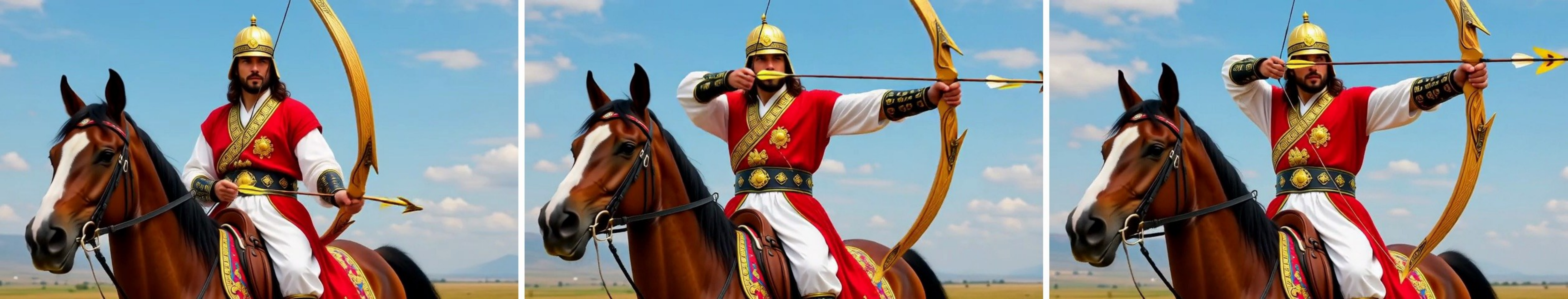}\\[1pt]
\rotatebox[origin=c]{90}{\scriptsize 16 steps} & \stepimg{\wanw}{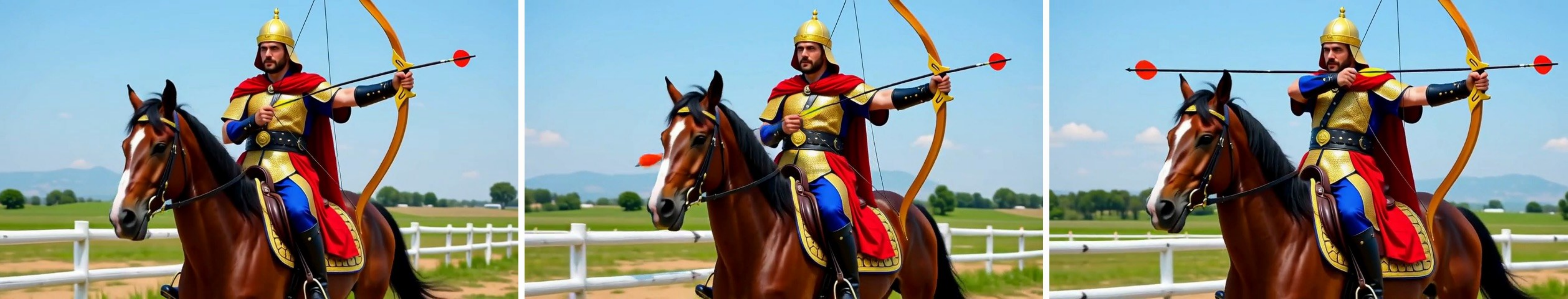} & \stepimg{\wanw}{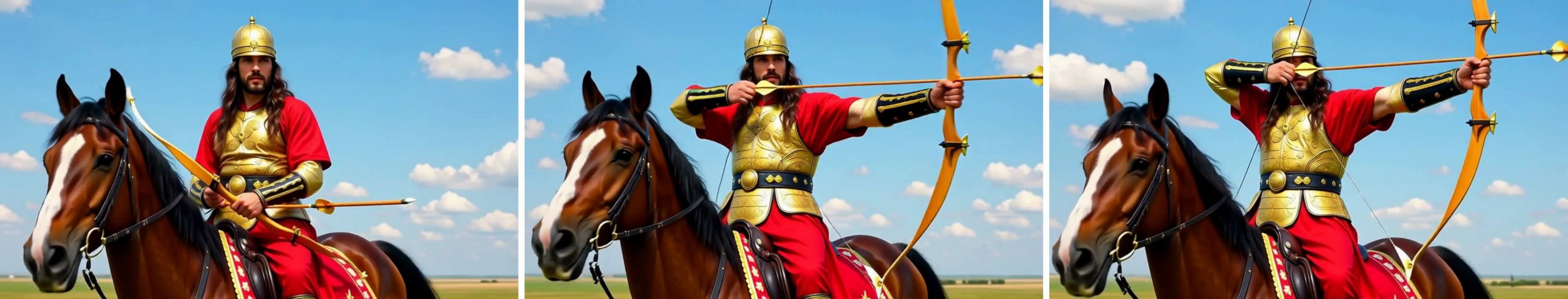}\\[1pt]
\rotatebox[origin=c]{90}{\scriptsize 32 steps} & \stepimg{\wanw}{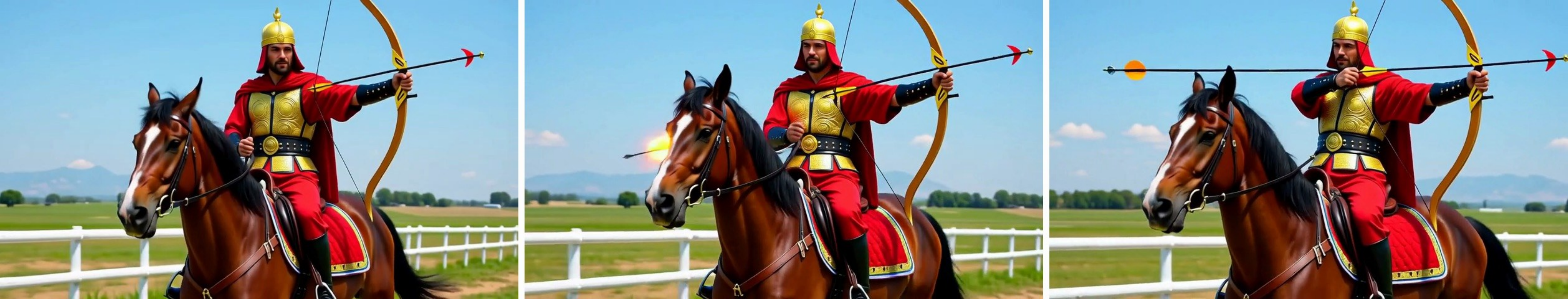} & \stepimg{\wanw}{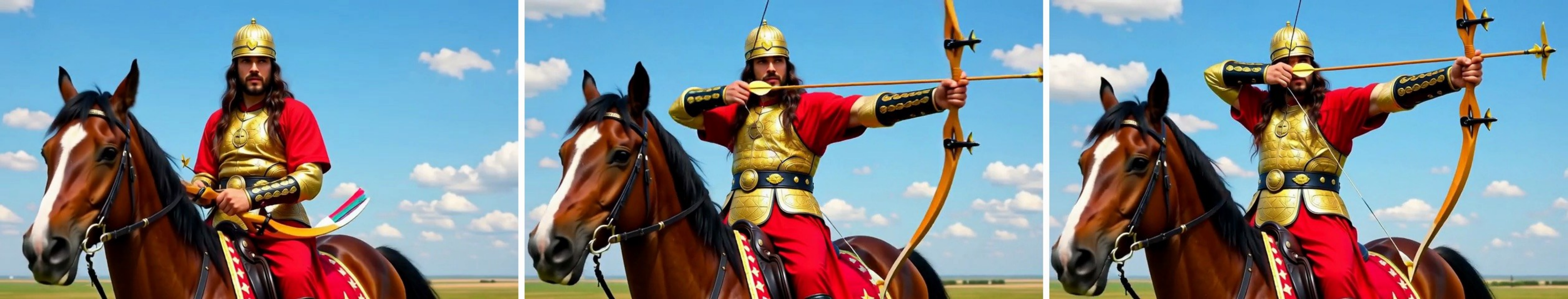}\\[1pt]
 & {\scriptsize (a) AnyFlow} & {\scriptsize (b) \textbf{MeanFlowNFT}}\\
\end{tabular}
\caption{Qualitative results of test-time scaling on \wanmodel~1.3B.}
\label{fig:steps_video}
\vspace{-0.1in}
\end{figure}

\subsection{Analysis}
\label{sec:analysis}
We further analyze several design choices in MeanFlowNFT on \sdmodel.

\noindent\textbf{Effect of shared $\tfrac{\mathrm{d}}{\mathrm{d}t}\vu^{\mathrm{old}}$.}
We examine whether the trainable and reference predictors should share a single
derivative term $\tfrac{\mathrm{d}}{\mathrm{d}t}\vu^{\mathrm{old}}$ or form one each, and find sharing
essential for stability. With a shared derivative, reward rises smoothly and
monotonically across \pickscore, \hpstwo, and \clipscore, whereas forming the two
derivatives independently lets reward rise briefly before collapsing
(\cref{fig:shared_reward}). The induced deviation $\|\vV_\theta-\vV^{\mathrm{old}}\|_2^2$
exposes the mechanism, which grows by orders of magnitude without sharing
(\cref{fig:old_deviate}). The gap between the two predictors is then governed by the
uncontrolled derivative-difference term rather than the EMA-bounded
$\vu_\theta-\vu^{\mathrm{old}}$ signal (\cref{sec:algorithm}). This reference drift
is what destabilizes training, and sharing eliminates it exactly.
\begin{figure}[!ht]
    \vspace{-0.15in}
    \centering
    \begin{subfigure}[b]{0.72\textwidth}
    \centering
    \includegraphics[width=\linewidth]{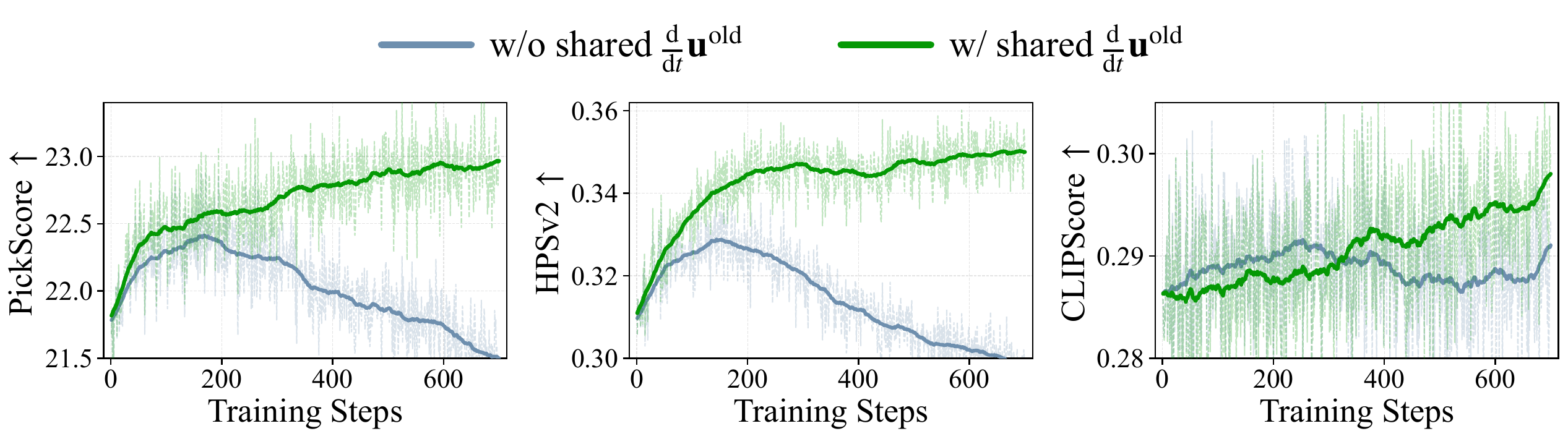}
    \caption{Training reward with and without shared $\tfrac{\mathrm{d}}{\mathrm{d}t}\vu^\mathrm{old}$.}
    \label{fig:shared_reward}
    \end{subfigure}
    \hfill
    \begin{subfigure}[b]{0.26\textwidth}
    \centering
    \includegraphics[width=\linewidth]{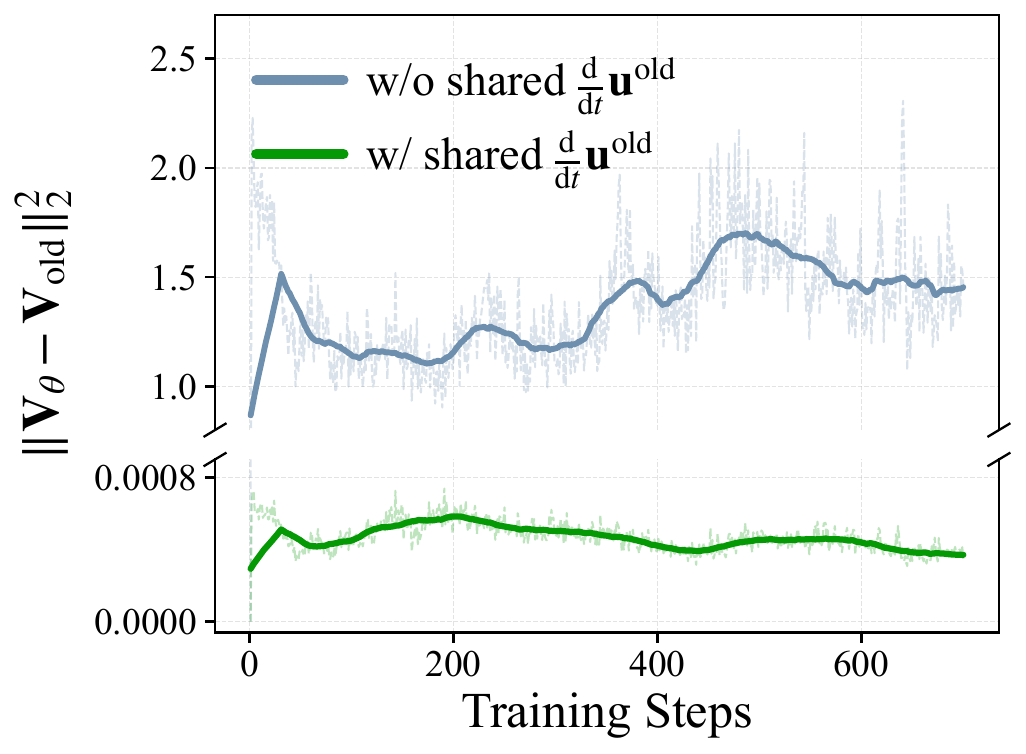}
    \caption{Velocity deviation.}
    \label{fig:old_deviate}
    \end{subfigure}
    \vspace{-0.1in}
    \caption{Effect of sharing the derivative term $\tfrac{\mathrm{d}}{\mathrm{d}t}\vu^\mathrm{old}$ between $\vV^{\mathrm{old}}$ and $\vV_\theta$ on \sdmodel.}
    \label{fig:shared}
    \vspace{-0.15in}
    \end{figure}

\noindent\textbf{Direction for $\vx_{t\pm\Delta t}$.}
We compare the two directions that can drive $\vx_{t\pm\Delta t}$ in the finite-difference derivative (\cref{sec:algorithm}): $\widehat{\vv}_\theta=\vu_\theta(\vx_t,t,t)$ (\cref{eq:V_theta}),
and the model-free conditional velocity $\vv_t=\dot\alpha_t\vx_0+\dot\sigma_t\epsilonv$ (\cref{alg:mfnft}).
Using $\vv_t$ improves reward steadily across all three metrics, whereas anchoring
the direction to the trainable $\widehat{\vv}_\theta$ peaks early and then degrades
(\cref{fig:training_curves}). As $\vv_t$ is model-free and saves an additional network
evaluation, we adopt it by default.
\begin{figure}[!ht]
    \vspace{-0.1in}
    \centering
    \includegraphics[width=0.95\linewidth]{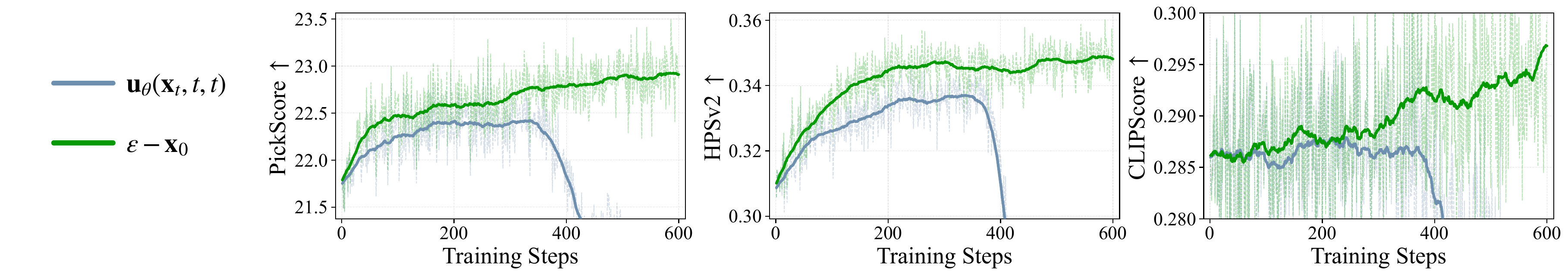}
    \vspace{-0.12in}
    \caption{Training reward curves for MeanFlowNFT with different choices of direction for $\vx_{t\pm\Delta t}$.}
    \label{fig:training_curves}
    \vspace{-0.15in}
    \end{figure}

\noindent\textbf{Training only on $s{=}t$ pairs.}
MeanFlow training mixes two kinds of pairs. Zero-length pairs ($s{=}t$) supervise
the instantaneous velocity $\vu_\theta(\vx_t,t,t)$, as in standard flow matching.
Finite-interval pairs ($s{<}t$) supervise the average velocity behind few-step
sampling. Training only on $s{=}t$ pairs keeps just the instantaneous velocity and
drops this average velocity. The $4$-step training rollouts then degrade, so the
reward briefly rises and collapses. In contrast, our default keeps both pair types throughout training and improves
stably (\cref{fig:rho_curves}).
\begin{figure}[!ht]
    \vspace{-0.1in}
    \centering
    \includegraphics[width=0.95\linewidth]{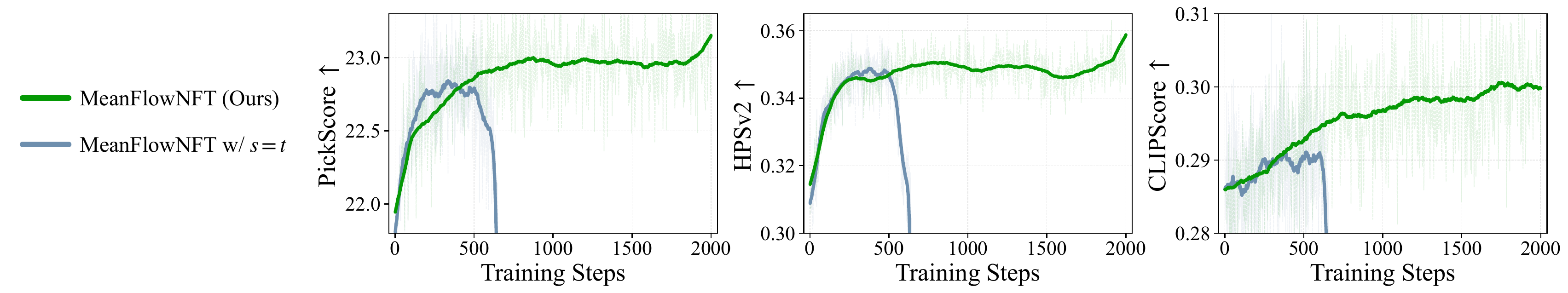}
    \vspace{-0.12in}
    \caption{Training reward curves for MeanFlowNFT and a variant trained only on $s{=}t$ pairs, \emph{i.e.}, only the instantaneous velocity $\vu_\theta(\vx_t,t,t)$.}
    \label{fig:rho_curves}
    \vspace{-0.15in}
\end{figure}

\section{Conclusion}
\label{sec:conclusion}
In this work, we presented MeanFlowNFT, the first forward-process RL method tailored for MeanFlow models. 
MeanFlow predicts an average velocity, but the DiffusionNFT framework we build on works with the instantaneous velocity. To bridge this gap, we use the MeanFlow
identity to optimize the reward on the instantaneous velocity while keeping the average-velocity network
for fast sampling. We prove that, in an idealized setting, this reaches DiffusionNFT's improved policy
and carries the gain to the average-velocity model. Experiments show that MeanFlowNFT consistently
improves MeanFlow baselines and can even surpass multi-step RL with only a few sampling steps.

\section{Limitations and Future Work}
\label{sec:limitation}
In terms of limitations, we only explore DiffusionNFT-style forward-process RL on MeanFlow. We do not
study other forward-process objectives, such as RAM~\citep{ram} and AWM~\citep{xue2025advantage}. Since
these methods also act on the instantaneous velocity and rely on a frozen reference, we expect our
induced instantaneous-velocity construction and practical implementation to carry over to them.
Additionally, we consider only MeanFlow, one instance of the broader family of flow-map models. These
models learn a direct map between two time points, so one or a few steps replace iterative sampling.
MeanFlow realizes this by predicting the average velocity over an interval, while other instances
include shortcut models~\citep{shortcut-model}, consistency trajectory models~\citep{kim2024consistency}, \emph{etc}.
Because the induced-predictor construction in \cref{eq:V_theta} is not specific to MeanFlow, we believe
our approach can also extend to them. In short, both directions
lie outside the scope of this paper, and we leave them to future work.

\bibliography{ref}

\begin{thebibliography}{68}
\providecommand{\natexlab}[1]{#1}
\providecommand{\url}[1]{\texttt{#1}}
\expandafter\ifx\csname urlstyle\endcsname\relax
  \providecommand{\doi}[1]{doi: #1}\else
  \providecommand{\doi}{doi: \begingroup \urlstyle{rm}\Url}\fi

\bibitem[Bergmeister et~al.(2026)Bergmeister, Jegelka, Nüsken, Domingo-Enrich, and Pidstrigach]{ram}
Andreas Bergmeister, Stefanie Jegelka, Nikolas Nüsken, Carles Domingo-Enrich, and Jakiw Pidstrigach.
\newblock Reinforce adjoint matching: Scaling rl post-training of diffusion and flow-matching models, 2026.
\newblock URL \url{https://arxiv.org/abs/2605.10759}.

\bibitem[Black et~al.(2024)Black, Janner, Du, Kostrikov, and Levine]{ddpo}
Kevin Black, Michael Janner, Yilun Du, Ilya Kostrikov, and Sergey Levine.
\newblock Training diffusion models with reinforcement learning.
\newblock In \emph{The Twelfth International Conference on Learning Representations}, 2024.
\newblock URL \url{https://openreview.net/forum?id=YCWjhGrJFD}.

\bibitem[Chen et~al.(2026)Chen, Huang, Sun, Liu, Zhu, Qu, Cheng, and Chen]{flash-dmd}
Guanjie Chen, Shirui Huang, Yifu Sun, Kai Liu, Jianchen Zhu, Xiaoye Qu, Yu~Cheng, and Peng Chen.
\newblock Flash-dmd: Towards high-fidelity few-step image generation with efficient distillation and joint reinforcement learning.
\newblock In \emph{Proceedings of the IEEE/CVF Conference on Computer Vision and Pattern Recognition}, pages 6010--6020, 2026.

\bibitem[Dong et~al.(2026)Dong, Guo, Bai, Yuan, Luo, and Zou]{dong2026guidingdistributionmatchingdistillation}
Linwei Dong, Ruoyu Guo, Ge~Bai, Zehuan Yuan, Yawei Luo, and Changqing Zou.
\newblock Guiding distribution matching distillation with gradient-based reinforcement learning, 2026.
\newblock URL \url{https://arxiv.org/abs/2604.19009}.

\bibitem[Esser et~al.(2024)Esser, Kulal, Blattmann, Entezari, M{\"u}ller, Saini, Levi, Lorenz, Sauer, Boesel, et~al.]{esser2024scaling}
Patrick Esser, Sumith Kulal, Andreas Blattmann, Rahim Entezari, Jonas M{\"u}ller, Harry Saini, Yam Levi, Dominik Lorenz, Axel Sauer, Frederic Boesel, et~al.
\newblock Scaling rectified flow transformers for high-resolution image synthesis.
\newblock In \emph{Forty-first international conference on machine learning}, 2024.

\bibitem[Fan et~al.(2026)Fan, Sun, Wen, Lu, and Song]{fan2026rtextdmreconceptualizingdistributionmatching}
Linqian Fan, Peiqin Sun, Tiancheng Wen, Shun Lu, and Chengru Song.
\newblock $r_\text{dm}$: Re-conceptualizing distribution matching as a reward for diffusion distillation, 2026.
\newblock URL \url{https://arxiv.org/abs/2603.28460}.

\bibitem[Fan et~al.(2023)Fan, Watkins, Du, Liu, Ryu, Boutilier, Abbeel, Ghavamzadeh, Lee, and Lee]{fan2023dpok}
Ying Fan, Olivia Watkins, Yuqing Du, Hao Liu, Moonkyung Ryu, Craig Boutilier, Pieter Abbeel, Mohammad Ghavamzadeh, Kangwook Lee, and Kimin Lee.
\newblock Dpok: Reinforcement learning for fine-tuning text-to-image diffusion models.
\newblock \emph{Advances in Neural Information Processing Systems}, 36:\penalty0 79858--79885, 2023.

\bibitem[Frans et~al.(2025)Frans, Hafner, Levine, and Abbeel]{shortcut-model}
Kevin Frans, Danijar Hafner, Sergey Levine, and Pieter Abbeel.
\newblock One step diffusion via shortcut models.
\newblock In \emph{The Thirteenth International Conference on Learning Representations}, 2025.
\newblock URL \url{https://openreview.net/forum?id=OlzB6LnXcS}.

\bibitem[Ge et~al.(2026)Ge, Zhang, Huang, He, Wang, Ma, Song, Liu, and Zhang]{scdmd}
Xingtong Ge, Yi~Zhang, Yushi Huang, Dailan He, Xiahong Wang, Bingqi Ma, Guanglu Song, Yu~Liu, and Jun Zhang.
\newblock Salt: Self-consistent distribution matching with cache-aware training for fast video generation.
\newblock In \emph{European Conference on Computer Vision}. Springer, 2026.

\bibitem[Geng et~al.(2026{\natexlab{a}})Geng, Deng, Bai, Kolter, and He]{meanflow}
Zhengyang Geng, Mingyang Deng, Xingjian Bai, Zico Kolter, and Kaiming He.
\newblock Mean flows for one-step generative modeling.
\newblock \emph{Advances in Neural Information Processing Systems}, 38:\penalty0 75460--75482, 2026{\natexlab{a}}.

\bibitem[Geng et~al.(2026{\natexlab{b}})Geng, Lu, Wu, Shechtman, Kolter, and He]{improved_meanflow}
Zhengyang Geng, Yiyang Lu, Zongze Wu, Eli Shechtman, J~Zico Kolter, and Kaiming He.
\newblock Improved mean flows: On the challenges of fastforward generative models.
\newblock In \emph{Proceedings of the IEEE/CVF Conference on Computer Vision and Pattern Recognition}, pages 30467--30476, 2026{\natexlab{b}}.

\bibitem[Gu et~al.(2026)Gu, Fang, Jiang, Mao, Han, Cai, and Shou]{gu2026anyflowanystepvideodiffusion}
Yuchao Gu, Guian Fang, Yuxin Jiang, Weijia Mao, Song Han, Han Cai, and Mike~Zheng Shou.
\newblock Anyflow: Any-step video diffusion model with on-policy flow map distillation, 2026.
\newblock URL \url{https://arxiv.org/abs/2605.13724}.

\bibitem[Guo et~al.(2025)Guo, Yang, Zhang, Song, Wang, Zhu, Xu, Zhang, Ma, Bi, et~al.]{guo2025deepseek}
Daya Guo, Dejian Yang, Haowei Zhang, Junxiao Song, Peiyi Wang, Qihao Zhu, Runxin Xu, Ruoyu Zhang, Shirong Ma, Xiao Bi, et~al.
\newblock Deepseek-r1 incentivizes reasoning in llms through reinforcement learning.
\newblock \emph{Nature}, 645\penalty0 (8081):\penalty0 633--638, 2025.

\bibitem[He et~al.(2026)He, Fu, Zhao, Li, Yang, Yin, Rao, and Zhang]{tempflow-grpo}
Xiaoxuan He, Siming Fu, Yuke Zhao, Wanli Li, Jian Yang, Dacheng Yin, Fengyun Rao, and Bo~Zhang.
\newblock {TEMPFLOW}-{GRPO}: {WHEN} {TIMING} {MATTERS} {FOR} {GRPO} {IN} {FLOW} {MODELS}.
\newblock In \emph{The Fourteenth International Conference on Learning Representations}, 2026.
\newblock URL \url{https://openreview.net/forum?id=7mCo3R3Wyn}.

\bibitem[Hessel et~al.(2021)Hessel, Holtzman, Forbes, Le~Bras, and Choi]{clipscore}
Jack Hessel, Ari Holtzman, Maxwell Forbes, Ronan Le~Bras, and Yejin Choi.
\newblock Clipscore: A reference-free evaluation metric for image captioning.
\newblock In \emph{Proceedings of the 2021 conference on empirical methods in natural language processing}, pages 7514--7528, 2021.

\bibitem[Ho and Salimans(2021)]{ho2021classifierfree}
Jonathan Ho and Tim Salimans.
\newblock Classifier-free diffusion guidance.
\newblock In \emph{NeurIPS 2021 Workshop on Deep Generative Models and Downstream Applications}, 2021.
\newblock URL \url{https://openreview.net/forum?id=qw8AKxfYbI}.

\bibitem[Ho et~al.(2020)Ho, Jain, and Abbeel]{ddpm}
Jonathan Ho, Ajay Jain, and Pieter Abbeel.
\newblock Denoising diffusion probabilistic models.
\newblock \emph{Advances in neural information processing systems}, 33:\penalty0 6840--6851, 2020.

\bibitem[Hu et~al.(2022)Hu, Shen, Wallis, Allen-Zhu, Li, Wang, Wang, Chen, et~al.]{lora}
Edward~J Hu, Yelong Shen, Phillip Wallis, Zeyuan Allen-Zhu, Yuanzhi Li, Shean Wang, Lu~Wang, Weizhu Chen, et~al.
\newblock Lora: Low-rank adaptation of large language models.
\newblock \emph{ICLR}, 1\penalty0 (2):\penalty0 3, 2022.

\bibitem[Huang et~al.(2026)Huang, Zhou, Wang, Zhang, Zhang, and Pang]{huang2026reinforcingfewstepgeneratorsrewardtilted}
Yushi Huang, Xiangxin Zhou, Ruoyu Wang, Chi Zhang, Jun Zhang, and Tianyu Pang.
\newblock Reinforcing few-step generators via reward-tilted distribution matching, 2026.
\newblock URL \url{https://arxiv.org/abs/2605.26108}.

\bibitem[Huang et~al.(2024)Huang, He, Yu, Zhang, Si, Jiang, Zhang, Wu, Jin, Chanpaisit, et~al.]{huang2024vbench}
Ziqi Huang, Yinan He, Jiashuo Yu, Fan Zhang, Chenyang Si, Yuming Jiang, Yuanhan Zhang, Tianxing Wu, Qingyang Jin, Nattapol Chanpaisit, et~al.
\newblock Vbench: Comprehensive benchmark suite for video generative models.
\newblock In \emph{Proceedings of the IEEE/CVF Conference on Computer Vision and Pattern Recognition}, pages 21807--21818, 2024.

\bibitem[Jiang et~al.(2026)Jiang, Liu, Wang, Wu, Li, Li, Jin, Liu, Li, Zhang, et~al.]{dmdr}
Dengyang Jiang, Dongyang Liu, Zanyi Wang, Qilong Wu, Liuzhuozheng Li, Hengzhuang Li, Xin Jin, David Liu, Zhen Li, Bo~Zhang, et~al.
\newblock Distribution matching distillation meets reinforcement learning.
\newblock In \emph{European Conference on Computer Vision}. Springer, 2026.

\bibitem[Kamath et~al.(2025)Kamath, Chang, Krishna, Zettlemoyer, Hu, and Ghazvininejad]{kamath2025geneval2addressingbenchmark}
Amita Kamath, Kai-Wei Chang, Ranjay Krishna, Luke Zettlemoyer, Yushi Hu, and Marjan Ghazvininejad.
\newblock Geneval 2: Addressing benchmark drift in text-to-image evaluation, 2025.
\newblock URL \url{https://arxiv.org/abs/2512.16853}.

\bibitem[Kim et~al.(2024)Kim, Lai, Liao, Murata, Takida, Uesaka, He, Mitsufuji, and Ermon]{kim2024consistency}
Dongjun Kim, Chieh-Hsin Lai, Wei-Hsiang Liao, Naoki Murata, Yuhta Takida, Toshimitsu Uesaka, Yutong He, Yuki Mitsufuji, and Stefano Ermon.
\newblock Consistency trajectory models: Learning probability flow {ODE} trajectory of diffusion.
\newblock In \emph{The Twelfth International Conference on Learning Representations}, 2024.
\newblock URL \url{https://openreview.net/forum?id=ymjI8feDTD}.

\bibitem[Kirstain et~al.(2023)Kirstain, Polyak, Singer, Matiana, Penna, and Levy]{pickscore}
Yuval Kirstain, Adam Polyak, Uriel Singer, Shahbuland Matiana, Joe Penna, and Omer Levy.
\newblock Pick-a-pic: An open dataset of user preferences for text-to-image generation.
\newblock \emph{Advances in neural information processing systems}, 36:\penalty0 36652--36663, 2023.

\bibitem[Labs(2024)]{flux}
Black~Forest Labs.
\newblock Flux.
\newblock \url{https://github.com/black-forest-labs/flux}, 2024.

\bibitem[Lee et~al.(2023)Lee, Liu, Ryu, Watkins, Du, Boutilier, Abbeel, Ghavamzadeh, and Gu]{lee2023aligning}
Kimin Lee, Hao Liu, Moonkyung Ryu, Olivia Watkins, Yuqing Du, Craig Boutilier, Pieter Abbeel, Mohammad Ghavamzadeh, and Shixiang~Shane Gu.
\newblock Aligning text-to-image models using human feedback.
\newblock \emph{arXiv preprint arXiv:2302.12192}, 2023.

\bibitem[Levine(2018)]{levine2018reinforcement}
Sergey Levine.
\newblock Reinforcement learning and control as probabilistic inference: Tutorial and review.
\newblock \emph{arXiv preprint arXiv:1805.00909}, 2018.

\bibitem[Li et~al.(2026{\natexlab{a}})Li, Cui, Huang, Ma, Fan, Yang, and Zhong]{mixgrpo}
Junzhe Li, Yutao Cui, Tao Huang, Yinping Ma, Chun Fan, Miles Yang, and Zhao Zhong.
\newblock Mixgrpo: Unlocking flow-based grpo efficiency with mixed ode-sde.
\newblock In \emph{European Conference on Computer Vision}. Springer, 2026{\natexlab{a}}.

\bibitem[Li et~al.(2026{\natexlab{b}})Li, Zhang, and Zhu]{flowmapgrpo}
Zhiqi Li, Wen Zhang, and Bo~Zhu.
\newblock Flow-map grpo: Reinforcement learning for few-step flow-map generators via anchored stochastic composition, 2026{\natexlab{b}}.
\newblock URL \url{https://arxiv.org/abs/2607.00535}.

\bibitem[Lipman et~al.(2023)Lipman, Chen, Ben-Hamu, Nickel, and Le]{lipman2022flow}
Yaron Lipman, Ricky T.~Q. Chen, Heli Ben-Hamu, Maximilian Nickel, and Matthew Le.
\newblock Flow matching for generative modeling.
\newblock In \emph{The Eleventh International Conference on Learning Representations}, 2023.
\newblock URL \url{https://openreview.net/forum?id=PqvMRDCJT9t}.

\bibitem[Liu et~al.(2026{\natexlab{a}})Liu, Liu, Liang, Li, Liu, Wang, Wan, ZHANG, and Ouyang]{flowgrpo}
Jie Liu, Gongye Liu, Jiajun Liang, Yangguang Li, Jiaheng Liu, Xintao Wang, Pengfei Wan, Di~ZHANG, and Wanli Ouyang.
\newblock Flow-{GRPO}: Training flow matching models via online {RL}.
\newblock In \emph{The Thirty-ninth Annual Conference on Neural Information Processing Systems}, 2026{\natexlab{a}}.
\newblock URL \url{https://openreview.net/forum?id=oCBKGw5HNf}.

\bibitem[Liu et~al.(2026{\natexlab{b}})Liu, Liu, Liang, Yuan, Liu, Zheng, Wu, Wang, Xia, Wang, et~al.]{liu2025improvingvideogenerationhuman}
Jie Liu, Gongye Liu, Jiajun Liang, Ziyang Yuan, Xiaokun Liu, Mingwu Zheng, Xiele Wu, Qiulin Wang, Menghan Xia, Xintao Wang, et~al.
\newblock Improving video generation with human feedback.
\newblock \emph{Advances in Neural Information Processing Systems}, 38:\penalty0 82155--82192, 2026{\natexlab{b}}.

\bibitem[Liu et~al.(2026{\natexlab{c}})Liu, Yan, Chen, Hu, Yue, Pan, Lan, Zhu, Cheng, Zheng, and Wang]{cdm}
Tao Liu, Hao Yan, Mengting Chen, Taihang Hu, Zhengrong Yue, Zihao Pan, Jinsong Lan, Xiaoyong Zhu, Ming-Ming Cheng, Bo~Zheng, and Yaxing Wang.
\newblock Continuous-time distribution matching for few-step diffusion distillation.
\newblock 2026{\natexlab{c}}.
\newblock URL \url{https://arxiv.org/abs/2605.06376}.

\bibitem[Liu et~al.(2023)Liu, Gong, and qiang liu]{liu2022flow}
Xingchao Liu, Chengyue Gong, and qiang liu.
\newblock Flow straight and fast: Learning to generate and transfer data with rectified flow.
\newblock In \emph{The Eleventh International Conference on Learning Representations}, 2023.
\newblock URL \url{https://openreview.net/forum?id=XVjTT1nw5z}.

\bibitem[Loshchilov and Hutter(2019)]{loshchilov2018decoupled}
Ilya Loshchilov and Frank Hutter.
\newblock Decoupled weight decay regularization.
\newblock In \emph{International Conference on Learning Representations}, 2019.
\newblock URL \url{https://openreview.net/forum?id=Bkg6RiCqY7}.

\bibitem[Luo et~al.(2023{\natexlab{a}})Luo, Tan, Huang, Li, and Zhao]{lcm}
Simian Luo, Yiqin Tan, Longbo Huang, Jian Li, and Hang Zhao.
\newblock Latent consistency models: Synthesizing high-resolution images with few-step inference.
\newblock \emph{arXiv preprint arXiv:2310.04378}, 2023{\natexlab{a}}.

\bibitem[Luo et~al.(2023{\natexlab{b}})Luo, Hu, Zhang, Sun, Li, and Zhang]{diffinstruct}
Weijian Luo, Tianyang Hu, Shifeng Zhang, Jiacheng Sun, Zhenguo Li, and Zhihua Zhang.
\newblock Diff-instruct: A universal approach for transferring knowledge from pre-trained diffusion models.
\newblock \emph{Advances in Neural Information Processing Systems}, 36:\penalty0 76525--76546, 2023{\natexlab{b}}.

\bibitem[Luo et~al.(2026)Luo, Hu, Luo, and Tang]{luo2026tdmr1reinforcingfewstepdiffusion}
Yihong Luo, Tianyang Hu, Weijian Luo, and Jing Tang.
\newblock {TDM}-r1: Reinforcing few-step diffusion models with non-differentiable reward.
\newblock In \emph{Forty-third International Conference on Machine Learning}, 2026.
\newblock URL \url{https://openreview.net/forum?id=4w9DpowGcs}.

\bibitem[Ma et~al.(2024)Ma, Goldstein, Albergo, Boffi, Vanden-Eijnden, and Xie]{sit}
Nanye Ma, Mark Goldstein, Michael~S Albergo, Nicholas~M Boffi, Eric Vanden-Eijnden, and Saining Xie.
\newblock Sit: Exploring flow and diffusion-based generative models with scalable interpolant transformers.
\newblock In \emph{European Conference on Computer Vision}, pages 23--40. Springer, 2024.

\bibitem[Ma et~al.(2025)Ma, Wu, Sun, and Li]{hpsv3}
Yuhang Ma, Xiaoshi Wu, Keqiang Sun, and Hongsheng Li.
\newblock Hpsv3: Towards wide-spectrum human preference score.
\newblock In \emph{Proceedings of the IEEE/CVF International Conference on Computer Vision}, pages 15086--15095, 2025.

\bibitem[Miao et~al.(2025)Miao, Yang, Lin, Wang, Liu, Wang, and Qiu]{pso}
Zichen Miao, Zhengyuan Yang, Kevin Lin, Ze~Wang, Zicheng Liu, Lijuan Wang, and Qiang Qiu.
\newblock Tuning timestep-distilled diffusion model using pairwise sample optimization.
\newblock In \emph{The Thirteenth International Conference on Learning Representations}, 2025.
\newblock URL \url{https://openreview.net/forum?id=fXnE4gB64o}.

\bibitem[Nie et~al.(2026)Nie, Berner, Ma, Liu, Xie, and Vahdat]{Nie_2026_CVPR}
Weili Nie, Julius Berner, Nanye Ma, Chao Liu, Saining Xie, and Arash Vahdat.
\newblock Transition matching distillation for fast video generation.
\newblock In \emph{Proceedings of the IEEE/CVF Conference on Computer Vision and Pattern Recognition (CVPR)}, pages 4645--4655, June 2026.

\bibitem[Peebles and Xie(2023)]{dit}
William Peebles and Saining Xie.
\newblock Scalable diffusion models with transformers.
\newblock In \emph{Proceedings of the IEEE/CVF international conference on computer vision}, pages 4195--4205, 2023.

\bibitem[Podell et~al.(2024)Podell, English, Lacey, Blattmann, Dockhorn, M{\"u}ller, Penna, and Rombach]{sdxl}
Dustin Podell, Zion English, Kyle Lacey, Andreas Blattmann, Tim Dockhorn, Jonas M{\"u}ller, Joe Penna, and Robin Rombach.
\newblock {SDXL}: Improving latent diffusion models for high-resolution image synthesis.
\newblock In \emph{The Twelfth International Conference on Learning Representations}, 2024.
\newblock URL \url{https://openreview.net/forum?id=di52zR8xgf}.

\bibitem[Qin et~al.(2025)Qin, Zhuo, Xin, Du, Li, Fu, Lu, Li, Liu, Zhu, Beddow, Millon, Perez, Wang, Qiao, Zhang, Liu, Li, Xu, and Gao]{lumina2}
Qi~Qin, Le~Zhuo, Yi~Xin, Ruoyi Du, Zhen Li, Bin Fu, Yiting Lu, Xinyue Li, Dongyang Liu, Xiangyang Zhu, Will Beddow, Erwann Millon, Victor Perez, Wenhai Wang, Yu~Qiao, Bo~Zhang, Xiaohong Liu, Hongsheng Li, Chang Xu, and Peng Gao.
\newblock Lumina-image 2.0: A unified and efficient image generative framework.
\newblock In \emph{Proceedings of the IEEE/CVF International Conference on Computer Vision (ICCV)}, pages 20031--20042, October 2025.

\bibitem[Rombach et~al.(2022)Rombach, Blattmann, Lorenz, Esser, and Ommer]{ldm}
Robin Rombach, Andreas Blattmann, Dominik Lorenz, Patrick Esser, and Bj{\"o}rn Ommer.
\newblock High-resolution image synthesis with latent diffusion models.
\newblock In \emph{Proceedings of the IEEE/CVF conference on computer vision and pattern recognition}, pages 10684--10695, 2022.

\bibitem[Saharia et~al.(2022)Saharia, Chan, Saxena, Li, Whang, Denton, Ghasemipour, Gontijo-Lopes, Ayan, Salimans, Ho, Fleet, and Norouzi]{saharia2022photorealistictexttoimagediffusionmodels}
Chitwan Saharia, William Chan, Saurabh Saxena, Lala Li, Jay Whang, Emily Denton, Seyed Kamyar~Seyed Ghasemipour, Raphael Gontijo-Lopes, Burcu~Karagol Ayan, Tim Salimans, Jonathan Ho, David~J. Fleet, and Mohammad Norouzi.
\newblock Photorealistic text-to-image diffusion models with deep language understanding.
\newblock In Alice~H. Oh, Alekh Agarwal, Danielle Belgrave, and Kyunghyun Cho, editors, \emph{Advances in Neural Information Processing Systems}, 2022.
\newblock URL \url{https://openreview.net/forum?id=08Yk-n5l2Al}.

\bibitem[Salimans and Ho(2022)]{salimans2022progressive}
Tim Salimans and Jonathan Ho.
\newblock Progressive distillation for fast sampling of diffusion models.
\newblock In \emph{The Tenth International Conference on Learning Representations, {ICLR} 2022, Virtual Event, April 25-29, 2022}. OpenReview.net, 2022.
\newblock URL \url{https://openreview.net/forum?id=TIdIXIpzhoI}.

\bibitem[Sauer et~al.(2024{\natexlab{a}})Sauer, Boesel, Dockhorn, Blattmann, Esser, and Rombach]{ladd}
Axel Sauer, Frederic Boesel, Tim Dockhorn, Andreas Blattmann, Patrick Esser, and Robin Rombach.
\newblock Fast high-resolution image synthesis with latent adversarial diffusion distillation.
\newblock In \emph{SIGGRAPH Asia 2024 Conference Papers}, pages 1--11, 2024{\natexlab{a}}.

\bibitem[Sauer et~al.(2024{\natexlab{b}})Sauer, Lorenz, Blattmann, and Rombach]{add}
Axel Sauer, Dominik Lorenz, Andreas Blattmann, and Robin Rombach.
\newblock Adversarial diffusion distillation.
\newblock In \emph{European Conference on Computer Vision}, pages 87--103. Springer, 2024{\natexlab{b}}.

\bibitem[Schuhmann(2022)]{schuhmann2022aesthetics}
Christoph Schuhmann.
\newblock Laion-aesthetics.
\newblock \url{https://laion.ai/blog/laion-aesthetics/}, 2022.

\bibitem[Seedream et~al.(2025)Seedream, Chen, Gao, Gong, Guo, Guo, Guo, Hou, Huang, Huang, et~al.]{seedream4}
Team Seedream, Yunpeng Chen, Yu~Gao, Lixue Gong, Meng Guo, Qiushan Guo, Zhiyao Guo, Xiaoxia Hou, Weilin Huang, Yixuan Huang, et~al.
\newblock Seedream 4.0: Toward next-generation multimodal image generation.
\newblock \emph{arXiv preprint arXiv:2509.20427}, 2025.

\bibitem[Song et~al.(2021{\natexlab{a}})Song, Meng, and Ermon]{ddim}
Jiaming Song, Chenlin Meng, and Stefano Ermon.
\newblock Denoising diffusion implicit models.
\newblock In \emph{International Conference on Learning Representations}, 2021{\natexlab{a}}.
\newblock URL \url{https://openreview.net/forum?id=St1giarCHLP}.

\bibitem[Song et~al.(2021{\natexlab{b}})Song, Sohl-Dickstein, Kingma, Kumar, Ermon, and Poole]{score-base-diff}
Yang Song, Jascha Sohl-Dickstein, Diederik~P Kingma, Abhishek Kumar, Stefano Ermon, and Ben Poole.
\newblock Score-based generative modeling through stochastic differential equations.
\newblock In \emph{International Conference on Learning Representations}, 2021{\natexlab{b}}.
\newblock URL \url{https://openreview.net/forum?id=PxTIG12RRHS}.

\bibitem[Song et~al.(2023)Song, Dhariwal, Chen, and Sutskever]{song2023consistency}
Yang Song, Prafulla Dhariwal, Mark Chen, and Ilya Sutskever.
\newblock Consistency models.
\newblock In \emph{International Conference on Machine Learning}, pages 32211--32252. PMLR, 2023.

\bibitem[Team et~al.(2025)Team, Cai, Huang, Kang, Li, Liang, Ma, Ren, Wei, Xie, and Zhang]{meituanlongcatteam2025longcatvideotechnicalreport}
Meituan~LongCat Team, Xunliang Cai, Qilong Huang, Zhuoliang Kang, Hongyu Li, Shijun Liang, Liya Ma, Siyu Ren, Xiaoming Wei, Rixu Xie, and Tong Zhang.
\newblock Longcat-video technical report, 2025.
\newblock URL \url{https://arxiv.org/abs/2510.22200}.

\bibitem[Wallace et~al.(2024)Wallace, Dang, Rafailov, Zhou, Lou, Purushwalkam, Ermon, Xiong, Joty, and Naik]{diffusiondpo}
Bram Wallace, Meihua Dang, Rafael Rafailov, Linqi Zhou, Aaron Lou, Senthil Purushwalkam, Stefano Ermon, Caiming Xiong, Shafiq Joty, and Nikhil Naik.
\newblock Diffusion model alignment using direct preference optimization.
\newblock In \emph{Proceedings of the IEEE/CVF Conference on Computer Vision and Pattern Recognition}, pages 8228--8238, 2024.

\bibitem[Wan et~al.(2025)Wan, Wang, Ai, Wen, Mao, Xie, Chen, Yu, Zhao, Yang, Zeng, Wang, Zhang, Zhou, Wang, Chen, Zhu, Zhao, Yan, Huang, Feng, Zhang, Li, Wu, Chu, Feng, Zhang, Sun, Fang, Wang, Gui, Weng, Shen, Lin, Wang, Wang, Zhou, Wang, Shen, Yu, Shi, Huang, Xu, Kou, Lv, Li, Liu, Wang, Zhang, Huang, Li, Wu, Liu, Pan, Zheng, Hong, Shi, Feng, Jiang, Han, Wu, and Liu]{wan2025wanopenadvancedlargescale}
Team Wan, Ang Wang, Baole Ai, Bin Wen, Chaojie Mao, Chen-Wei Xie, Di~Chen, Feiwu Yu, Haiming Zhao, Jianxiao Yang, Jianyuan Zeng, Jiayu Wang, Jingfeng Zhang, Jingren Zhou, Jinkai Wang, Jixuan Chen, Kai Zhu, Kang Zhao, Keyu Yan, Lianghua Huang, Mengyang Feng, Ningyi Zhang, Pandeng Li, Pingyu Wu, Ruihang Chu, Ruili Feng, Shiwei Zhang, Siyang Sun, Tao Fang, Tianxing Wang, Tianyi Gui, Tingyu Weng, Tong Shen, Wei Lin, Wei Wang, Wei Wang, Wenmeng Zhou, Wente Wang, Wenting Shen, Wenyuan Yu, Xianzhong Shi, Xiaoming Huang, Xin Xu, Yan Kou, Yangyu Lv, Yifei Li, Yijing Liu, Yiming Wang, Yingya Zhang, Yitong Huang, Yong Li, You Wu, Yu~Liu, Yulin Pan, Yun Zheng, Yuntao Hong, Yupeng Shi, Yutong Feng, Zeyinzi Jiang, Zhen Han, Zhi-Fan Wu, and Ziyu Liu.
\newblock Wan: Open and advanced large-scale video generative models, 2025.
\newblock URL \url{https://arxiv.org/abs/2503.20314}.

\bibitem[Wang et~al.(2026)Wang, Niu, Zhou, Huang, Liu, and Zhang]{wang2026exploring}
Ruoyu Wang, Boye Niu, Xiangxin Zhou, Yushi Huang, Tongliang Liu, and Chi Zhang.
\newblock Exploring the design space of reward backpropagation for flow matching.
\newblock \emph{arXiv preprint arXiv:2606.11075}, 2026.

\bibitem[Wu et~al.(2023)Wu, Hao, Sun, Chen, Zhu, Zhao, and Li]{hpsv2}
Xiaoshi Wu, Yiming Hao, Keqiang Sun, Yixiong Chen, Feng Zhu, Rui Zhao, and Hongsheng Li.
\newblock Human preference score v2: A solid benchmark for evaluating human preferences of text-to-image synthesis.
\newblock \emph{arXiv preprint arXiv:2306.09341}, 2023.

\bibitem[Xu et~al.(2023)Xu, Liu, Wu, Tong, Li, Ding, Tang, and Dong]{xu2023imagereward}
Jiazheng Xu, Xiao Liu, Yuchen Wu, Yuxuan Tong, Qinkai Li, Ming Ding, Jie Tang, and Yuxiao Dong.
\newblock Imagereward: Learning and evaluating human preferences for text-to-image generation.
\newblock \emph{Advances in Neural Information Processing Systems}, 36:\penalty0 15903--15935, 2023.

\bibitem[Xue et~al.(2026)Xue, GE, Zhang, Li, and Ma]{xue2025advantage}
Shuchen Xue, Chongjian GE, Shilong Zhang, Yichen Li, and Zhi-Ming Ma.
\newblock Advantage weighted matching: Aligning {RL} with pretraining in diffusion models.
\newblock In \emph{Forty-third International Conference on Machine Learning}, 2026.
\newblock URL \url{https://openreview.net/forum?id=nLY2pOYBrJ}.

\bibitem[Xue et~al.(2025)Xue, Wu, Gao, Kong, Zhu, Chen, Liu, Liu, Guo, Huang, et~al.]{xue2025dancegrpo}
Zeyue Xue, Jie Wu, Yu~Gao, Fangyuan Kong, Lingting Zhu, Mengzhao Chen, Zhiheng Liu, Wei Liu, Qiushan Guo, Weilin Huang, et~al.
\newblock Dancegrpo: Unleashing grpo on visual generation.
\newblock \emph{arXiv preprint arXiv:2505.07818}, 2025.

\bibitem[Yin et~al.(2024{\natexlab{a}})Yin, Gharbi, Park, Zhang, Shechtman, Durand, and Freeman]{dmd2}
Tianwei Yin, Micha{\"e}l Gharbi, Taesung Park, Richard Zhang, Eli Shechtman, Fredo Durand, and Bill Freeman.
\newblock Improved distribution matching distillation for fast image synthesis.
\newblock \emph{Advances in neural information processing systems}, 37:\penalty0 47455--47487, 2024{\natexlab{a}}.

\bibitem[Yin et~al.(2024{\natexlab{b}})Yin, Gharbi, Zhang, Shechtman, Durand, Freeman, and Park]{dmd}
Tianwei Yin, Micha{\"e}l Gharbi, Richard Zhang, Eli Shechtman, Fredo Durand, William~T Freeman, and Taesung Park.
\newblock One-step diffusion with distribution matching distillation.
\newblock In \emph{Proceedings of the IEEE/CVF conference on computer vision and pattern recognition}, pages 6613--6623, 2024{\natexlab{b}}.

\bibitem[Zhao et~al.(2023)Zhao, Gu, Varma, Luo, Huang, Xu, Wright, Shojanazeri, Ott, Shleifer, Desmaison, Balioglu, Damania, Nguyen, Chauhan, Hao, Mathews, and Li]{fsdp}
Yanli Zhao, Andrew Gu, Rohan Varma, Liang Luo, Chien-Chin Huang, Min Xu, Less Wright, Hamid Shojanazeri, Myle Ott, Sam Shleifer, Alban Desmaison, Can Balioglu, Pritam Damania, Bernard Nguyen, Geeta Chauhan, Yuchen Hao, Ajit Mathews, and Shen Li.
\newblock Pytorch fsdp: Experiences on scaling fully sharded data parallel.
\newblock \emph{Proc. VLDB Endow.}, 16\penalty0 (12):\penalty0 3848–3860, August 2023.
\newblock ISSN 2150-8097.
\newblock \doi{10.14778/3611540.3611569}.
\newblock URL \url{https://doi.org/10.14778/3611540.3611569}.

\bibitem[Zheng et~al.(2026{\natexlab{a}})Zheng, Chen, Ye, Wang, Zhang, Jiang, Su, Ermon, Zhu, and Liu]{diffusionnft}
Kaiwen Zheng, Huayu Chen, Haotian Ye, Haoxiang Wang, Qinsheng Zhang, Kai Jiang, Hang Su, Stefano Ermon, Jun Zhu, and Ming-Yu Liu.
\newblock Diffusion{NFT}: Online diffusion reinforcement with forward process.
\newblock In \emph{The Fourteenth International Conference on Learning Representations}, 2026{\natexlab{a}}.
\newblock URL \url{https://openreview.net/forum?id=VJZ477R89F}.

\bibitem[Zheng et~al.(2026{\natexlab{b}})Zheng, Wang, Ma, Chen, Zhang, Balaji, Chen, Liu, Zhu, and Zhang]{rcm}
Kaiwen Zheng, Yuji Wang, Qianli Ma, Huayu Chen, Jintao Zhang, Yogesh Balaji, Jianfei Chen, Ming-Yu Liu, Jun Zhu, and Qinsheng Zhang.
\newblock Large scale diffusion distillation via score-regularized continuous-time consistency.
\newblock In \emph{The Fourteenth International Conference on Learning Representations}, 2026{\natexlab{b}}.
\newblock URL \url{https://openreview.net/forum?id=2uNlM353RI}.

\end{thebibliography}

\clearpage
\appendix
\appendixtitle
\crefalias{section}{appendix}
\startcontents[app]
\printcontents[app]{l}{1}{}

\section{Related Work}
\label{app:related_work}

\noindent\textbf{Diffusion and flow models.} Diffusion~\citep{ddpm,score-base-diff,ddim} and flow models~\citep{liu2022flow,lipman2022flow} have become the dominant paradigm for
high-quality image and video generation. Diffusion models synthesize data by learning to reverse a
fixed noising process, and latent diffusion~\citep{ldm} runs this
process in a compressed latent space for efficiency. Transformer backbones improve scalability~\citep{dit,sit}, and scaling rectified-flow transformers
enables high-resolution synthesis~\citep{esser2024scaling}. Flow
matching~\citep{lipman2022flow} and rectified flow~\citep{liu2022flow} recast generation as regressing
a velocity field that transports noise to data. Building on these formulations, large
text-to-image~\citep{sdxl,flux,lumina2,seedream4} and text-to-video~\citep{wan2025wanopenadvancedlargescale,meituanlongcatteam2025longcatvideotechnicalreport}
systems reach strong visual quality.

\noindent\textbf{Few-step generation.} Because iterative sampling is slow, many methods distill pretrained
diffusion or flow models into few-step generators, including progressive
distillation~\citep{salimans2022progressive}, consistency models~\citep{song2023consistency,lcm},
distribution matching distillation~\citep{dmd,dmd2,diffinstruct}, and adversarial
distillation~\citep{add,ladd}. A closely related line learns \emph{flow maps}, long-range transport
operators that move samples directly between two time points instead of integrating an instantaneous
velocity~\citep{shortcut-model,kim2024consistency}. In particular, MeanFlow~\citep{meanflow,improved_meanflow} parameterizes the average velocity over a
time interval, so one evaluation advances a sample across it and enables one or few-step generation.
AnyFlow~\citep{gu2026anyflowanystepvideodiffusion} and transition matching~\citep{Nie_2026_CVPR} distill
such flow maps. Our work builds on this
average-velocity parameterization.

\noindent\textbf{Reinforcement learning for diffusion models.} Reinforcement learning is now a standard tool
for aligning diffusion and flow models with human preferences and task rewards. Existing approaches rely on
policy gradients~\citep{ddpo,fan2023dpok}, reward-weighted training~\citep{lee2023aligning},
preference optimization~\citep{diffusiondpo}, or reward
backpropagation~\citep{xu2023imagereward,wang2026exploring}. A dominant recent recipe discretizes the reverse sampling
process into a Markov decision process and applies GRPO-style policy
gradients~\citep{guo2025deepseek,flowgrpo,xue2025dancegrpo,mixgrpo,tempflow-grpo}, 
which need a stochastic policy and per-step likelihood estimation. A complementary line runs reinforcement learning on the
forward process, recasting reward optimization as a regression objective on analytically noised samples
and thereby avoiding reverse rollouts and likelihoods. It includes advantage weighted
matching~\citep{xue2025advantage}, DiffusionNFT~\citep{diffusionnft}, and reinforce adjoint
matching~\citep{ram}. MeanFlowNFT follows this forward-process line.

\noindent\textbf{Reinforcement learning for few-step generation.} A growing line of work brings reinforcement
learning to few-step generators. Most methods reinforce distribution-matching distillation, either by combining a reward objective with
the matching loss~\citep{dmdr,flash-dmd,fan2026rtextdmreconceptualizingdistributionmatching,dong2026guidingdistributionmatchingdistillation}
or by reward-tilting the target distribution~\citep{huang2026reinforcingfewstepgeneratorsrewardtilted}.
Others post-train distilled few-step generators with surrogate reward learning~\citep{luo2026tdmr1reinforcingfewstepdiffusion} or pairwise sample objectives~\citep{pso}. 
Overall, reinforcement learning for flow-map generators remains
underexplored. We study it through forward-process RL, bringing the DiffusionNFT~\citep{diffusionnft}
paradigm to MeanFlow. The concurrent Flow-Map GRPO~\citep{flowmapgrpo}
introduces path-preserving stochastic flow-map transitions during RL training
and applies GRPO with stochastic rollouts and per-step likelihood ratios.
MeanFlowNFT instead uses analytically noised forward-process samples and a
likelihood-free regression objective.

\section{Why Direct Plug-Ins Lack DiffusionNFT's Guarantee}
\label{sec:problem}

This section explains why the direct DiffusionNFT plug-ins for AnyFlow, DMD,
and CDM in \cref{sec:main_results} do not inherit the policy improvement
guarantee of DiffusionNFT. The issue is the meaning of the network output
rather than the number of reverse sampling steps.

\noindent\textbf{Output required by DiffusionNFT.}
The idealized DiffusionNFT analysis assumes that the optimized output is the
marginal instantaneous velocity
\begin{equation*}
\vv^q(\vx_t,t)
=
\E_{\pi^q}[\vv_t\mid\vx_t,\vc,t],
\qquad
\vv_t=\dot{\alpha}_t\vx_0+\dot{\sigma}_t\epsilonv,
\quad q\in\{\mathrm{old},+,-\}.
\end{equation*}
The map from $(\vx_0,\epsilonv)$ to $\vv_t$ is identical for all three
policies. Reward reweighting changes the posterior distribution of
$(\vx_0,\epsilonv)$ but leaves this map unchanged. At fixed
$(\vx_t,\vc,t)$, the definition
$\pi^{+}\propto r\pi^{\mathrm{old}}$ gives
$\E_{\pi^{\mathrm{old}}}[r\vv_t]=\alpha\vv^{+}$. The definition
$\pi^{-}\propto(1-r)\pi^{\mathrm{old}}$ similarly gives
$\E_{\pi^{\mathrm{old}}}[(1-r)\vv_t]=(1-\alpha)\vv^{-}$. All expectations
here are conditioned on the fixed tuple, and we write $\alpha$ for
$\alpha(\vx_t,\vc)$ in \cref{eq:alpha}. Adding these identities and using
$r+(1-r)=1$ gives
$\vv^{\mathrm{old}}=\alpha\vv^{+}+(1-\alpha)\vv^{-}$. This decomposition
yields the guidance $\Delta$ in \cref{eq:delta}
\citep[Thm.~3.1]{diffusionnft}. DiffusionNFT then minimizes
\cref{eq:diffnft_loss} over an instantaneous velocity predictor. Its pointwise
optimum is
$\vv_{\theta^*}=\vv^{\mathrm{old}}+\tfrac{2}{\beta}\Delta$
\citep[Thm.~3.2]{diffusionnft}. At points where $\alpha>0$, substituting
$\beta=2\alpha$ and
$\Delta=\alpha(\vv^{+}-\vv^{\mathrm{old}})$ gives
$\vv_{\theta^*}=\vv^{+}$. The guarantee therefore uses both the posterior mean
identity above and the fact that the optimized output is an instantaneous
velocity. Reusing the same loss for another output quantity preserves the
algebraic form but not this policy interpretation.

\noindent\textbf{Direct substitution of AnyFlow outputs.}
AnyFlow~\citep{gu2026anyflowanystepvideodiffusion} is a MeanFlow network that
predicts the interval average velocity $\vu$ in \cref{eq:avg_vel}, rather than
the marginal instantaneous velocity $\vv$ in \cref{eq:marg_vel}. The AnyFlow$+$DiffusionNFT baseline replaces $\vv_\theta$ and
$\vv^{\mathrm{old}}$ in \cref{eq:diffnft_loss} with $\vu_\theta$ and
$\vu^{\mathrm{old}}$. It keeps the regression target $\vv_t$. Fix
$(\vx_t,\vc,s,t)$ and regard $\vu_\theta(\vx_t,s,t)$ as a free output value.
The resulting conditional objective is
\begin{equation*}
\begin{aligned}
\ell\bigl(\vu_\theta(\vx_t,s,t)\bigr)
={}&\E_{\pi^{\mathrm{old}}}\!\Big[
r\bigl\|
\vu^{\mathrm{old}}+\beta(\vu_\theta-\vu^{\mathrm{old}})-\vv_t
\bigr\|_2^2\\
&+(1-r)\bigl\|
\vu^{\mathrm{old}}-\beta(\vu_\theta-\vu^{\mathrm{old}})-\vv_t
\bigr\|_2^2
\mid\vx_t,\vc,s,t
\Big],
\end{aligned}
\end{equation*}
where every average velocity in the display is evaluated at $(\vx_t,s,t)$.
The interval sampler makes $s$ independent of $(\vx_0,\epsilonv,r)$ given
$(\vc,t)$. Conditioning on $s$ therefore leaves the required posterior moments
unchanged.

Differentiating the conditional objective with respect to its output gives
\begin{equation*}
\begin{aligned}
\frac{1}{2\beta}\nabla_{\vu_\theta}\ell
={}&\beta\bigl(
\vu_\theta(\vx_t,s,t)-\vu^{\mathrm{old}}(\vx_t,s,t)
\bigr)-\E_{\pi^{\mathrm{old}}}\!\left[
(2r-1)\bigl(\vv_t-\vu^{\mathrm{old}}(\vx_t,s,t)\bigr)
\mid\vx_t,\vc,s,t
\right].
\end{aligned}
\end{equation*}
For $\beta>0$, the objective is strictly convex in this output value. Its
unique minimizer sets the displayed gradient to zero. The conditional residual
can be written as
\begin{equation*}
\begin{aligned}
&\E_{\pi^{\mathrm{old}}}\!\left[
(2r-1)\bigl(\vv_t-\vu^{\mathrm{old}}(\vx_t,s,t)\bigr)
\mid\vx_t,\vc,s,t
\right]\\
&=
2\E_{\pi^{\mathrm{old}}}[r\vv_t\mid\vx_t,\vc,t]
-\E_{\pi^{\mathrm{old}}}[\vv_t\mid\vx_t,\vc,t]
-\bigl(2\alpha-1\bigr)\vu^{\mathrm{old}}(\vx_t,s,t)\\
&=
2\alpha\vv^{+}(\vx_t,t)
-\vv^{\mathrm{old}}(\vx_t,t)
-\bigl(2\alpha-1\bigr)\vu^{\mathrm{old}}(\vx_t,s,t)\\
&=
2\Delta(\vx_t,\vc,t)
+\bigl(2\alpha-1\bigr)
\bigl(
\vv^{\mathrm{old}}(\vx_t,t)-\vu^{\mathrm{old}}(\vx_t,s,t)
\bigr).
\end{aligned}
\end{equation*}
Denoting the unique minimizer by $\vu^\dagger(\vx_t,s,t)$ and substituting the
residual into the zero gradient condition gives
\begin{equation}
\label{eq:direct-u-optimum}
\begin{aligned}
\vu^\dagger(\vx_t,s,t)
&=\vu^{\mathrm{old}}(\vx_t,s,t)
+\frac{2}{\beta}\Delta(\vx_t,\vc,t)+\frac{2\alpha-1}{\beta}
\bigl(
\vv^{\mathrm{old}}(\vx_t,t)-\vu^{\mathrm{old}}(\vx_t,s,t)
\bigr)\\
&\overset{\beta=2\alpha}{=}
\vv^{+}(\vx_t,t)
+\frac{
\vu^{\mathrm{old}}(\vx_t,s,t)-\vv^{\mathrm{old}}(\vx_t,t)
}{2\alpha},
\qquad \alpha>0.
\end{aligned}
\end{equation}
The last equality uses
$\Delta=\alpha(\vv^{+}-\vv^{\mathrm{old}})$ from \cref{eq:delta} and collects
the terms involving $\vu^{\mathrm{old}}$ and $\vv^{\mathrm{old}}$. Let
$\vu^{+}(\vx_t,s,t)$ denote the exact average velocity induced by $\vv^{+}$
over $[s,t]$. This is the desired output, whereas
\cref{eq:direct-u-optimum} gives
\begin{equation*}
\begin{aligned}
\vu^\dagger(\vx_t,s,t)-\vu^{+}(\vx_t,s,t)
={}&\vv^{+}(\vx_t,t)-\vu^{+}(\vx_t,s,t)+\frac{
\vu^{\mathrm{old}}(\vx_t,s,t)-\vv^{\mathrm{old}}(\vx_t,t)
}{2\alpha}.
\end{aligned}
\end{equation*}
Both terms vanish when $s$ approaches $t$. For a finite interval, the
DiffusionNFT identities do not force them to cancel. The direct objective
therefore does not guarantee
$\vu^\dagger(\vx_t,s,t)=\vu^{+}(\vx_t,s,t)$.

\noindent\textbf{Direct substitution of DMD and CDM outputs.}
DMD~\citep{dmd} and CDM~\citep{cdm} are trained with distribution matching.
Their distribution matching terms compare the generated and target marginal
distributions after averaging over latent noise. Matching these distributions
does not identify a unique generator map at each fixed $(\vx_t,\vc,t)$. Both
methods include additional regression or alignment terms, but these terms do
not regress the network output against the forward conditional target $\vv_t$.
Consequently, even when the output is parameterized as a velocity, their
objectives do not establish that the network output equals the posterior mean
$\E[\vv_t\mid\vx_t,\vc,t]$ required by DiffusionNFT.
The posterior decomposition therefore does not apply to the outputs inserted
into \cref{eq:diffnft_loss}. As a result, DiffusionNFT's regression analysis
cannot identify the optimum of the substituted objective with $\vv^{+}$.
Its Theorems~3.1 and~3.2 do not establish that DMD$+$DiffusionNFT or
CDM$+$DiffusionNFT realizes $\pi^{+}$.

\section{Proofs for MeanFlowNFT}
\label{sec:appendix-proofs}

This appendix restates and proves each result of \cref{sec:mf-theory}.

\medskip\noindent\textbf{\Cref{prop:mfnft_opt}} (Idealized pointwise optimum). \textit{Conditioned on $(\vx_t,\vc,s,t)$, the idealized pointwise minimizer is
\begin{equation*}
\begin{aligned}
\vV_{\theta^*}(\vx_t,s,t)
&= \vV^{\mathrm{old}}(\vx_t,s,t) + \frac{2}{\beta}\,\widehat{\Delta}(\vx_t,\vc,s,t),\\
\widehat{\Delta}(\vx_t,\vc,s,t)
&\triangleq \frac{1}{2}\,\E_{\pi^{\mathrm{old}}}\!\left[(2r-1)\bigl(\vv_t-\vV^{\mathrm{old}}(\vx_t,s,t)\bigr)\,\middle|\,\vx_t,\vc,s,t\right].
\end{aligned}
\end{equation*}}

\begin{proof}
Fix $(\vx_t,\vc,s,t)$. Under \cref{eq:V_theta}, the induced predictor
$\vV_\theta(\vx_t,s,t)$ is a deterministic function of the conditioned
variables for fixed $\theta$. Its value is therefore shared by all posterior
draws $(\vx_0,\epsilonv)$ consistent with the same $\vx_t$, and the idealized
pointwise optimum can be obtained by minimizing
$\E_{\pi^{\mathrm{old}}}[\ell\mid\vx_t,\vc,s,t]$ over the value $\vV_\theta(\vx_t,s,t)$. By the definitions of $\vV_\theta^{\pm}$, the
per-sample integrand of \cref{eq:mfnft_loss} is
\begin{equation*}
\begin{aligned}
\ell\bigl(\vV_\theta(\vx_t,s,t)\bigr)
={}&r\bigl\|(1-\beta)\vV^{\mathrm{old}}(\vx_t,s,t)+\beta\vV_\theta(\vx_t,s,t)-\vv_t\bigr\|_2^2\\
&+(1-r)\bigl\|(1+\beta)\vV^{\mathrm{old}}(\vx_t,s,t)-\beta\vV_\theta(\vx_t,s,t)-\vv_t\bigr\|_2^2 .
\end{aligned}
\end{equation*}
Dividing the gradient by $2\beta$ and collecting terms gives
\begin{equation*}
\tfrac{1}{2\beta}\,\nabla_{\vV_\theta}\ell
=\beta\bigl(\vV_\theta(\vx_t,s,t)-\vV^{\mathrm{old}}(\vx_t,s,t)\bigr)
+(2r-1)\bigl(\vV^{\mathrm{old}}(\vx_t,s,t)-\vv_t\bigr).
\end{equation*}
Since this is a strictly convex quadratic in $\vV_\theta(\vx_t,s,t)$ for $\beta>0$, the
stationarity condition
$\E_{\pi^{\mathrm{old}}}[\nabla_{\vV_\theta}\ell\mid\vx_t,\vc,s,t]=0$ yields
\begin{equation*}
\begin{aligned}
&\beta\bigl(\vV_{\theta^*}(\vx_t,s,t)-\vV^{\mathrm{old}}(\vx_t,s,t)\bigr)\\
&\quad=\E_{\pi^{\mathrm{old}}}\!\left[(2r-1)\bigl(\vv_t-\vV^{\mathrm{old}}(\vx_t,s,t)\bigr)\mid\vx_t,\vc,s,t\right]\\
&\quad=2\widehat{\Delta}(\vx_t,\vc,s,t),
\end{aligned}
\end{equation*}
which is exactly \cref{eq:opt_form}.
\end{proof}

\medskip\noindent\textbf{\Cref{cor:V_improvement}} ($\vV_{\theta^*}$ recovers the improved marginal velocity).
\textit{Setting the guidance strength to $\beta=2\alpha(\vx_t,\vc)$, \cref{eq:opt_form} collapses to
\begin{equation*}
\vV_{\theta^*}(\vx_t,s,t)=\vv^{+}(\vx_t,t)
\end{equation*}
for all $s\le t$, the marginal instantaneous velocity of the improved policy $\pi^{+}$.}

\begin{proof}
Since $\vu^{\mathrm{old}}$ is the MeanFlow average velocity of $\pi^{\mathrm{old}}$, its MeanFlow-induced instantaneous velocity is
\begin{equation*}
\widehat{\vv}^{\mathrm{old}}(\vx_t,t)
=\vu^{\mathrm{old}}(\vx_t,t,t)
=\vv^{\mathrm{old}}(\vx_t,t).
\end{equation*}
Applying the MeanFlow identity to this exact reference therefore gives
$\vV^{\mathrm{old}}(\vx_t,s,t)=\vv^{\mathrm{old}}(\vx_t,t)$ for all
$s\le t$. Thus $\vV^{\mathrm{old}}$ is a deterministic function of
$(\vx_t,\vc,t)$ and may be pulled out of the conditional expectation. Then
\begin{align*}
\widehat{\Delta}
&=\frac12\E_{\pi^{\mathrm{old}}}\!\left[(2r-1)\bigl(\vv_t-\vv^{\mathrm{old}}(\vx_t,t)\bigr)
\mid\vx_t,\vc,s,t\right] \\
&=\E_{\pi^{\mathrm{old}}}\!\left[r\,\vv_t\mid\vx_t,\vc\right]
-\tfrac12\E_{\pi^{\mathrm{old}}}\!\left[\vv_t\mid\vx_t,\vc\right]
-\tfrac12\E_{\pi^{\mathrm{old}}}\!\left[2r-1\mid\vx_t,\vc\right]\vv^{\mathrm{old}}(\vx_t,t) .
\end{align*}
Writing $\alpha=\E_{\pi^{\mathrm{old}}}[r\mid\vx_t,\vc]$ and using the posterior-mean identities
$\E_{\pi^{\mathrm{old}}}[\vv_t\mid\vx_t,\vc]=\vv^{\mathrm{old}}(\vx_t,t)$ (\cref{eq:marg_vel}) and $\E_{\pi^{\mathrm{old}}}[r\,\vv_t\mid\vx_t,\vc]=\alpha\vv^{+}(\vx_t,t)$, the last display becomes
\begin{equation*}
\widehat{\Delta}
=\alpha\vv^{+}(\vx_t,t)-\tfrac12\vv^{\mathrm{old}}(\vx_t,t)
-\tfrac12(2\alpha-1)\vv^{\mathrm{old}}(\vx_t,t)
=\alpha\bigl(\vv^{+}(\vx_t,t)-\vv^{\mathrm{old}}(\vx_t,t)\bigr)
=\Delta,
\end{equation*}
the DiffusionNFT reinforcement guidance in \cref{eq:delta}. Substituting
this into \cref{eq:opt_form} and using $\beta=2\alpha$ gives, for all
$s\le t$,
\begin{equation*}
\vV_{\theta^*}(\vx_t,s,t)
=\vv^{\mathrm{old}}(\vx_t,t)+\frac{2}{2\alpha}\alpha(\vv^{+}(\vx_t,t)-\vv^{\mathrm{old}}(\vx_t,t))
=\vv^{+}(\vx_t,t),
\end{equation*}
as claimed.
\end{proof}

\medskip\noindent\textbf{\Cref{lem:mf_consistency}} (MeanFlow consistency).
\textit{Let $\vv$ be an instantaneous velocity field. If $\vu$ satisfies
\begin{equation*}
\vu(\vx_t,s,t)
+(t-s)\left[\partial_t\vu(\vx_t,s,t)+(\partial_{\vx}\vu)(\vx_t,s,t)\vv(\vx_t,t)\right]
=\vv(\vx_t,t)
\end{equation*}
for all $s\le t$, then $\vu$ is the exact average velocity of the ODE $\dot\vx_\tau=\vv(\vx_\tau,\tau)$ over $[s,t]$.}

\begin{proof}
Fix $s<t$ and a trajectory $\{\vx_\tau\}_{\tau\in[s,t]}$ of the velocity
field $\vv$ with endpoint $\vx_t$ at time $t$, i.e.,
$\dot\vx_\tau=\vv(\vx_\tau,\tau)$. Define
$G(\tau)=(\tau-s)\vu(\vx_\tau,s,\tau)$. Differentiating along the trajectory,
\begin{align*}
G'(\tau)
&=\vu(\vx_\tau,s,\tau)
+(\tau-s)\left[\partial_t\vu(\vx_\tau,s,\tau)
+(\partial_{\vx}\vu)(\vx_\tau,s,\tau)\dot\vx_\tau\right] \\
&=\vu(\vx_\tau,s,\tau)
+(\tau-s)\left[\partial_t\vu(\vx_\tau,s,\tau)
+(\partial_{\vx}\vu)(\vx_\tau,s,\tau)\vv(\vx_\tau,\tau)\right] \\
&=\vv(\vx_\tau,\tau),
\end{align*}
where the last equality uses \cref{eq:mf_consistency} at time $\tau$. Since
$G(s)=0$, integrating from $s$ to $t$ gives
\begin{equation*}
(t-s)\vu(\vx_t,s,t)
=\int_s^t \vv(\vx_\tau,\tau)\,\mathrm{d}\tau .
\end{equation*}
This is precisely the average velocity of the ODE induced by $\vv$ over $[s,t]$.
\end{proof}

\medskip\noindent\textbf{\Cref{thm:u_improvement}} (The deployed MeanFlow policy improves).
\textit{In the setting of \cref{cor:V_improvement}, if the induced optimum is attained
for all intervals $s\le t$, the optimal average velocity $\vu_{\theta^*}$ is the
exact average velocity of the ODE induced by $\vv^{+}$. Therefore the MeanFlow policy induced
by $\vu_{\theta^*}$ coincides with $\pi^{+}$, and consequently
\begin{equation*}
J(\pi_{\theta^*}) = J(\pi^{+}) > J(\pi^{\mathrm{old}}).
\end{equation*}}

\begin{proof}
By the premise, the optimum in \cref{cor:V_improvement} is attained for all
intervals $s\le t$, so
\begin{equation*}
\vV_{\theta^*}(\vx_t,s,t)=\vv^{+}(\vx_t,t).
\end{equation*}
Taking $s=t$ in \cref{eq:V_theta}, the correction term vanishes and hence
\begin{equation*}
\widehat{\vv}_{\theta^*}(\vx_t,t)
=\vu_{\theta^*}(\vx_t,t,t)
=\vV_{\theta^*}(\vx_t,t,t)
=\vv^{+}(\vx_t,t).
\end{equation*}
Substituting this identity back into \cref{eq:V_theta} and using
$\vV_{\theta^*}(\vx_t,s,t)=\vv^{+}(\vx_t,t)$ yields
\begin{equation*}
\vu_{\theta^*}(\vx_t,s,t)
+(t-s)\left[
\partial_t\vu_{\theta^*}(\vx_t,s,t)
+(\partial_{\vx}\vu_{\theta^*})(\vx_t,s,t)\vv^{+}(\vx_t,t)
\right]
=\vv^{+}(\vx_t,t),
\end{equation*}
which is \cref{eq:mf_consistency} with $\vv=\vv^{+}$. By
\cref{lem:mf_consistency}, $\vu_{\theta^*}$ is the exact average velocity of the
ODE induced by $\vv^{+}$. Since $\vv^{+}$ is the marginal instantaneous velocity of the
positive policy $\pi^{+}$, this ODE has marginals corresponding to $\pi^{+}$, so the
exact MeanFlow update driven by $\vu_{\theta^*}$ samples from
$\pi^{+}$. Consequently
$J(\pi_{\theta^*})=J(\pi^{+})>J(\pi^{\mathrm{old}})$ for any non-degenerate reward.
\end{proof}

\section{More Implementation Details}
\label{app:impl}

This appendix provides more implementation details for MeanFlowNFT.

\noindent\textbf{Reward.}
For image training, we follow the multi-reward setup of
DiffusionNFT~\citep{diffusionnft}, using equally weighted \clipscore, \pickscore, and \hpstwo as
reward signals on the \pickscore prompt set. For video training, we follow the multi-reward setup of
\longcat~\citep{meituanlongcatteam2025longcatvideotechnicalreport}. The
\nmfmt{HPSv3-general} reward evaluates visual quality by scoring each frame with
the generic prompt \textit{``A high-quality image''} and averaging over all frames. The
\nmfmt{HPSv3-percentile} reward instead uses the video caption as the text prompt
and averages the top $30\%$ frame scores, reducing the effect of occasional low
scores caused by temporal content changes. These two HPSv3 rewards are combined
with the \videoalign motion-quality and text-alignment rewards described in
\cref{sec:impl}.

\noindent\textbf{Training.}
For \sdmodel, we first construct the MeanFlow policy with the two-stage
AnyFlow recipe~\citep{gu2026anyflowanystepvideodiffusion}. The first stage is
flow-map pretraining for $6000$ steps on precomputed latent--prompt pairs. It
uses AnyFlow's three-mode endpoint sampling: $50\%$ of samples take $s=t$ for
standard flow matching, $25\%$ take $s=0$ for endpoint consistency, and the
remaining samples draw $s\sim\mathcal{U}(0,t)$. Following AnyFlow~\citep{gu2026anyflowanystepvideodiffusion}, we use the reverse-CFG fusion with scale $4.5$, so the
resulting policy can be sampled CFG-free. The second stage performs on-policy
AnyFlow distillation for $12000$ steps, initialized from the stage-one LoRA
checkpoint, with sampling steps drawn from $\{2,4,8,16,40\}$.

MeanFlowNFT is then applied on top of the trained AnyFlow policy. We run RL
finetuning for $2000$ steps on \sdmodel and $1600$ steps on \wanmodel, where the
video run starts from the publicly released AnyFlow-Wan checkpoint. Both runs
use CFG-free $4$-step rollouts, equally weighted reward dimensions, $\beta=0.1$,
KL weight $10^{-4}$, AdamW with learning rate $3{\times}10^{-6}$, and fresh
training-time $(s,t)$ pairs sampled from the same three-mode AnyFlow schedule.
For video, following \longcat, we compute a group-normalized relative advantage
for each reward dimension independently, and then average the normalized
advantages for the RL update.

\noindent\textbf{Evaluation.}
Besides the evaluation protocol described in \cref{sec:impl}, we evaluate videos
with the official \vbench suite following AnyFlow~\citep{gu2026anyflowanystepvideodiffusion}.
Specifically, we compute all $16$ VBench dimensions and report their aggregated
\nmfmt{Total}, \nmfmt{Quality}, and \nmfmt{Semantic} scores in
\cref{tab:wan_video}, with the full per-dimension breakdown in
\cref{tab:vbench_1,tab:vbench_2}. All evaluations, including the image benchmarks,
are run on $8$ NVIDIA H20 GPUs.

\section{Additional Test-time Scaling Results}
\label{app:scaling}

\cref{fig:scaling_rest} reports the remaining evaluation metrics for the
test-time scaling study of \cref{fig:scaling_main}, on \sdmodel, and
\cref{fig:scaling_wan} reports the corresponding results on \wanmodel~1.3B.
MeanFlowNFT retains AnyFlow's any-step scaling behavior while consistently
delivering stronger generation quality.

\begin{figure}[!ht]
\centering
\includegraphics[width=\linewidth]{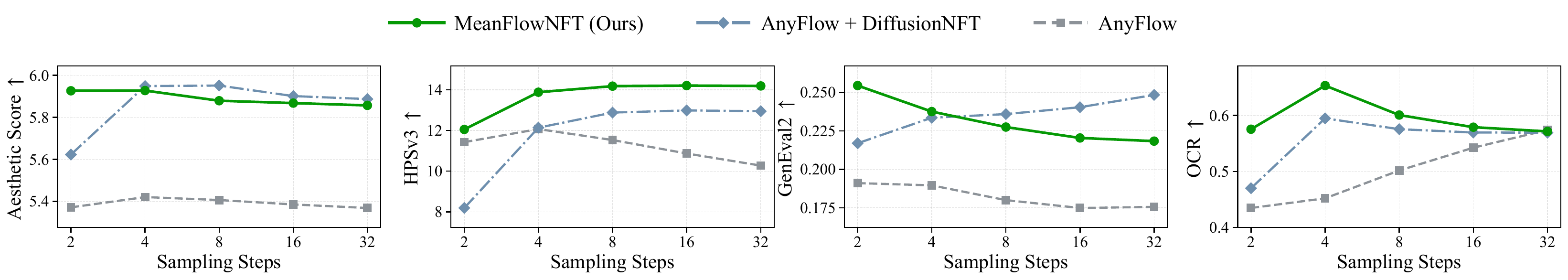}
\caption{Additional quantitative results of MeanFlowNFT test-time scaling on \sdmodel.}
\label{fig:scaling_rest}
\end{figure}

\begin{figure}[!ht]
\centering
\includegraphics[width=\linewidth]{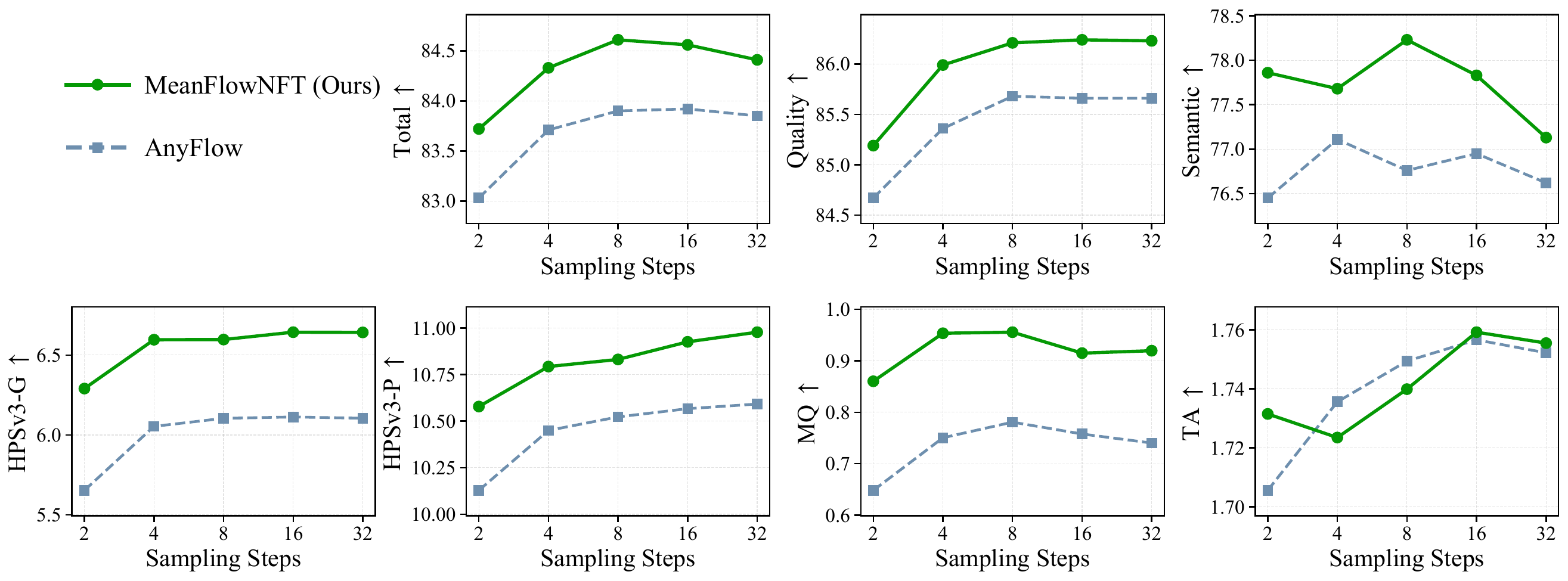}
\caption{Quantitative results of MeanFlowNFT test-time scaling on \wanmodel~1.3B.
\nmfmt{Total}, \nmfmt{Quality}, and \nmfmt{Semantic} are \nmfmt{VBench} scores, while the remaining metrics
(\nmfmt{HPSv3-G}/\nmfmt{HPSv3-P} and \nmfmt{MQ}/\nmfmt{TA}) are evaluated on the
$256$ held-out prompts.}
\label{fig:scaling_wan}
\end{figure}

\section{VBench Full Results}
\label{app:vbench_full}

In this section, we report the per-dimension \nmfmt{VBench} breakdown for the \wanmodel~1.3B
video-generation experiment of \cref{tab:wan_video}, split across
\cref{tab:vbench_1,tab:vbench_2} for readability.

\begin{table}[!ht]
\centering
\setlength{\tabcolsep}{4.5pt}
\renewcommand{\arraystretch}{1.2}
\caption{Full \nmfmt{VBench} per-dimension results on \wanmodel 1.3B (part 1 of 2).
Among few-step models, \best{bold} and \second{underline} denote the best and second-best results.}
\label{tab:vbench_1}
\resizebox{0.95\textwidth}{!}{%
\begin{tabular}{l c c c c c c c c}
\toprule
\makecell[l]{\textbf{Method}}
& \makecell{\nmfmt{Dynamic}\\\nmfmt{Degree}$\uparrow$} & \makecell{\nmfmt{Temporal}\\\nmfmt{Flickering}$\uparrow$} & \makecell{\nmfmt{Human}\\\nmfmt{Action}$\uparrow$}
& \makecell{\nmfmt{Overall}\\\nmfmt{Consistency}$\uparrow$} & \makecell{\nmfmt{Multiple}\\\nmfmt{Objects}$\uparrow$} & \makecell{\nmfmt{Color}$\uparrow$}
& \makecell{\nmfmt{Appearance}\\\nmfmt{Style}$\uparrow$} & \makecell{\nmfmt{Scene}$\uparrow$} \\
\midrule
\rowcolor{colGray!16}\multicolumn{9}{c}{\textit{Multi-step models}\defstep{50}} \\
\wanmodel 1.3B (w/ CFG) & 65.56 & 99.32 & 93.80 & 25.47 & 74.36 & 89.43 & 21.32 & 44.91 \\
\quad + \longcat RL & 52.78 & 99.12 & 92.00 & 25.43 & 75.90 & 87.23 & 21.19 & 39.55 \\
\midrule
\rowcolor{colGray!16}\multicolumn{9}{c}{\textit{Few-step models}\defstep{4}} \\
rCM     & \best{88.89} & 97.28 & 91.20 & 24.72 & 71.42 & 88.12 & 20.59 & 41.82 \\
DMD     & \second{88.61} & 97.37 & \second{94.00} & 24.90 & 76.16 & 86.00 & 20.19 & 39.52 \\
AnyFlow & 58.89 & \second{98.83} & 93.20 & \best{25.17} & \second{82.48} & \second{88.42} & \second{20.81} & \best{43.71} \\
\midrule
\rowcolor{colSky!22}\textbf{MeanFlowNFT (Ours)} & 59.45 & \best{99.44} & \best{94.40} & \second{25.09} & \best{84.79} & \best{89.21} & \best{21.08} & \second{43.41} \\
\bottomrule
\end{tabular}%
}
\end{table}

\begin{table}[!ht]
\centering
\setlength{\tabcolsep}{4.5pt}
\renewcommand{\arraystretch}{1.2}
\caption{Full \nmfmt{VBench} per-dimension results on \wanmodel 1.3B (part 2 of 2).
Among few-step models, \best{bold} and \second{underline} denote the best and second-best results.}
\label{tab:vbench_2}
\resizebox{\textwidth}{!}{%
\begin{tabular}{l c c c c c c c c}
\toprule
\makecell[l]{\textbf{Method}}
& \makecell{\nmfmt{Object}\\\nmfmt{Class}$\uparrow$} & \makecell{\nmfmt{Spatial}\\\nmfmt{Relationship}$\uparrow$} & \makecell{\nmfmt{Aesthetic}\\\nmfmt{Quality}$\uparrow$}
& \makecell{\nmfmt{Motion}\\\nmfmt{Smoothness}$\uparrow$} & \makecell{\nmfmt{Temporal}\\\nmfmt{Style}$\uparrow$} & \makecell{\nmfmt{Imaging}\\\nmfmt{Quality}$\uparrow$}
& \makecell{\nmfmt{Subject}\\\nmfmt{Consistency}$\uparrow$} & \makecell{\nmfmt{Background}\\\nmfmt{Consistency}$\uparrow$} \\
\midrule
\rowcolor{colGray!16}\multicolumn{9}{c}{\textit{Multi-step models}\defstep{50}} \\
\wanmodel 1.3B (w/ CFG) & 90.98 & 71.87 & 65.95 & 98.76 & 23.33 & 67.42 & 94.98 & 96.63 \\
\quad + \longcat RL & 89.68 & 75.56 & 67.07 & 98.87 & 23.08 & 70.08 & 96.38 & 96.27 \\
\midrule
\rowcolor{colGray!16}\multicolumn{9}{c}{\textit{Few-step models}\defstep{4}} \\
rCM     & 88.78 & 71.80 & 65.27 & 97.89 & 22.67 & \second{68.95} & 94.04 & 93.96 \\
DMD     & 87.71 & 76.88 & 65.86 & 97.88 & \second{23.04} & 68.03 & 94.41 & 94.16 \\
AnyFlow & \second{90.28} & \best{81.96} & \best{70.00} & \second{98.61} & 22.77 & \best{69.57} & \second{97.89} & \second{96.55} \\
\midrule
\rowcolor{colSky!22}\textbf{MeanFlowNFT (Ours)} & \best{90.49} & \second{81.36} & \second{69.53} & \best{99.22} & \best{23.06} & 68.82 & \best{98.44} & \best{97.07} \\
\bottomrule
\end{tabular}%
}
\vspace{-0.1in}
\end{table}

\section{More Qualitative Results}
\label{app:qualitative_results}

This section provides additional qualitative comparisons for both image and video
generation. The image examples compare MeanFlowNFT with multi-step/few-step RL methods,
few-step distillation baselines, and directly applying DiffusionNFT to few-step
generators. The video examples further compare against \wanmodel, \longcat~RL,
and few-step video distillation baselines. Across these examples, MeanFlowNFT
produces more faithful and visually coherent results, while preserving strong
quality across different sampling steps.

\providecommand{\stepimg}[2]{{\color{black!35}\setlength{\fboxsep}{0pt}\fbox{\includegraphics[width=\dimexpr#1-2\fboxrule\relax]{#2}}}}

\newcommand{\sdw}{0.15\textwidth}
\newcommand{\sdqual}[2]{%
\begin{figure}[!ht]
\centering
\captionsetup{skip=3pt}\captionsetup[subfigure]{skip=1pt,belowskip=-1pt,aboveskip=0pt}%
{\fontsize{6}{6.6}\selectfont\itshape ``#2''\par}\vspace{2pt}
\setlength{\tabcolsep}{1pt}\renewcommand{\arraystretch}{0.6}%
\begin{tabular}{@{}*{6}{>{\centering\arraybackslash}m{\sdw}}@{}}
{\scriptsize 40 steps} & {\scriptsize 40 steps} & {\scriptsize 40 steps} & {\scriptsize 4 steps} & {\scriptsize 4 steps} & {\scriptsize 4 steps}\\[1.5pt]
\stepimg{\sdw}{figures/qualitative/sd35/#1/sd35_base_cfg45.pdf} & \stepimg{\sdw}{figures/qualitative/sd35/#1/40_fg.pdf} & \stepimg{\sdw}{figures/qualitative/sd35/#1/40_nft.pdf} & \stepimg{\sdw}{figures/qualitative/sd35/#1/dmd.pdf} & \stepimg{\sdw}{figures/qualitative/sd35/#1/cdm.pdf} & \stepimg{\sdw}{figures/qualitative/sd35/#1/R.pdf}\\[-1pt]
\subcaption{\sdmodel} & \subcaption{Flow-GRPO} & \subcaption{DiffusionNFT} & \subcaption{DMD} & \subcaption{CDM} & \subcaption{$R_{\mathrm{dm}}$}\\[2pt]
{\scriptsize 4 steps} & {\scriptsize 4 steps} & {\scriptsize 4 steps} & {\scriptsize 4 steps} & {\scriptsize 16 steps} & {\scriptsize 32 steps}\\[1.5pt]
\stepimg{\sdw}{figures/qualitative/sd35/#1/rtdmd.pdf} & \stepimg{\sdw}{figures/qualitative/sd35/#1/dmd_nft.pdf} & \stepimg{\sdw}{figures/qualitative/sd35/#1/cdm_nft.pdf} & \stepimg{\sdw}{figures/qualitative/sd35/#1/4_af.pdf} & \stepimg{\sdw}{figures/qualitative/sd35/#1/16_af.pdf} & \stepimg{\sdw}{figures/qualitative/sd35/#1/32_af.pdf}\\[-1pt]
\subcaption{RTDMD} & \subcaption{DMD+NFT} & \subcaption{CDM+NFT} & \multicolumn{3}{c}{{\scriptsize (j)~AnyFlow}}\\[2pt]
{\scriptsize 4 steps} & {\scriptsize 16 steps} & {\scriptsize 32 steps} & {\scriptsize 4 steps} & {\scriptsize 16 steps} & {\scriptsize 32 steps}\\[1.5pt]
\stepimg{\sdw}{figures/qualitative/sd35/#1/4_af_nft.pdf} & \stepimg{\sdw}{figures/qualitative/sd35/#1/16_af_nft.pdf} & \stepimg{\sdw}{figures/qualitative/sd35/#1/32_af_nft.pdf} & \stepimg{\sdw}{figures/qualitative/sd35/#1/4_mf_nft.pdf} & \stepimg{\sdw}{figures/qualitative/sd35/#1/16_mf_nft.pdf} & \stepimg{\sdw}{figures/qualitative/sd35/#1/32_mf_nft.pdf}\\[-1pt]
\multicolumn{3}{c}{{\scriptsize (k)~AnyFlow+NFT}} & \multicolumn{3}{c}{{\scriptsize (l)~\textbf{MeanFlowNFT}}}\\
\end{tabular}
\caption{Text-to-image comparison on \sdmodel. Here ``$+$NFT'' abbreviates ``$+$DiffusionNFT''.}
\label{fig:img_#1}
\end{figure}
}

\sdqual{008}{New York Skyline with `Hello World' written with fireworks on the sky.}
\sdqual{014}{A man and woman sit on a park bench.}
\sdqual{010}{A car playing soccer, digital art.}

\newcommand{\vidw}{0.45\textwidth}
\newcommand{\vidqual}[2]{%
\begin{figure}[!ht]
\centering
\captionsetup{skip=3pt}\captionsetup[subfigure]{skip=1pt,belowskip=-1pt,aboveskip=0pt}%
{\fontsize{6}{6.6}\selectfont\itshape ``#2''\par}\vspace{1pt}
\setlength{\tabcolsep}{0pt}\renewcommand{\arraystretch}{0.6}%
\begin{tabular}{@{}>{\centering\arraybackslash}m{0.03\textwidth}@{\hspace{2pt}}>{\centering\arraybackslash}m{\vidw}@{\hspace{4pt}}>{\centering\arraybackslash}m{\vidw}@{}}
\rotatebox[origin=c]{90}{\scriptsize 50 steps} & \stepimg{\vidw}{figures/qualitative/wan/#1/wan.pdf} & \stepimg{\vidw}{figures/qualitative/wan/#1/lc.pdf}\\[-1pt]
 & \subcaption{\wanmodel~1.3B} & \subcaption{\longcat~RL}\\[4pt]
\rotatebox[origin=c]{90}{\scriptsize 4 steps} & \stepimg{\vidw}{figures/qualitative/wan/#1/dmd.pdf} & \stepimg{\vidw}{figures/qualitative/wan/#1/rcm.pdf}\\[-1pt]
 & \subcaption{DMD} & \subcaption{rCM}\\[4pt]
\rotatebox[origin=c]{90}{\scriptsize 4 steps} & \stepimg{\vidw}{figures/qualitative/wan/#1/4_af.pdf} & \stepimg{\vidw}{figures/qualitative/wan/#1/4_mf_nft.pdf}\\[1pt]
\rotatebox[origin=c]{90}{\scriptsize 16 steps} & \stepimg{\vidw}{figures/qualitative/wan/#1/16_af.pdf} & \stepimg{\vidw}{figures/qualitative/wan/#1/16_mf_nft.pdf}\\[1pt]
\rotatebox[origin=c]{90}{\scriptsize 32 steps} & \stepimg{\vidw}{figures/qualitative/wan/#1/32_af.pdf} & \stepimg{\vidw}{figures/qualitative/wan/#1/32_mf_nft.pdf}\\[-1pt]
 & \subcaption{AnyFlow} & \subcaption{\textbf{MeanFlowNFT}}\\
\end{tabular}
\caption{Text-to-video comparison on \wanmodel~1.3B.}
\label{fig:vid_#1}
\end{figure}
}

\vidqual{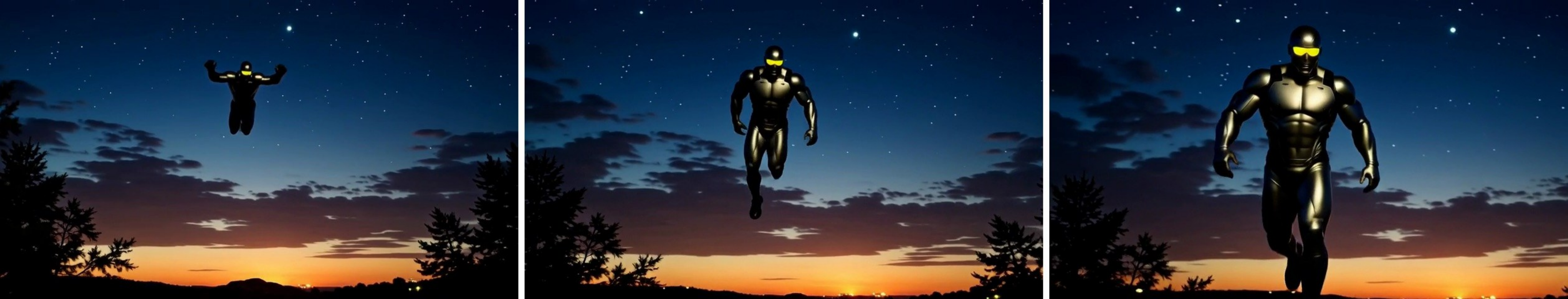}{An armored superhero in a metallic suit plunges headfirst from the night sky, shot cinematically against glowing dusk clouds.}
\vidqual{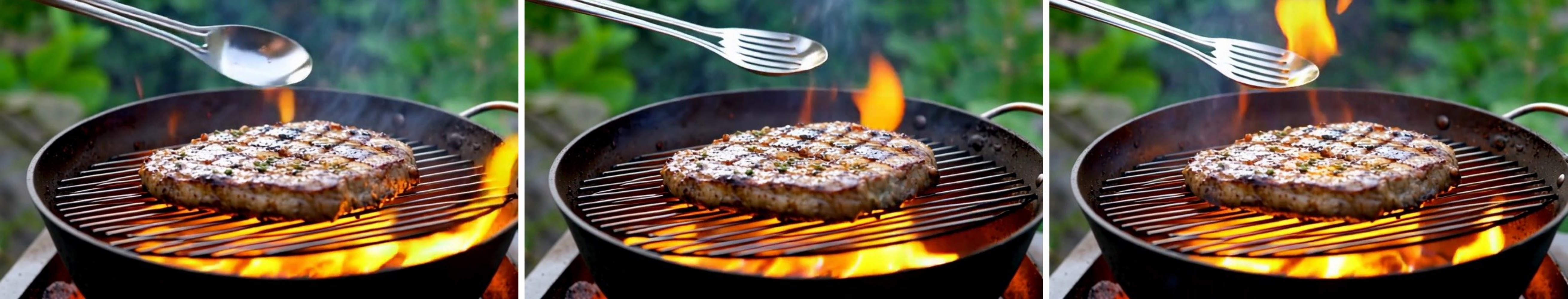}{A thick steak with a rich, seared Maillard crust sizzles on a flaming grill, sparks and fire rising from the grates.}
\vidqual{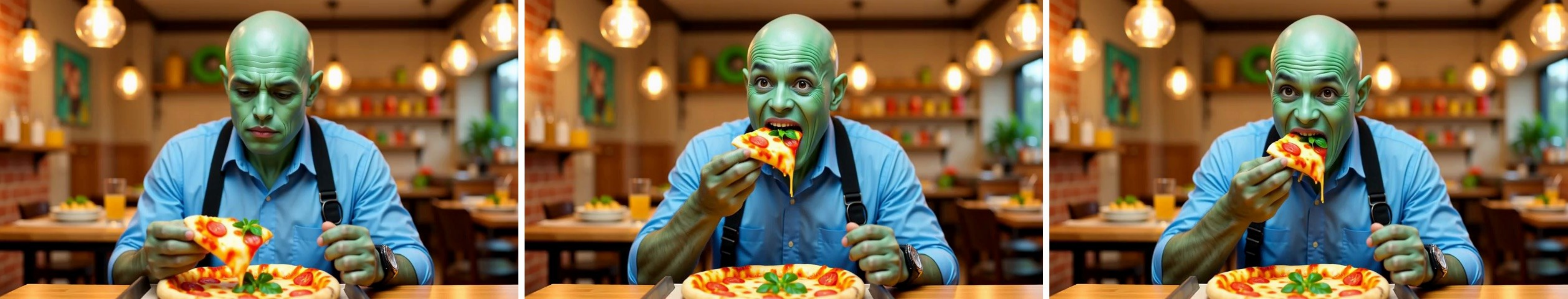}{A little green alien sits inside a cozy pizzeria, happily eating a big slice of pizza.}

\end{document}